\newcommand{\argmin}{\mathop{\arg\min}}
\newcommand{\tabincell}[2]{\begin{tabular}{@{}#1@{}}#2\end{tabular}}
\newcommand{\InexactDane}{I\textsc{nexact}D\textsc{ane} }
\newcommand{\xmark}{\ding{55}}%
\definecolor{gray}{rgb}{0.85,0.85,0.85}
\definecolor{yama}{rgb}{0.98, 0.87, 0.68}
\definecolor{lightskyblue}{rgb}{0.53, 0.80, 0.99}
\definecolor{LightCyan}{rgb}{0.88,1,1}
\begin{document}

\title{On Convergence of Distributed Approximate Newton Methods: Globalization, Sharper Bounds and Beyond}

\author{\name Xiao-Tong Yuan \email xtyuan1980@gmail.com\\
       \addr Cognitive Computing Lab\\
       Baidu Research\\
       Beijing 100085, China\\
       \AND
        \name Ping Li \email pingli98@gmail.com \\
       \addr Cognitive Computing Lab\\
       Baidu Research \\
       Bellevue, WA 98004, USA
       }

\editor{}

\maketitle

\begin{abstract}
The DANE algorithm is an approximate Newton method popularly used for communication-efficient distributed machine learning. Reasons for the interest in DANE include scalability and versatility. Convergence of DANE, however, can be tricky; its appealing convergence rate is only rigorous for quadratic objective, and for more general convex functions the known results are no stronger than those of the classic first-order methods. To remedy these drawbacks, we propose in this paper some new alternatives of DANE which are more suitable for analysis. We first introduce a simple variant of DANE equipped with backtracking line search, for which global asymptotic convergence and sharper local non-asymptotic convergence rate guarantees can be proved for both quadratic and non-quadratic strongly convex functions. Then we propose a heavy-ball method to accelerate the convergence of DANE, showing that nearly tight local rate of convergence can be established for strongly convex functions, and with proper modification of algorithm the same result applies globally to linear prediction models. Numerical evidence is provided to confirm the theoretical and practical advantages of our methods.
\end{abstract}

\vspace{0.1in}

\begin{keywords}
Communication-efficient distributed learning, Approximate Newton method, Global convergence, Heavy-Ball acceleration.
\end{keywords}

\section{Introduction}

Distributed learning is a promising tool for alleviating the pressure of ever increasing data and/or model scale in modern machine learning systems. In this paper, we study the distributed optimization algorithms for solving the following empirical risk minimization (ERM) problem
\begin{equation}\label{eqn:general}
\min_{w\in \mathbb{R}^p} F(w):=\frac{1}{N}\sum_{i=1}^N f (w; x_i, y_i),
\end{equation}
where $\{x_i,y_i\}_{i=1}^N$ are training samples, $f$ is a smooth convex loss function. Such a finite-sum formulation encapsulates a large body of statistical learning problems including least square regression, logistic regression and support vector machines, to name a few. We assume without loss of generality that the training data $\mathcal{D}=\{D_1,...,D_m\}$ with $N=mn$ samples is evenly and randomly distributed over $m$ different machines; each machine $j$ locally stores and accesses $n$ training samples $D_j=\{x_{ji}, y_{ji}\}_{i=1}^{n}$. Let us denote $F_j(w):=\frac{1}{n}\sum_{i=1}^{n}f(w; x_{ji},y_{ji})$ the local empirical risk evaluated on $D_j$. The global objective is then to minimize the average of these local empirical risk functions:
\begin{equation}\label{eqn:problem}
\min_{w\in \mathbb{R}^p} F(w)=\frac{1}{m}\sum_{j=1}^m F_j(w).
\end{equation}
Recently, significant interest has been dedicated to designing distributed algorithms and systems that have flexibility to adapt to the communication-computation tradeoffs, e.g., for parameter estimation~\citep{jaggi2014communication,shamir2014communication} and statistical inference~\citep{jordan2018communication,wang2017efficient}. A common spirit of these communication-efficient methods is trying to quickly optimize the objective value (or estimation accuracy) to certain precision using a minimal number of inter-machine communication rounds.

In this paper we revisit the Distributed Approximate NEwton (DANE) algorithm proposed by~\citet{shamir2014communication} for solving~\eqref{eqn:problem}, which is now one of the most popular second-order methods for communication-efficient distributed machine learning. We analyze its convergence behavior, expose problems and issues, and propose alternative algorithms more suitable for the task. We contribute to derive some new results, insights and algorithms, using a unified and more elementary framework of Lyapunov analysis.

\subsection{Review of the DANE algorithm}\label{ssect:review}

For the distributed ERM problem~\eqref{eqn:problem}, the iteration (communication) complexity of first-order distributed approaches including (accelerated) gradient descent and ADMM (alternating direction method of multipliers)~\citep{boyd2011distributed} tend to suffer from the unsatisfactory polynomial dependence on condition number. To tackle this problem, \citet{shamir2014communication} proposed the DANE method that takes advantage of the stochastic nature of problem: the i.i.d. data samples $\{x_i,y_i\}$ are uniformly distributed and each local subproblem should be close to the global problem when data size becomes sufficiently large. At the $t$-th iteration loop of DANE, in parallel each individual \emph{worker} machine $j$ optimizes a local subproblem $w_j^{(t)} = \argmin_{w }P_j^{(t-1)}(w)$ in which
\begin{equation}\label{equat:local_obj}
\begin{aligned}
P_j^{(t-1)}(w):=&\langle \eta\nabla F(w^{(t-1)}) - \nabla F_j(w^{(t-1)}), w \rangle + \frac{\gamma}{2}\|w-w^{(t-1)}\|^2 + F_j(w).
\end{aligned}
\end{equation}
Then the \emph{master} machine computes and broadcasts the averaged model $w^{(t)}=\frac{1}{m}\sum_{j=1}^m w_j^{(t)}$ and its full gradient $\nabla F(w^{(t)})=\frac{1}{m}\sum_{j=1}^m \nabla F_j(w^{(t)})$ in a map-reduce fashion.

The construction of the local objective~\eqref{equat:local_obj} is inspired by the idea of leveraging the global first-order information and local higher-order information for local processing. If $F(w)$ is quadratic with condition number $\kappa = L/\mu$ (see Table~\ref{tab:notation} for notation), the communication complexity (with tail bound $\delta$) of DANE to reach $\epsilon$-precision was shown to be $\mathcal{\tilde O}\left(\frac{\kappa^2}{n}\log\left(\frac{mp}{\delta}\right)\log \left(\frac{1}{\epsilon}\right)\right)$ which has an improved dependency on the condition number $\kappa$ that could scale as large as $\mathcal{O}(\sqrt{mn})$ in statistical learning problems. \InexactDane~\citep{reddi2016aide} is an inexact implementation of DANE that allows the local sub-problem to be solved inexactly but still possess the above improved communication complexity bounds for quadratic problems. By applying Nesterov's acceleration technique, AIDE~\citep{reddi2016aide} and MP-DANE~\citep{wang2017memory} further reduce the communication complexity to $\mathcal{\tilde O}\left(\frac{\sqrt{\kappa}}{n^{1/4}}\log\left(\frac{mp}{\delta}\right)\log \left(\frac{1}{\epsilon}\right)\right)$ in the quadratic case, which is nearly tight in view of the lower bound established by~\citet{arjevani2015communication}.

On top of the high efficiency in communication, another practically appealing aspect of DANE lies in its versatility. This is because by nature DANE is an algorithm-agnostic meta-optimization framework, in the sense that the local subproblems can be solved by applying virtually any algorithms designed for the global problem. From the perspective of implementation, this enables fast transplant of the available single-machine program code onto distributed software platform. This contrasts DANE from those algorithm-specific methods such as DiSCO~\citep{zhang2015disco} (rooted from the damped Newton method) and DSVRG~\citep{lee2017distributed,shamir2016without} (rooted from SVRG). What's more, DANE does not require to access a second-order oracle for its execution, nor does it restrict to any specific problem structure such as the linear prediction models focused by DSCOVR~\citep{xiao2019dscovr} and GIANT~\citep{wang2018giant}.

\begin{figure}
\begin{center}
\mbox{
\subfigure[Quadratic loss: communication complexity]{
\includegraphics[width=3.1in]{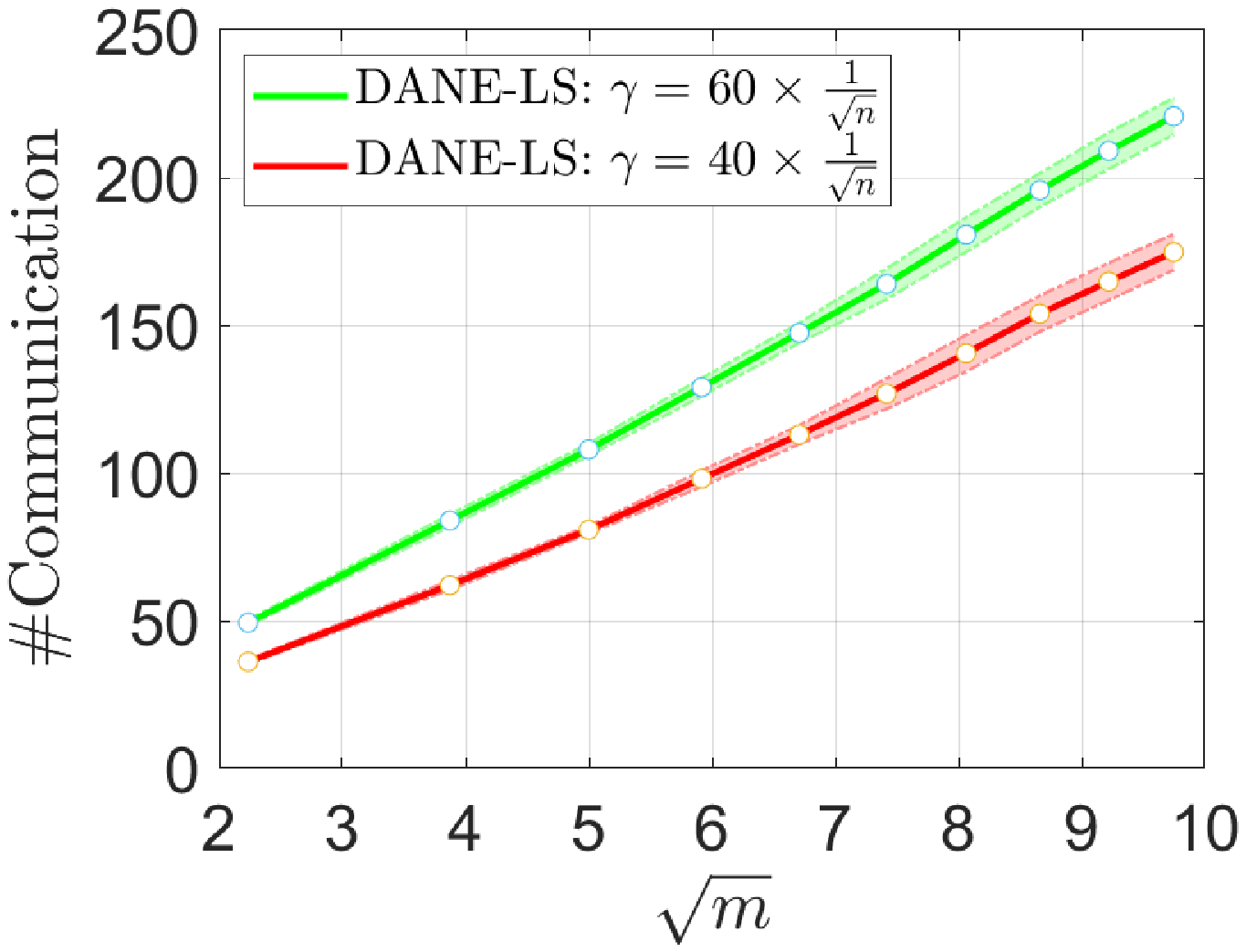}\label{fig:example_tightness_dane}}
\subfigure[Logistic loss: global convergence]{\includegraphics[width=3.1in]{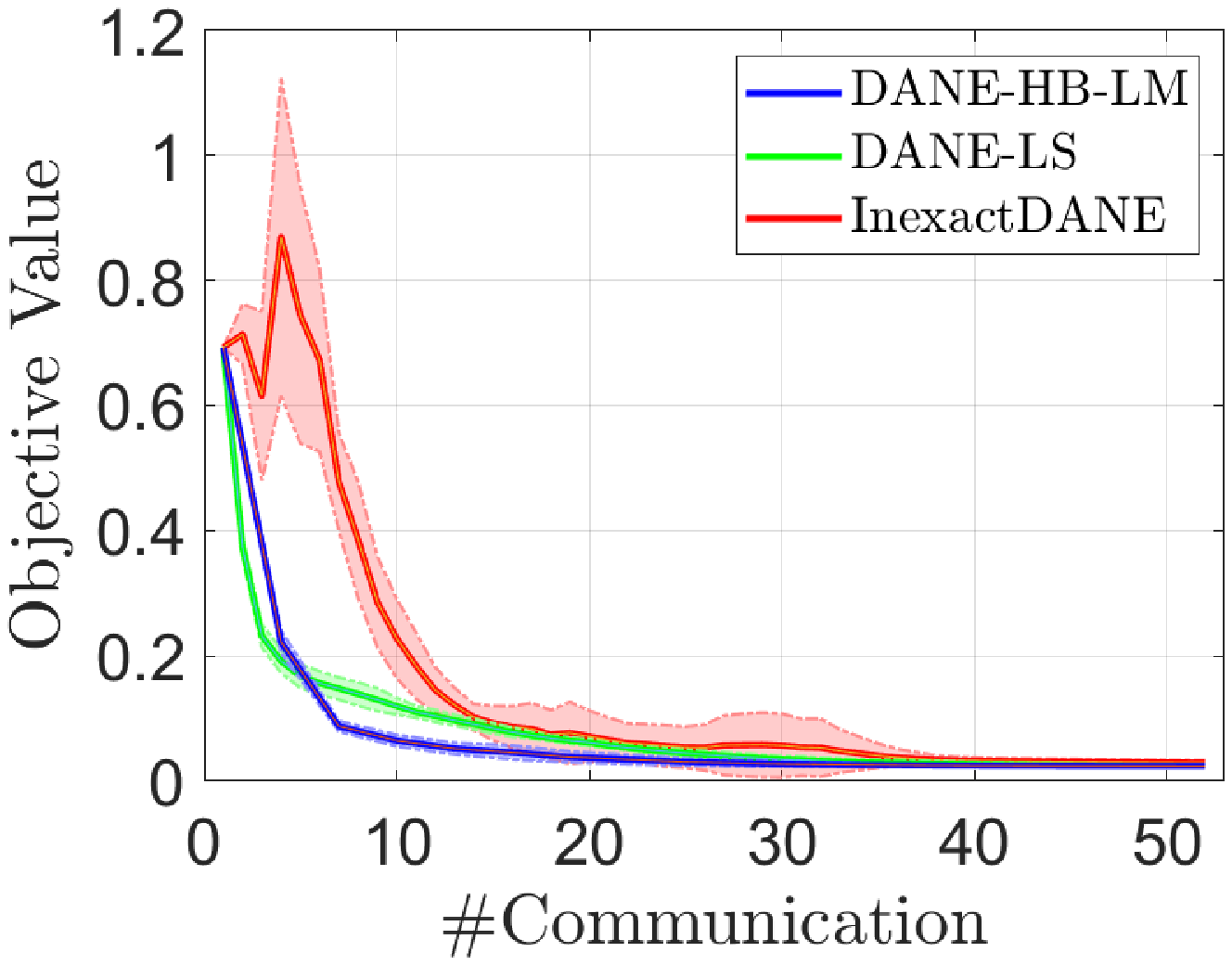}\label{fig:example_global}}}
\end{center}
\caption{(a) The number of communication rounds (y-axis) versus number of machines (x-axis) curves of DANE on a synthetic ridge regression task ($N=2000$, $p=200$). Here we set $\mu=\mathcal{O}(1/\sqrt{mn})$, $\gamma = \mathcal{O}(1/\sqrt{n})$ and precision $\epsilon=10^{-5}$. Roughly speaking, the communication complexity scales linearly with respect to $\sqrt{m}$. (b) Illustration of the global convergence behavior of DANE-LS and \InexactDane on in a synthetic logistic regression task ($N=1000, p=200, m=4$) with $\gamma = \mathcal{O}(1/\sqrt{n})$. Each experiment is randomly replicated $10$ times.}
\end{figure}

\textbf{Open issues and motivation.} Despite the above-mentioned advantages of DANE and its variants, this family of algorithms still exhibits several issues regarding convergence properties that are left open to explore, which are raised below.
\begin{itemize}
  \item \textbf{Question 1.} \textit{Is the convergence bound of plain DANE tight even for quadratic problems?} The communication complexity of plain (exact or inexact) DANE is known to be $\mathcal{\tilde O}\left(\kappa^2/n\log\left(mp/\delta\right)\log \left(1/\epsilon\right)\right)$ for stochastic quadratic problems~\citep{reddi2016aide,shamir2014communication}. Since for outer-loop communication DANE only needs to access a first-order oracle of the global problem, we have strong reason to conjecture that the factor on condition number matching this mechanism should be as sharp as $\kappa/\sqrt{n}$, even without any momentum acceleration. As visualized in Figure~\ref{fig:example_tightness_dane} for a ridge regression example with $\kappa=\mathcal{O}(\sqrt{mn})$, it is roughly the case that the number of communication rounds scales linearly with respect to $\sqrt{m}$. This leaves a potential theoretical gap between $m$ and $\sqrt{m}$ for closing.
  \item \textbf{Question 2.} \textit{Can the strong guarantees of DANE be extended to non-quadratic problems?} The strong communication complexity bounds of DANE-type methods, with or without acceleration, are so far only rigorous for quadratic problems~\citep{shamir2014communication,reddi2016aide,wang2017memory}. For more general convex/non-convex objectives, the related bounds are no stronger than those of the classic first-order methods and thus are less informative. Therefore, a natural question to ask is whether the desirable strong guarantees of DANE can be generalized to a wider problem spectrum beyond ridge regression. In addition, it is not even clear if DANE-type methods converge asymptotically under relatively small $\gamma\ll L$. In Figure~\ref{fig:example_global}, we plot the convergence curves of \InexactDane under $\gamma=\mathcal{O}(L/\sqrt{n})$ on a synthetic logistic regression task, from which we can observe that apparent zigzag effect occurs in the early stage of communication.
\end{itemize}

The primary goal of this work is to answer Question 1 and Question 2 affirmatively so as to gain deeper understanding of the convergence behavior of DANE in theory and practice.

\subsection{Overview of our contribution}

We address the above questions regarding the convergence of DANE and make progress towards fully understanding DANE both for quadratic and non-quadratic convex functions. To achieve this goal, we propose two new alternatives which are more suitable for convergence analysis as well as for algorithm acceleration. We first propose the \emph{DANE-LS} algorithm as a slight modification of DANE equipped with backtracking line search. The motivation of introducing the line search step is to ensure global asymptotic convergence and facilitate local non-asymptotic analysis for non-quadratic convex problems, which is key to answering Question 2. As another notable difference, DANE-LS only requires the master machine (say $F_1$) to solve the local subproblem to obtain the next iterate, while the worker machines (say $F_j, j=2,...,m$) wait. This turns out to lead to improved convergence rate for quadratic objective, which answers Question 1.

We then show that DANE can be readily accelerated via applying the heavy-ball acceleration technique~\citep{polyak1964some,qian1999momentum}. To this end, we modify the iteration of DANE by adding a small momentum term $\beta(w^{(t-1)}-w^{(t-2)})$ for some $\beta>0$ to the current iterate $w^{(t)}$. We call this alternative as \emph{DANE-HB}. For quadratic problems, we prove that such a simple momentum strategy boosts the communication complexity of DANE to match those of AIDE and MP-DANE but with more elementary analysis. As a perhaps more interesting contribution, DANE-HB can also be shown to exhibit the same sharp bound for strongly convex and twice differentiable objectives in a vicinity of the minimizer, which has not been covered by the previous analysis. Particularly, for the special case of learning with linear models, we further develop a variant of DANE-HB, namely \emph{DANE-HB-LM}, for which we can show that the sharp convergence bound holds globally.\\

\textbf{Highlight of results:} Table~\ref{tab:result_comparison} summarizes our main results on communication complexity of DANE-LS and DANE-HB and compares them against prior DANE-type methods. These results are divided into two groups respectively for quadratic (in stochastic setting) and non-quadratic (in deterministic setting) strongly convex problems. In stochastic setting, the big o notation $\mathcal{\tilde O}$ is used to hide the logarithm factors involving quantities other than $\epsilon$, $m$, $p$ and $\delta$, while in deterministic setting $\mathcal{O}$ is used to hide the logarithm factors involving quantities other than $\epsilon$. As highlighted in the colored cells of Table~\ref{tab:result_comparison}, we contribute several new theoretical insights into DANE, which are elaborated in details below.

\begin{table}
% increase table row spacing, adjust to taste
%\renewcommand{\arraystretch}{1.0}
\centering
\begin{tabular}{|c|c|c|c|}
\hline
& Method &  Quadratic Problem & Non-quadratic Problem\\
\hline
\multirow{3}{*}{\tabincell{c}{Without \\ momentum \\ acceleration}} & DANE & $\mathcal{\tilde O}\left(\frac{\kappa^2}{n}\log\left(\frac{mp}{\delta}\right)\log \left(\frac{1}{\epsilon}\right)\right)$ & $\mathcal{O}\left(\kappa\log \left(\frac{1}{\epsilon}\right)\right)$  \\
&\InexactDane  & $\mathcal{\tilde O}\left(\frac{\kappa^2}{n}\log\left(\frac{mp}{\delta}\right)\log \left(\frac{1}{\epsilon}\right)\right)$ & $\mathcal{O}\left(\kappa\log \left(\frac{1}{\epsilon}\right)\right)$ \\
& DANE-LS (\textbf{ours}) & \cellcolor{red!30}$\mathcal{\tilde O}\left(\frac{\kappa}{\sqrt{n}}\log\left(\frac{p}{\delta}\right)\log \left(\frac{1}{\epsilon}\right)\right)$ & \cellcolor{lightskyblue} \tabincell{c}{Globally convergent with \\ local rate $\mathcal{O}\left(\frac{\gamma}{\mu}\log \left(\frac{1}{\epsilon}\right)\right)$} \\
\hline
\hline
\multirow{3}{*}{\tabincell{c}{With \\ momentum \\ acceleration}} & AIDE & $\mathcal{\tilde O}\left(\frac{\sqrt{\kappa}}{n^{1/4}}\log\left(\frac{mp}{\delta}\right)\log \left(\frac{1}{\epsilon}\right)\right)$ & $\mathcal{O}\left(\sqrt{\kappa}\log \left(\frac{1}{\epsilon}\right)\right)$  \\
&MP-DANE & $\mathcal{\tilde O}\left(\frac{\sqrt{\kappa}}{n^{1/4}}\log\left(\frac{mp}{\delta}\right)\log \left(\frac{1}{\epsilon}\right)\right)$  & \xmark  \\
&DANE-HB (\textbf{ours}) & $\mathcal{\tilde O}\left(\frac{\sqrt{\kappa}}{n^{1/4}}\log\left(\frac{p}{\delta}\right)\log \left(\frac{1}{\epsilon}\right)\right)$ & \cellcolor{brown!30} Local rate: $\mathcal {O}\left(\sqrt{\frac{\gamma}{\mu}}\log \left(\frac{1}{\epsilon}\right)\right)$ \\
\hline
\end{tabular}
\caption{Comparison of communication complexity bounds of different DANE-type methods without (top panel) or with (bottom panel) momentum acceleration. The x-mark ``\xmark'' indicates that the related result was not reported in the corresponding reference. For stochastic quadratic problems, the bounds hold in high probability with tail bound $\delta$. Best viewed in color. \label{tab:result_comparison}}
\end{table}

\begin{itemize}
\item The bound highlighted in \emph{light red} shade gives a positive answer to Question 1. That is, in the quadratic case, DANE-LS attains a tighter communication complexity bound $\mathcal{\tilde O}\left(\kappa/\sqrt{n}\log \left(p/\delta\right)\log \left(1/\epsilon\right)\right)$ than the already known $\mathcal{\tilde O}\left(\kappa^2/n\log \left(mp/\delta\right)\log \left(1/\epsilon\right)\right)$ bound for DANE. Such an improvement is achieved with only a minimal modification of algorithm (note that the line search option of DANE-LS is not activated for quadratic problems). This implies that even without any momentum acceleration, DANE actually can converge faster than already recognized in theory.

\item The result highlighted in \emph{light blue} shade answers Question 2 affirmatively. More specifically, blessed by the backtracking line search, DANE-LS with arbitrary values of $\gamma>0$ can be proved to converge globally to the unique minimizer when the objective function is strongly convex and twice differentiable. In Figure~{\ref{fig:example_global}} we illustrate the global convergence of DANE-LS when applied to a synthetic logistic regression task. The benefit of line search to DANE-type methods has also been numerically observed in~\citep{wang2018giant}, but without any theoretical justification being provided. In the late stage of iteration when the iterate is sufficiently close to the minimizer, provided that $\sup_{w} \|\nabla^2 F_1(w) - \nabla^2 F(w)\| \le \gamma$, the complexity of DANE-LS can be upper bounded by $\mathcal{O}\left(\gamma/\mu\log \left(1/\epsilon\right)\right)$ which matches the one for stochastic quadratic problems when $\gamma=\mathcal{O}(L/\sqrt{n})$.

\item From the third column of Table~\ref{tab:result_comparison} we can see that DANE-HB matches AIDE and MP-DANE in communication complexity for quadratic objective. For non-quadratic strongly convex functions, the bounds highlighted in \emph{light brown} shade shows that DANE-HB still possesses the nearly tight $\mathcal{O}\left(\sqrt{\gamma/\mu}\log \left(1/\epsilon\right)\right)$ communication complexity bound in a local area around the minimizer, hence answers Question 2 when algorithm acceleration is considered. Specially, for linear prediction models we can show that the bound actually holds globally for DANE-HB-LM as a modified version of DANE-HB. See Figure~{\ref{fig:example_global}} for an illustration of the global convergence of DANE-HB-LM and Table~\ref{tab:result_comparison_non_dane} for its theoretical properties. In contrast, the bound is $\mathcal{O}\left(\sqrt{\kappa}\log \left(1/\epsilon\right)\right)$ (which is global) for AIDE, while for MP-DANE the bound is not available.

\end{itemize}

\subsection{Other related work}

Driven by the ever-increasing demand on scaling up machine learning models in modern distributed computing environment, a vast body of distributed optimization algorithms has been developed in literature. A substantial number of these works, including the DANE-type algorithms we work on in this paper, focus on communication-efficient distributed learning which is preferable when the network has severely limited bandwidth and high latency~\citep{jaggi2014communication,jordan2018communication,richtarik2016distributed,lee2017distributed,wang2018giant}. For a special class of self-concordant empirical risk functions, \citep{zhang2015disco} proposed DiSCO as a distributed inexact damped Newton method attaining the nearly tight communication complexity bound $\mathcal{\tilde O}\left(\frac{\sqrt{\kappa}}{n^{1/4}}\log \left(\frac{mp}{\delta}\right) \log \left(\frac{1}{\epsilon}\right)\right)$  which was soon after matched by AIDE for quadratic problems. For large-scale convex linear models, CoCoA~\citep{jaggi2014communication} and CoCoA+~\citep{Ma-ICML2015,smith2018cocoa} were developed inside the framework of block coordinate descent/ascent to perform expensive local computations with the aim of reducing the overall communications across the network. In the same setting, DSCOVR~\citep{xiao2019dscovr} was proposed as a family of randomized primal-dual block coordinate algorithms for asynchronous distributed optimization with roughly $\mathcal{O}\left(m\log(\frac{1}{\epsilon})\right)$ communication complexity. With additional memory and preprocessing at each machine, \citet{lee2017distributed} showed that SVRG~\citep{johnson2013accelerating} can be adapted for distributed optimization to attain $\mathcal{O}(1)$ communication complexity, and nearly linear speedup in first-order oracle computation complexity can be achieved in the regime where sample size dominates condition number. Specifically for linear models, a more efficient implementation of distributed SVRG method was proposed and analyzed by~\citet{shamir2016without} under the without replacement sampling strategy. By combining DSVRG with minibatch passive-aggressive updates, the MP-DSVRG method was shown to have provable better tradeoff in communication-memory efficiency for quadratic loss function~\citep{wang2017memory}. The equivalence between a distributed implementation of SVRG and \InexactDane has been revealed in the framework of Federated SVRG~\citep{konevcny2016federated} for distributed machine learning with extremely large number of nodes. Recently, the GIANT method~\citep{wang2018giant} improves over DANE for linear prediction models under the assumption that sample size should be sufficiently larger than feature dimensionality. For sparse statistical estimation, EDSL~\citep{wang2017efficient} and DINPS~\citep{liu2019distributed} respectively extend DANE to solving $\ell_1$-regularized and $\ell_0$-constrained ERM problems, obtaining analogous improvement in communication bounds. Last but not least, the well designed distributed learning platforms such as MapReduce~\citep{DeanMapReduce2008}, Apache Spark~\citep{ZahariaAparchSpark2016}, Petuum~\citep{xing2015petuum} and Parameter Server~\citep{li2014communication} have significantly facilitated the system implementation of these algorithms.

\subsection{Organization and notation}

\noindent \textbf{Paper organization.} The rest of this paper is organized as follows: In \S\ref{sect:dane_ls_analysis} we introduce DANE-LS as a new alternative of DANE with backtracking line search and analyze its convergence rate for quadratic and non-quadratic convex functions. In~\S\ref{sect:dane_hb_analysis}, we propose DANE-HB to accelerate DANE using heavy ball approach, along with a variant specifically designed for linear prediction models. The numerical evaluation results are presented in \S\ref{sect:experiment}. Finally, we conclude this paper in \S\ref{sect:conclusion}. All the technical proofs of results are deferred to the appendix section.

\textbf{Notation.} The key quantities and notations that commonly used in our analysis are summarized in Table~\ref{tab:notation}. In deterministic setting, we use the big o notation $\mathcal{O}$ that hides the logarithm factors involving quantities other than $\epsilon$, while in stochastic setting, $\mathcal{\tilde O}$ is used with the logarithm factors involving quantities other than $\epsilon$, $m$, $p$ and $\delta$ hidden inside.

\begin{table}[h!]
\centering
\begin{tabular}{|c|c|}
\hline
Notation & Definition  \\
\hline
$m$ & number of worker machines \\
$n$ & number of training samples distributed on each individual worker machine \\
$N=mn$ & total number of training samples \\
$p$ & number of features \\
$F(w)$ & the global risk function \\
$F_1(w)$ & the local risk function on the master machine \\
$L$ & Lipschitz smoothness parameter of the gradient vector $\nabla F(w)$ \\
$\nu$ & Lipschitz smoothness parameter of the Hessian matrix $\nabla^2 F(w)$ \\
$\mu$ & The strong convexity parameter of $F(w)$ \\
$\kappa=L/\mu$ & the condition number of $F(w)$ \\
$\beta$ & momentum strength coefficient for heavy-ball acceleration \\
$\epsilon$ & sub-optimality of the global problem \\
$\varepsilon$ & sub-optimality of the local subproblem \\
$\gamma$ & the regularization strength parameter of the local subproblem \\
$\delta$ & the failure probability bound in stochastic setting \\
$[N]$ & the abbreviation of the index set $\{1,...,N\}$ \\
\hline
$\|x\|=\sqrt{x^\top x}$ & the Euclidean norm of a vector $x$ \\
$\lambda_{\max}(A)$ & the largest eigenvalue of a matrix $A$ \\
$\lambda_{\min}(A)$ & the smallest eigenvalue of a matrix $A$ \\
$A \succeq B$ & $A-B$ is symmetric, positive semi-definite \\
$A \succ B$ & $A-B$ is symmetric, positive definite \\
$\|A\|$ & the spectral norm of matrix $A$ \\
$\rho(A)$ & the spectral radius of $A$, i.e., its largest (in magnitude) eigenvalue\\
\hline
\end{tabular}
\caption{Table of notation. \label{tab:notation}}
\end{table}

\section{Globalization of DANE with Sharper Analysis}
\label{sect:dane_ls_analysis}

In this section, we provide a global and sharper analysis of the plain version of DANE method without applying any momentum acceleration. The analysis is actually conducted on a  modified version of DANE augmented with backtracking line search, while only a master machine is allocated to do local computation in an inexact manner. Such simple modifications turn out to be beneficial for the global asymptotic and local non-asymptotic analysis of DANE.

\begin{algorithm}[h]
\caption{DANE with backtracking Line Search: DANE-LS($\gamma, \rho, \nu$)}
\label{alg:dane_ls}
\SetKwInOut{Input}{Input}\SetKwInOut{Output}{Output}\SetKw{Initialization}{Initialization}
\Input{Parameters $\gamma, \nu>0$, $\rho\in(0,1/3)$.}
\Output{$w^{(t)}$.}

\Initialization{Set $w^{(0)}=0$ or $w^{(0)} \approx \argmin_{w} F_1(w)$.}

\For{$t=1, 2, ...$}{

\tcc{\textbf{Global computation on master machine associated with $F_1(w)$}}

Compute $\nabla F(w^{(t-1)})=\frac{1}{m}\sum_{j=1}^m \nabla F_j(w^{(t-1)})$;

Estimate $\tilde w^{(t)}$ such that $\|\nabla P^{(t-1)}(\tilde w^{(t)} )\|\le \varepsilon_t$, where
\begin{equation}\label{equat:P_t_w}
P^{(t-1)}(w):=\langle \nabla F(w^{(t-1)}) - \nabla F_1(w^{(t-1)}), w \rangle + \frac{\gamma}{2}\|w-w^{(t-1)}\|^2 + F_1(w);
\end{equation}

\If{The objective function $F$ is not quadratic}
{
\tcc{\textbf{Backtracking line search for non-quadratic objectives}}
\colorbox{lightskyblue}{\begin{minipage}{0.89\linewidth}
Update $w^{(t)} =(1-\eta_t)w^{(t-1)} + \eta_t \tilde w^{(t)}$ with proper $\eta_t\in(0, 1]$ which satisfies either of the following \emph{sufficient descent} condition for the provided $\rho$:

(\textbf{Option-I}) \tcc{Line-search with global value evaluation.}
\begin{equation}\label{eqn:global_line_search}
F(w^{(t)})\le F(w^{(t-1)}) - \psi(\tilde w^{(t)}, w^{(t-1)}),
\end{equation}
where
\[
\begin{aligned}
\psi(\tilde w^{(t)}, w^{(t-1)}):=& \eta_t \rho \langle \nabla F_1(\tilde w^{(t)}) -  \nabla F_1(w^{(t-1)}) + \gamma (\tilde w^{(t)} - w^{(t-1)}), \tilde w^{(t)} - w^{(t-1)}\rangle \\
&- \eta_t \varepsilon_t\|\tilde w^{(t)} - w^{(t-1)}\|;
\end{aligned}
\]

(\textbf{Option-II}) \tcc{Line-search without global value evaluation.}
\begin{equation}\label{eqn:local_line_search}
\begin{aligned}
& \langle\nabla F(w^{(t-1)}), w^{(t)} - w^{(t-1)} \rangle + (w^{(t)} - w^{(t-1)})^\top (\nabla^2 F_1(w^{(t-1)})+\gamma I) (w^{(t)} - w^{(t-1)}) \\
&+ \frac{\nu}{6} \|w^{(t)} - w^{(t-1)}\|^3\le - \psi(\tilde w^{(t)}, w^{(t-1)}).
\end{aligned}
\end{equation}

\end{minipage}
}
}
\Else{
$w^{(t)} = \tilde w^{(t)}$;
}
\tcc{\textbf{Local gradient evaluation on worker machines}}
\noindent
For each machine $j$, compute $\nabla F_j(w^{(t)})$ and broadcast to the master machine;
}
\end{algorithm}

\subsection{Leveraging backtracking line search}

Since DANE is essentially an approximated second-order method, it is a natural idea to leverage an additional line search operation to hopefully guarantee global convergence while preserving the appealing local non-asymptotic convergence rate. In practice, the numerical evidence in~\citep{wang2018giant} has already demonstrated, although without any theoretical support, that backtracking line search does help to improve the convergence performance of DANE-type methods. Inspired by these points, we propose the DANE-LS (DANE with Line Search) method which is outlined in Algorithm~\ref{alg:dane_ls}.  The notable differences between DANE-LS and DANE/\InexactDane at each iterate round are summarized in below:
\begin{itemize}
  \item For non-quadratic problems, two optional backtracking line search steps (as highlighted in light blue shade) are conducted on the master machine. The Option-I needs to evaluate the global objective value and hence requires additional communication cost. By only accessing the locally available information, the Option-II is free of evaluating the global objective value but at the price of introducing an additional hyper-parameter $\nu$ representing the smoothness of Hessian.
  \item As another notable difference, only a master machine is in charge of solving a local subproblem associated with $F_1(w)$ to obtain the next iterate, during which time the other worker machines stay idle. Such a master-slave architecture has been widely adopted and investigated in many distributed machine learning and statistical inference approaches~\citep{jordan2018communication,lee2017distributed,shamir2016without,wang2017efficient}. Allowing only master to do the heavy lifting is certainly more energy efficient and less sensitive to network latency.
\end{itemize}

%For non-quadratic problems, a backtracking line search step (as highlighted in light blue shade) is conducted on the master machine. This step needs to evaluate the global objective and hence requires additional amount of communication. As another notable difference, only a master machine is in charge of solving a local subproblem associated with $F_1(w)$, and the other worker machines stay idle in the meanwhile.

As the consequence of these modifications, DANE-LS can be shown to improve over DANE not only for non-quadratic convex objectives (see Section~\ref{ssect:dnae_ls_global}) but also for the well studied quadratic case (see Section~\ref{ssect:dane_ls_quadratic}). Moreover, the master-slave computing architecture eases the generalization of analysis to the heavy-ball acceleration presented in the next section. It is noteworthy that the local subproblem is allowed to be solved inexactly with sub-optimality $\|\nabla P^{(t-1)}(\tilde w^{(t)} )\|\le \varepsilon_t$. Such a local sub-optimality condition is computationally more tractable for verification than those of \InexactDane and AIDE with unknown local minimizers involved, and hence is more practical from the perspective of algorithm implementation.

\subsection{Sharper bounds for quadratic function}
\label{ssect:dane_ls_quadratic}

We start by analyzing DANE-LS in a simple yet informative regime where the loss functions are quadratic. In this setting, the line search options will not be activated throughout algorithm execution.

\textbf{Preliminary.} Our analysis relies on the conditions of strong convexity and Lipschitz smoothness which are conventionally used in the previous analysis of distributed optimization methods.
\begin{definition}[Strong Convexity/Smoothness]\label{def:strong_smooth}
A differentiable function $g$ is $\mu$-strongly-convex and $L$-smooth if $\forall w, w'$,
\[
\frac{\mu}{2}\|w - w'\|^2 \le g(w) - g(w') - \langle \nabla g(w'), w - w'\rangle \le \frac{L}{2}\|w - w'\|^2.
\]
\end{definition}
The ratio value $\kappa = L/\mu$ is the condition number. We further introduce the concept of Lipschitz continuous Hessian which characterizes the continuity of the Hessian matrix.
\begin{definition}[Lipschitz Hessian]\label{def:RLH}
We say a twice continuously differentiable function $g$ has Lipschitz continuous Hessian with constant $\nu\ge 0$ ($\nu$-LH) if $\forall w, w'$,
\[
\left\|\nabla^2 g(w) - \nabla^2 g(w')\right\| \le \nu \|w - w'\|.
\]
\end{definition}
Let $w^* = \argmin_w F(w)$. The following is our main result on the convergence rate of DANE-LS for quadratic functions in terms of parameter estimation error.
\begin{restatable}[Convergence rate of DANE-LS for quadratic loss]{theorem}{DANELSQuadratic}\label{thrm:quadratic_dane}
Assume that the loss function is quadratic. Let $H$ and $H_1$ be the Hessian matrices of the global objective $F$ and local objective $F_1$, respectively. Assume that $\mu I \preceq H \preceq L I$. Given precision $\epsilon>0$, if $\|H_1 - H\| \le \gamma$ and $\varepsilon_t \le \frac{\mu^2\|\nabla F(w^{(t-1)})\|}{2(\mu +2\gamma)L}$, then Algorithm~\ref{alg:dane_ls} will output $w^{(t)}$ satisfying $\|w^{(t)} - w^*\| \le \epsilon$ after
\[
t \ge \frac{2(\mu+2\gamma)}{\mu} \log \left(\frac{\sqrt{\kappa}\|w^{(0)} - w^*\|}{\epsilon}\right)
\]
rounds of iteration.
\end{restatable}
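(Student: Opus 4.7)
The plan is to analyze the quadratic case by tracking the error $\Delta^{(t)} := w^{(t)} - w^*$ in the $A$-weighted norm $\|x\|_A^2 := x^\top A x$ with $A := H_1 + \gamma I$. Since the line search block is skipped when $F$ is quadratic we have $w^{(t)} = \tilde w^{(t)}$, and because $F_1$ is quadratic the gradient of the surrogate \eqref{equat:P_t_w} reduces to $\nabla P^{(t-1)}(w) = \nabla F(w^{(t-1)}) + (H_1 + \gamma I)(w - w^{(t-1)})$. The sub-optimality condition $\|\nabla P^{(t-1)}(\tilde w^{(t)})\| \le \varepsilon_t$ and $\nabla F(w^{(t-1)}) = H\Delta^{(t-1)}$ together give the exact recursion
\[
\Delta^{(t)} = (I - A^{-1}H)\,\Delta^{(t-1)} + A^{-1} r^{(t)}, \qquad \|r^{(t)}\| \le \varepsilon_t.
\]

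The crux of the argument will be the spectral estimate
\[
\lambda_{\min}\bigl(A^{-1/2} H A^{-1/2}\bigr) \ge \frac{\mu}{\mu + 2\gamma},
\]
equivalently $(\mu+2\gamma)H \succeq \mu A$. I would prove this by writing
\[
(\mu+2\gamma)H - \mu H_1 - \mu\gamma I = \mu(H - H_1) + 2\gamma H - \mu\gamma I
\]
and using $H - H_1 \succeq -\gamma I$ (from $\|H_1-H\|\le\gamma$) together with $H \succeq \mu I$ to cancel the two negative terms. A short similarity argument in the $A$-weighted norm then yields the noise-free contraction
\[
\bigl\|(I - A^{-1}H)\Delta\bigr\|_A \le \frac{2\gamma}{\mu+2\gamma}\,\|\Delta\|_A.
\]

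To absorb the inexactness, since $A \succeq \mu I$ we have $\|A^{-1}r^{(t)}\|_A \le \|r^{(t)}\|/\sqrt{\mu} \le \varepsilon_t/\sqrt{\mu}$; combining this with $\|\nabla F(w^{(t-1)})\| \le L\|\Delta^{(t-1)}\| \le (L/\sqrt{\mu})\|\Delta^{(t-1)}\|_A$ and the prescribed budget $\varepsilon_t \le \frac{\mu^2 \|\nabla F(w^{(t-1)})\|}{2(\mu+2\gamma)L}$ bounds the residual contribution by $\frac{\mu}{2(\mu+2\gamma)}\|\Delta^{(t-1)}\|_A$. Adding this to the noise-free contraction yields the clean one-step estimate
\[
\|\Delta^{(t)}\|_A \le \Bigl(1 - \frac{\mu}{2(\mu+2\gamma)}\Bigr)\|\Delta^{(t-1)}\|_A.
\]
Iterating, using $1 - x \le e^{-x}$, and converting back via $\|\Delta^{(t)}\| \le \|\Delta^{(t)}\|_A/\sqrt{\mu}$ and $\|\Delta^{(0)}\|_A \le \sqrt{L+2\gamma}\,\|\Delta^{(0)}\|$ would then produce the announced complexity, with the $\sqrt{\kappa}$ inside the logarithm being precisely the norm-conversion factor $\sqrt{\lambda_{\max}(A)/\lambda_{\min}(A)}$.

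The hard part is the spectral estimate. A naive operator-norm bound gives $\|I - A^{-1}H\| \le 2\gamma/\mu$, which is no longer a contraction once $\gamma \gtrsim \mu/2$; working in an $A$-weighted norm with only the crude eigenvalue bound $\lambda_{\min}(A^{-1/2}HA^{-1/2}) \ge \mu/\lambda_{\max}(A)$ still leaves a denominator of order $L+2\gamma$, producing $\kappa$ rather than the advertised $\kappa/\sqrt n$ scaling. Recasting the quantity of interest as the generalized Rayleigh quotient $\min_u (u^\top H u)/(u^\top A u)$ and exploiting both $H \succeq \mu I$ and $\|H_1-H\| \le \gamma$ at the same step is what replaces the spurious $L$ in the denominator by $\mu$. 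Once this inequality is in place, the rest is routine arithmetic dictated by the specific form of the prescribed $\varepsilon_t$.
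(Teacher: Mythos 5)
Your proposal is correct and follows essentially the same route as the paper: the same error recursion $w^{(t)}-w^* = (I-(H_1+\gamma I)^{-1}H)(w^{(t-1)}-w^*) + (H_1+\gamma I)^{-1}\nabla P^{(t-1)}(w^{(t)})$, the same key spectral estimate $\frac{\mu}{\mu+2\gamma} \le \lambda\bigl((H_1+\gamma I)^{-1}H\bigr) \le 1$ (packaged in the paper as Lemma~\ref{lemma:precondition_bound}, proved by the same Rayleigh-quotient cancellation you describe), and the same absorption of the inexactness budget into half of the contraction margin. The only difference is cosmetic: you contract in the $(H_1+\gamma I)$-weighted norm while the paper symmetrizes with $H^{1/2}$ and contracts $\|H^{1/2}(w^{(t)}-w^*)\|$, which is immaterial except that your norm-conversion factor becomes $\sqrt{(L+2\gamma)/\mu}$ rather than exactly $\sqrt{\kappa}$ --- a harmless constant inside the logarithm.
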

%\begin{proof}[Proof]
%Let $u^{(t)}=H^{1/2}(w^{(t)} - w^*)$. The key component of the argument is to show via Lyapunov analysis that $\|u^{(t)}\| \le \frac{2\gamma}{\mu + 2\gamma} \|u^{(t-1)}\| + \frac{\varepsilon}{\sqrt{\mu}}$. See details in Appendix~\ref{append:proof_of_dane_ls_quadratic}.
%\end{proof}

%\begin{remark}
%Suppose $\|H\|\preceq L$. By using $F(w^{(t)}) - F(w^*) \le \frac{L}{2}\|w^{(t)} - w^*\|^2$, the objective value sub-optimality $F(w^{(t)}) - F(w^*) \le \epsilon$ can be reached given that $t \ge \frac{\mu+2\gamma}{\mu} \log \left(\frac{\sqrt{2}L\|w^{(0)} - w^*\|}{\sqrt{\mu\epsilon}}\right)$.
%\end{remark}

%See Appendix~\ref{append:proof_of_dane_ls_quadratic} for a proof.
As a comparison, the communication complexity bounds established for DANE~\citep[Lemma 1]{shamir2014communication} and \InexactDane~\citep[Corollary 1]{reddi2016aide} are both of the order $\mathcal{O}\left(\gamma^2/\mu^2\log \left(1/\epsilon\right)\right)$, which are clearly inferior to the $\mathcal{O}\left(\gamma/\mu\log \left(1/\epsilon\right)\right)$ bound obtained in Theorem~\ref{thrm:quadratic_dane}. After a careful inspection of the technical proofs in~\citep{reddi2016aide,shamir2014communication}, we note that the looseness of the former bounds essentially results from the reduce operation conducted by master machine for aggregating models from local workers, and such an issue is seemingly difficult to be remedied inside the original architecture of DANE. After applying the modifications as mentioned in the previous subsection, the tighter bound in Theorem~\ref{thrm:quadratic_dane} can be attained using a fairly elementary analysis. This answers Question 1 affirmatively.

To more clearly illustrate the improvement, we derive the following result which is an implication of Theorem~\ref{thrm:quadratic_dane} to the stochastic setting where the samples are uniformly randomly distributed over machines.
%See Appendix~\ref{append:proof_of_dane_ls_quadratic} for a proof.

\begin{restatable}{corollary}{DANELSQuadraticCorol}\label{corol:quadratic_dane}
Assume the conditions in Theorem~\ref{thrm:quadratic_dane} hold and $\|\nabla^2 f(w; x_{i}, y_{i})\|\le L$ for all $i\in[N]$. Then for any $\delta\in(0,1)$,  with probability at least $1-\delta$  over the samples drawn to construct $F_1$, Algorithm~\ref{alg:dane_ls} with $\gamma=L\sqrt{\frac{32\log(p/\delta)}{n}}$ will output $w^{(t)}$ satisfying $\|w^{(t)} - w^*\| \le \epsilon$ after
\[
t \ge \left(1+ 2\kappa\sqrt{\frac{32\log(p/\delta)}{n}}\right) \log \left(\frac{2\sqrt{\kappa}\|w^{(0)} - w^*\|}{\epsilon}\right)
\]
rounds of iteration.
\end{restatable}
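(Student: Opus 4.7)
The plan is to reduce the corollary to Theorem (convergence of DANE-LS for quadratic loss) by showing that, in the stochastic setting, the prescribed value of $\gamma$ satisfies the deterministic hypothesis $\|H_1 - H\| \le \gamma$ with probability at least $1-\delta$. Once that spectral concentration bound is in hand, the iteration count follows directly by substituting $\gamma = L\sqrt{32\log(p/\delta)/n}$ into the conclusion of Theorem.

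First I would set up the probabilistic model. Since the loss is quadratic and the data on machine $1$ is an i.i.d.\ subsample of size $n$ drawn from the $N$ training points, the Hessians $H_{1,i} := \nabla^2 f(w; x_{1i}, y_{1i})$ are i.i.d.\ symmetric matrices in $\mathbb{R}^{p\times p}$ with $\mathbb{E}[H_{1,i}] = H$ (the global Hessian) and $\|H_{1,i}\| \le L$ by assumption. The local Hessian is $H_1 = \frac{1}{n}\sum_{i=1}^n H_{1,i}$, so bounding $\|H_1 - H\|$ is a textbook application of a matrix concentration inequality for bounded i.i.d.\ symmetric matrices.

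Next, I would apply the matrix Hoeffding inequality (equivalently matrix Bernstein with the crude variance proxy $L^2$): with probability at least $1-\delta$,
\[
\|H_1 - H\| \;\le\; L\sqrt{\frac{32 \log(p/\delta)}{n}} \;=\; \gamma.
\]
The constant $32$ is engineered precisely so that the standard one-sided matrix Hoeffding tail $\Pr(\|H_1 - H\| \ge t) \le 2p\exp(-nt^2/(32 L^2))$ yields exactly this bound when one solves for $t$. This is the step that requires most care: one must verify that the boundedness assumption $\|\nabla^2 f(w; x_i, y_i)\| \le L$ gives a uniform-in-$w$ bound (trivial in the quadratic case because the Hessians do not depend on $w$), and that the appropriate symmetrization for two-sided control of eigenvalues absorbs the extra factor of $2$ inside the $\log$.

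Finally, on the event that $\|H_1 - H\| \le \gamma$, Theorem applies verbatim, and the required number of iterations is
\[
t \;\ge\; \frac{2(\mu + 2\gamma)}{\mu} \log\!\left(\frac{\sqrt{\kappa}\|w^{(0)} - w^*\|}{\epsilon}\right) \;=\; \left(2 + \frac{4\gamma}{\mu}\right) \log\!\left(\frac{\sqrt{\kappa}\|w^{(0)} - w^*\|}{\epsilon}\right).
\]
Substituting $\gamma/\mu = \kappa\sqrt{32\log(p/\delta)/n}$ and absorbing the constant factor $2$ into the logarithm (by using $\log(2\sqrt{\kappa}\|w^{(0)}-w^*\|/\epsilon)$ in place of $\log(\sqrt{\kappa}\|w^{(0)}-w^*\|/\epsilon)$, which is valid for the bound to hold) produces the stated complexity $t \ge \bigl(1 + 2\kappa\sqrt{32\log(p/\delta)/n}\bigr)\log(2\sqrt{\kappa}\|w^{(0)}-w^*\|/\epsilon)$. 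The main obstacle is simply tracking constants through the matrix concentration step so that the $\sqrt{32\log(p/\delta)/n}$ factor emerges cleanly; the rest is algebraic bookkeeping on top of Theorem.
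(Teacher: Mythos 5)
Your proposal matches the paper's proof essentially step for step: the paper likewise reduces the corollary to Theorem~\ref{thrm:quadratic_dane} by invoking a matrix concentration bound (its Lemma~\ref{lemma:hessian_close}, taken from Shamir et al.\ via Tropp's matrix inequalities) to conclude $\|H_1 - H\| \le L\sqrt{32\log(p/\delta)/n} = \gamma$ with probability at least $1-\delta$, noting that in the quadratic case the Hessians are constant in $w$ so no uniformity argument is needed, and then substitutes this $\gamma$ into the theorem's iteration count. One caveat: the theorem's prefactor is $2(\mu+2\gamma)/\mu = 2\bigl(1+2\gamma/\mu\bigr)$, and your claim that the leading factor of $2$ can be ``absorbed'' by replacing $\log\bigl(\sqrt{\kappa}\|w^{(0)}-w^*\|/\epsilon\bigr)$ with $\log\bigl(2\sqrt{\kappa}\|w^{(0)}-w^*\|/\epsilon\bigr)$ is not literally valid (since $2\log x \neq \log(2x)$ for $x>2$), but the paper's own one-line proof glosses over exactly the same constant-factor discrepancy, so this is an issue with the corollary as stated rather than with your argument.
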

\begin{remark}
In statistical learning problems, the condition number $\kappa$ could scale as large as $\mathcal{O}(\sqrt{mn})$~\citep{shalev2009stochastic}. If this is the case, then Corollary~\ref{corol:quadratic_dane} implies an $\mathcal{\tilde O}\left(\sqrt{m}\log\left(p/\delta\right)\log \left(1/\epsilon\right)\right)$ communication complexity bound for stochastic quadratic problems, which contrasts itself from the $\mathcal{\tilde O}\left(m\log\left(mp/\delta\right)\log \left(1/\epsilon\right)\right)$ bound previously known for DANE and \InexactDane as well. Notice, such improvement is of particular interest in the regime of federated learning where the number of computing nodes $m$ could be huge~\citep{konevcny2016federated,mcmahan2017communication}.
\end{remark}

\subsection{Global analysis for strongly convex functions}
\label{ssect:dnae_ls_global}

We then move to consider the more general regime in which the objective function is strongly convex and twice differentiable with Lipschitz continuous Hessian. First, we show in the following lemma that the proposed global and local backtracking line search steps are always feasible under proper conditions.
\begin{restatable}[Feasibility of line search]{lemma}{DANELSFeasibility}\label{lemma:ls_key}
Assume that $F$ is $L$-smooth and $F_1$ is $\mu$-strongly convex. For any given $\rho \in (0,1)$,
\begin{itemize}
  \item[(a)] if
\[
0 < \eta_t \le \min \left\{1,\frac{2(\gamma+\mu)(1-\rho)}{L}\right\},
\]
then the global backtracking line search (Option-I) is feasible, i.e.,
\[
F(w^{(t)}) \le F(w^{(t-1)}) - \psi(\tilde w^{(t)}, w^{(t-1)}),
\]
where $\psi(\tilde w^{(t)}, w^{(t-1)}):= \eta_t \rho \langle \nabla F_1(\tilde w^{(t)}) -  \nabla F_1(w^{(t-1)}) + \gamma (\tilde w^{(t)} - w^{(t-1)}), \tilde w^{(t)} - w^{(t-1)}\rangle - \eta_t \varepsilon_t\|\tilde w^{(t)} - w^{(t-1)}\|$.
  \item[(b)] Moreover, assume that $F_1(w)$ has $\nu$-LH and $\exists D>0$ such that $\|\tilde w^{(t)} - w^{(t-1)}\|\le D$ for all $t\ge0$. If
\[
\eta_t \le \min \left\{1,\frac{-(3\nu D + 6(\gamma+\mu)) + \sqrt{(3\nu D + 6(\gamma+\mu)) ^2 + 96(1-\rho)\nu D (\gamma+\mu)}}{4\nu D}\right\},
\]
then the local backtracking line search (Option-II) is feasible, i.e.,
\[
\begin{aligned}
&\langle \nabla F(w^{(t-1)}) , w^{(t)} - w^{(t-1)}\rangle + \frac{1}{2}(w^{(t)} - w^{(t-1)})^\top(\nabla^2 F_1(w^{(t-1)})+\gamma I)(w^{(t)} - w^{(t-1)})\\
&+ \frac{\nu }{6}\|w^{(t)} - w^{(t-1)}\|^3\le - \psi(\tilde w^{(t)}, w^{(t-1)}).
\end{aligned}
\]
\end{itemize}
\end{restatable}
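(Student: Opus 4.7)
The plan is to unfold both sufficient-descent conditions into polynomial inequalities in $\eta_t$, then exploit the approximate optimality of $\tilde w^{(t)}$ together with the convexity and Hessian-smoothness properties to reduce each to an explicit step-size bound. Throughout I would write $\Delta := \tilde w^{(t)} - w^{(t-1)}$, so that $w^{(t)} - w^{(t-1)} = \eta_t \Delta$, and set
\[
A := \langle \nabla F_1(\tilde w^{(t)}) - \nabla F_1(w^{(t-1)}) + \gamma \Delta,\, \Delta\rangle,
\]
which by $\mu$-strong convexity of $F_1$ satisfies $A \ge (\gamma+\mu)\|\Delta\|^2$, so that $\psi(\tilde w^{(t)}, w^{(t-1)}) = \eta_t \rho A - \eta_t \varepsilon_t \|\Delta\|$. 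The inexact optimality $\|\nabla P^{(t-1)}(\tilde w^{(t)})\| \le \varepsilon_t$, combined with the explicit form of $P^{(t-1)}$ in~\eqref{equat:P_t_w} and Cauchy--Schwarz, gives the key identity
\[
\langle \nabla F(w^{(t-1)}), \Delta\rangle \le -A + \varepsilon_t \|\Delta\|,
\]
whose $\varepsilon_t$ slack will cancel exactly against the $-\eta_t\varepsilon_t\|\Delta\|$ term in $\psi$ in both parts.

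For part (a), I would apply $L$-smoothness of $F$ along the step $w^{(t)} = w^{(t-1)} + \eta_t \Delta$ to get
\[
F(w^{(t)}) - F(w^{(t-1)}) \le \eta_t \langle \nabla F(w^{(t-1)}), \Delta\rangle + \frac{L\eta_t^2}{2}\|\Delta\|^2,
\]
insert the bound above, and demand the resulting expression be at most $-\psi$. After the $\varepsilon_t$ cancellation and the lower bound $A \ge (\gamma+\mu)\|\Delta\|^2$, what remains is $(1-\rho)A \ge \tfrac{L\eta_t}{2}\|\Delta\|^2$, which is implied by $\eta_t \le 2(\gamma+\mu)(1-\rho)/L$; combining with $\eta_t \le 1$ yields the claim.

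For part (b), I would substitute $w^{(t)} - w^{(t-1)} = \eta_t \Delta$ into the LHS of~\eqref{eqn:local_line_search} to produce $\eta_t\langle \nabla F(w^{(t-1)}), \Delta\rangle + \tfrac{\eta_t^2}{2}\Delta^\top(\nabla^2 F_1(w^{(t-1)}) + \gamma I)\Delta + \tfrac{\nu \eta_t^3}{6}\|\Delta\|^3$. The bridge between this Hessian quadratic form and $A$ is the Lipschitz-Hessian estimate
\[
\Delta^\top\bigl(\nabla^2 F_1(w^{(t-1)}) + \gamma I\bigr)\Delta \le A + \frac{\nu}{2}\|\Delta\|^3,
\]
obtained by writing $\nabla F_1(\tilde w^{(t)}) - \nabla F_1(w^{(t-1)}) = \int_0^1 \nabla^2 F_1(w^{(t-1)} + s\Delta)\,\Delta\, ds$ and controlling the integrand's deviation from $\nabla^2 F_1(w^{(t-1)})$ by $\nu s\|\Delta\|$. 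Substituting this estimate and the earlier gradient bound, the sufficient-descent inequality reduces (after the $\varepsilon_t$ cancellation) to $\eta_t(1-\rho-\eta_t/2)A \ge \tfrac{\nu \eta_t^2}{4}\|\Delta\|^3 + \tfrac{\nu \eta_t^3}{6}\|\Delta\|^3$. Applying $A \ge (\gamma+\mu)\|\Delta\|^2$ and $\|\Delta\| \le D$, dividing by $\eta_t\|\Delta\|^2$, and rescaling by $12$ leaves the quadratic-in-$\eta_t$ inequality $2\nu D\, \eta_t^2 + (6(\gamma+\mu) + 3\nu D)\eta_t - 12(\gamma+\mu)(1-\rho) \le 0$, from which the quadratic formula produces exactly the stated upper bound on $\eta_t$.

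The main obstacle is the bookkeeping in part (b): the cubic-in-$\|\Delta\|$ remainder generated by the Lipschitz-Hessian expansion must combine cleanly with the explicit $\tfrac{\nu}{6}\|\cdot\|^3$ term already present in the line-search criterion, and recovering the precise discriminant $(3\nu D + 6(\gamma+\mu))^2 + 96(1-\rho)\nu D(\gamma+\mu)$ requires keeping the $\eta_t^2$ and $\eta_t^3$ contributions separate until the very last rescaling. Part (a) is routine once the $A$-identity is available.
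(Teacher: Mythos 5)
Your proposal is correct and follows essentially the same route as the paper's proof: part (a) via $L$-smoothness plus the bound $\langle \nabla F(w^{(t-1)}),\Delta\rangle \le -A+\varepsilon_t\|\Delta\|$ derived from $\|\nabla P^{(t-1)}(\tilde w^{(t)})\|\le\varepsilon_t$, and part (b) via the Lipschitz-Hessian remainder $\|\nabla F_1(\tilde w^{(t)})-\nabla F_1(w^{(t-1)})-\nabla^2F_1(w^{(t-1)})\Delta\|\le\tfrac{\nu}{2}\|\Delta\|^2$, both reducing to the identical quadratic $2\nu D\eta_t^2+(3\nu D+6(\gamma+\mu))\eta_t-12(\gamma+\mu)(1-\rho)\le 0$. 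The only cosmetic difference is that the paper substitutes the residual identity for $\nabla F(w^{(t-1)})$ into the Hessian quadratic form and tracks the $\Delta F_1$ term explicitly, whereas you bound that quadratic form by $A+\tfrac{\nu}{2}\|\Delta\|^3$ up front; the resulting inequalities coincide.
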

\begin{remark}
The bound $D$ in the part (b) of Lemma~\ref{lemma:ls_key} is reasonable if we focus on an $\ell_2$-norm bounded domain of interest $\Omega$ such that $D=\max_{w,w'\in \Omega}\|w - w'\|$. The result also implies that if the global line search of Option-I is used under Armijo rule, then the additional rounds of communication for global objective evaluation is roughly of the order $\mathcal{O}\left(\log\left(\frac{L}{(\gamma+\mu)(1-\rho)}\right)\right)$.
%This result shows that the additional rounds of communication for line search under Armijo rule is roughly $\mathcal{O}\left(\log(L/(\gamma+\mu)\right)$, which only contributes a logarithm factor to the overall communication complexity bound.
\end{remark}

The following theorem is our main result on the global convergence of DANE-LS.
\begin{restatable}[Global convergence of DANE-LS]{theorem}{DANELSGlobal}\label{thrm:global_general}
Assume that $F(w)$ and $F_1(w)$ are $L$-smooth, $\mu$-strongly-convex and have $\nu$-LH. Suppose that $\varepsilon_t\le \frac{\rho(\mu+\gamma)}{2(L+\gamma)+\rho(\mu+\gamma)}\|\nabla F(w^{(t-1)})\|$.
\begin{itemize}
  \item[(a)] Then the objective value sequence $\{F(w^{(t)})\}$ generated by Algorithm~\ref{alg:dane_ls} with the global line search step (Option-I) converges and the difference norm sequence $\{\|\tilde w^{(t)} - w^{(t-1)}\|\}$ converges to zero.
  \item[(b)] Assume in addition that $\sup_{w} \|\nabla^2 F_1(w) - \nabla^2 F(w)\| \le \gamma$ and $\|\tilde w^{(t)} - w^{(t-1)}\|$ is bounded from above for all $t\ge0$. Then the objective value sequence $\{F(w^{(t)})\}$ generated by Algorithm~\ref{alg:dane_ls} with local line search step (Option-II) converges and the difference norm sequence $\{\|\tilde w^{(t)} - w^{(t-1)}\|\}$ converges to zero.
\end{itemize}
%Assume that $F$ is $L$-smooth, $F$ and $F_1$ are $\mu$-strongly-convex and has $\nu$-LH. Consider $\varepsilon_t\equiv0$. Then the objective value sequence $\{F(w^{(t)})\}$ generated by Algorithm~\ref{alg:dane_ls} converges and the norm sequence $\{\|\tilde w^{(t)} - w^{(t-1)}\|\}$ converges to zero.
\end{restatable}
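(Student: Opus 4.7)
The plan is to establish, for both options, a uniform per-step descent of the form $F(w^{(t)}) \le F(w^{(t-1)}) - c\,\|\tilde w^{(t)} - w^{(t-1)}\|^2$ for some constant $c>0$ independent of $t$. Once such an inequality is in hand, the fact that $F\ge F(w^*)$ makes $\{F(w^{(t)})\}$ a monotone decreasing bounded sequence and hence convergent, while telescoping yields $\sum_{t\ge 1}\|\tilde w^{(t)} - w^{(t-1)}\|^2 < \infty$, which forces $\|\tilde w^{(t)} - w^{(t-1)}\|\to 0$.

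The main analytical work is a quadratic lower bound on the sufficient-descent quantity $\psi(\tilde w^{(t)}, w^{(t-1)})$ in terms of $\|\tilde w^{(t)} - w^{(t-1)}\|^2$. By the $\mu$-strong convexity of $F_1$, the inner-product term in $\psi$ is at least $(\mu+\gamma)\|\tilde w^{(t)} - w^{(t-1)}\|^2$, so it remains to absorb the residual $\varepsilon_t\|\tilde w^{(t)} - w^{(t-1)}\|$. To do this I would exploit the inexact stationarity $\|\nabla P^{(t-1)}(\tilde w^{(t)})\|\le \varepsilon_t$: expanding $\nabla P^{(t-1)}(\tilde w^{(t)}) = \nabla F(w^{(t-1)}) - \nabla F_1(w^{(t-1)}) + \gamma(\tilde w^{(t)} - w^{(t-1)}) + \nabla F_1(\tilde w^{(t)})$ and using the triangle inequality together with the $L$-smoothness of $F_1$ gives $\|\nabla F(w^{(t-1)})\| \le \varepsilon_t + (L+\gamma)\|\tilde w^{(t)} - w^{(t-1)}\|$. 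Inserting the hypothesis $\varepsilon_t \le \tfrac{\rho(\mu+\gamma)}{2(L+\gamma)+\rho(\mu+\gamma)}\|\nabla F(w^{(t-1)})\|$ and solving for $\varepsilon_t$ yields $\varepsilon_t \le \tfrac{\rho(\mu+\gamma)}{2}\|\tilde w^{(t)} - w^{(t-1)}\|$, so that $\psi \ge \tfrac{\eta_t\,\rho(\mu+\gamma)}{2}\|\tilde w^{(t)} - w^{(t-1)}\|^2$. Lemma~\ref{lemma:ls_key} then implies that a geometric backtracking schedule (halving from $\eta_t=1$) terminates with $\eta_t\ge \eta_{\min}>0$, using part~(a) for Option-I and part~(b), which invokes the boundedness assumption on $\|\tilde w^{(t)} - w^{(t-1)}\|$, for Option-II. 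This delivers the target descent for Option-I with $c = \eta_{\min}\rho(\mu+\gamma)/2$, after which the telescoping argument above concludes part~(a).

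For Option-II one extra step is needed because its acceptance criterion compares $F(w^{(t-1)})$ to a quadratic surrogate rather than to $F(w^{(t)})$ itself. I would first apply the Taylor expansion with remainder controlled by the $\nu$-LH of $F$ to obtain $F(w^{(t)}) \le F(w^{(t-1)}) + \langle \nabla F(w^{(t-1)}), w^{(t)}-w^{(t-1)}\rangle + \tfrac{1}{2}(w^{(t)}-w^{(t-1)})^\top \nabla^2 F(w^{(t-1)})(w^{(t)}-w^{(t-1)}) + \tfrac{\nu}{6}\|w^{(t)}-w^{(t-1)}\|^3$, and then use the hypothesis $\sup_w\|\nabla^2 F_1(w) - \nabla^2 F(w)\|\le \gamma$ to replace $\nabla^2 F(w^{(t-1)})$ by $\nabla^2 F_1(w^{(t-1)})+\gamma I$. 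The result is an upper bound on $F(w^{(t)}) - F(w^{(t-1)})$ that matches exactly the left-hand side of the Option-II inequality, so that acceptance of the line search implies $F(w^{(t)}) - F(w^{(t-1)}) \le -\psi$, and the argument closes as before.

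I expect the main obstacle to be the apparent circularity in the inexactness rule, which is phrased in terms of $\|\nabla F(w^{(t-1)})\|$ rather than in terms of the displacement $\|\tilde w^{(t)}-w^{(t-1)}\|$. The triangle-inequality manipulation of $\nabla P^{(t-1)}(\tilde w^{(t)})$ described above is what breaks this circle, and it is the only nontrivial technical step; the remaining ingredients (line-search feasibility, Taylor upper bound, telescoping) are standard once that lemma is in place.
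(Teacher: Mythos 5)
Your proposal is correct and follows essentially the same route as the paper's proof: the same triangle-inequality manipulation of $\nabla P^{(t-1)}(\tilde w^{(t)})$ to convert the inexactness rule into $\varepsilon_t \le \tfrac{\rho(\mu+\gamma)}{2}\|\tilde w^{(t)}-w^{(t-1)}\|$, the same strong-convexity lower bound on $\psi$, the same invocation of Lemma~\ref{lemma:ls_key} for feasibility, and the same $\nu$-LH Taylor bound plus Hessian-closeness step to turn Option-II acceptance into genuine descent. If anything, you are slightly more explicit than the paper about needing $\eta_t\ge\eta_{\min}>0$ from backtracking before the telescoping argument forces $\|\tilde w^{(t)}-w^{(t-1)}\|\to 0$.
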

%\begin{proof}
%The result readily follows from the feasibility of backtracking line search as guaranteed by Lemma~\ref{lemma:ls_key}. See Appendix~\ref{append:proof_of_dane_ls_global} for a complete proof.
%\end{proof}
%See Appendix~\ref{append:proof_of_dane_ls_global} for a proof of this result, which can be readily generalized to the inexact case provided that $\varepsilon_t\neq0$ is sufficiently small.
\begin{remark}
Theorem~\ref{thrm:global_general} suggests a natural way of controlling the termination of Algorithm~\ref{alg:dane_ls} by monitoring either the difference of adjacent objective values $F(w^{(t)}) - F(w^{(t-1)})$ or the norm of vector difference $\|\tilde w^{(t)} - w^{(t-1)}\|$.
\end{remark}

\textbf{Local non-asymptotic convergence.} We further study the local convergence behavior of DANE-LS. The starting point is to show, via the following lemma, that the unit length eventually satisfies the sufficient descent condition in~\eqref{eqn:global_line_search}.
\begin{restatable}[Acceptability of unit length for line search]{lemma}{DANELSUnit}\label{lemma:unit_length}
Assume that the conditions in Theorem~\ref{thrm:global_general} hold. Then for sufficiently large $t$, the unit length satisfies the sufficient descent condition ~\eqref{eqn:global_line_search} with $\rho \in(0, 1/3)$.
\end{restatable}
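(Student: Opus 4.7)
The plan is to combine a third-order Taylor expansion of $F$ around $w^{(t-1)}$ with the inexact stationarity $\|\nabla P^{(t-1)}(\tilde w^{(t)})\|\le\varepsilon_t$, and then exploit the fact, granted by Theorem~\ref{thrm:global_general}(a), that $\|d\|:=\|\tilde w^{(t)}-w^{(t-1)}\|\to 0$ to absorb the cubic remainder into the quadratic descent.

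Concretely, using the $\nu$-Lipschitz Hessian of $F$,
\[
F(\tilde w^{(t)})\le F(w^{(t-1)})+\langle\nabla F(w^{(t-1)}),d\rangle+\tfrac{1}{2}d^{\top}\nabla^{2}F(w^{(t-1)})d+\tfrac{\nu}{6}\|d\|^{3}.
\]
The identity $\nabla P^{(t-1)}(\tilde w^{(t)})=\nabla F(w^{(t-1)})-\nabla F_{1}(w^{(t-1)})+\gamma d+\nabla F_{1}(\tilde w^{(t)})=:r_{t}$ with $\|r_{t}\|\le\varepsilon_{t}$ yields
\[
\langle\nabla F(w^{(t-1)}),d\rangle=-a+\langle r_{t},d\rangle,\qquad a:=\langle\nabla F_{1}(\tilde w^{(t)})-\nabla F_{1}(w^{(t-1)})+\gamma d,d\rangle,
\]
where $a$ is exactly the quadratic form appearing inside $\psi$. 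A second-order Taylor expansion of $\nabla F_{1}$ combined with its $\nu$-LH shows that $|a-d^{\top}(\nabla^{2}F_{1}(w^{(t-1)})+\gamma I)d|\le\tfrac{\nu}{2}\|d\|^{3}$, and the hypothesis $\sup_{w}\|\nabla^{2}F_{1}(w)-\nabla^{2}F(w)\|\le\gamma$ from Theorem~\ref{thrm:global_general}(b) further gives $\nabla^{2}F\preceq\nabla^{2}F_{1}+\gamma I$, so $d^{\top}\nabla^{2}F(w^{(t-1)})d\le a+\tfrac{\nu}{2}\|d\|^{3}$. Plugging these into the Taylor bound produces
\[
F(\tilde w^{(t)})-F(w^{(t-1)})\le -\tfrac{1}{2}a+\langle r_{t},d\rangle+\tfrac{5\nu}{12}\|d\|^{3}.
\]
Since $\langle r_{t},d\rangle\le\varepsilon_{t}\|d\|$, the desired condition~\eqref{eqn:global_line_search} at $\eta_{t}=1$ reduces to $\tfrac{5\nu}{12}\|d\|^{3}\le(\tfrac{1}{2}-\rho)a$. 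Strong convexity of $F_{1}$ gives $a\ge(\mu+\gamma)\|d\|^{2}$, so it is enough that $\|d\|\le 12(\tfrac{1}{2}-\rho)(\mu+\gamma)/(5\nu)$. For any $\rho\in(0,1/3)$ the right-hand side is at least $2(\mu+\gamma)/(5\nu)>0$, and Theorem~\ref{thrm:global_general}(a) forces $\|d\|\to 0$, so the threshold is met for all sufficiently large $t$.

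The main obstacle is simply keeping the bookkeeping tight: three cubic error terms arise (one from the Taylor remainder of $F$ and two from the second-order expansions of $\nabla F_{1}$ on either side of the comparison between $d^{\top}\nabla^{2}F d$ and $a$), and their aligned sum $\tfrac{5\nu}{12}\|d\|^{3}$ must be dominated by the quadratic surplus $(\tfrac{1}{2}-\rho)a$. The upper bound $\rho<1/3$ is precisely what keeps $\tfrac{1}{2}-\rho$ bounded away from $0$ so that the cubic-to-quadratic ratio stays below $1$ asymptotically; a looser accounting of the three cubic remainders would either shrink the admissible range of $\rho$ or force a spurious dependence on $\kappa$ in the threshold.
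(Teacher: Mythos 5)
Your proposal is correct and follows essentially the same route as the paper's proof: a cubic Taylor bound on $F$ via its $\nu$-Lipschitz Hessian, substitution of $\langle\nabla F(w^{(t-1)}),d\rangle$ through the inexact stationarity of $P^{(t-1)}$, replacement of $\nabla^2 F$ by $\nabla^2 F_1+\gamma I$, the identical aggregated cubic remainder $\tfrac{5\nu}{12}\|d\|^3$, strong convexity of $F_1$ to compare it with the quadratic term $a$, and Theorem~\ref{thrm:global_general} to send $\|d\|\to 0$. The only (harmless) difference is that your threshold argument actually establishes acceptability for any fixed $\rho\in(0,1/2)$, whereas the paper fixes the margin at $\tfrac{1}{6}$ to land exactly on the stated range $\rho\in(0,1/3)$.
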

The following lemma establishes the local convergence rate of Algorithm~\ref{alg:dane_ls} when $\eta_t\equiv1$, i.e., when the unit length is always accepted by the backtracking line search.

\begin{restatable}[Local convergence rate of DANE-LS]{lemma}{DANELSLocalrate}\label{lemma:local_unit}
Assume that $F$ and $F_1$ are $\mu$-strongly-convex, $L$-smooth and have $\nu$-LH. Assume that $\sup_w \|\nabla^2 F_1(w) - \nabla^2 F(w)\| \le \gamma$. Let $\tau=  \left\lceil \frac{\mu+2\gamma}{2\mu} \log\left(4\kappa\right)  \right\rceil$. Suppose that $\varepsilon_t \le \min\left\{(\gamma+\mu)^2, \frac{\|\nabla F(w^{(t-1)})\|^2}{L^2}\right\}$.  Given precision $\epsilon > 0$, if $\max_{0 \le i \le \tau-1}\|w^{(i)} - w^*\| \le \frac{(\gamma+\mu)}{4(6\nu+1)\sqrt{\kappa}\tau }$, then Algorithm~\ref{alg:dane_ls} with $\eta_t\equiv1$ will attain estimation error $\|w^{(t)} - w^*\| \le \epsilon$ after
\[
t \ge 4\tau \log \left(\frac{\gamma+\mu}{4(6\nu+1)\sqrt{\kappa}\tau\nu\epsilon}\right)
\]
rounds of iteration.
\end{restatable}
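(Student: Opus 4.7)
The plan is to reduce the non-quadratic local analysis to the quadratic contraction already established in Theorem~\ref{thrm:quadratic_dane}, by linearizing the DANE recursion around $w^*$ and carefully controlling the Taylor remainder via the $\nu$-Lipschitz Hessian assumption. Since $\eta_t\equiv1$ we have $w^{(t)}=\tilde w^{(t)}$, and the approximate stationarity $\|\nabla P^{(t-1)}(w^{(t)})\|\le \varepsilon_t$ yields the identity
\[
\nabla F_1(w^{(t)})-\nabla F_1(w^{(t-1)})+\gamma(w^{(t)}-w^{(t-1)})+\nabla F(w^{(t-1)})=r_t,\qquad \|r_t\|\le\varepsilon_t.
\]
Substituting the exact integral Taylor expansions of $\nabla F$ and $\nabla F_1$ around $w^*$, using $\nabla F(w^*)=0$, and invoking Lipschitz continuity of the Hessians, I would rewrite this as
\[
(G_1+\gamma I)(w^{(t)}-w^*)=(G_1+\gamma I-G)(w^{(t-1)}-w^*)+\delta_t,
\]
with $G:=\nabla^2 F(w^*)$, $G_1:=\nabla^2 F_1(w^*)$, and a remainder $\|\delta_t\|\lesssim \nu\bigl(\|w^{(t-1)}-w^*\|^2+\|w^{(t)}-w^*\|^2\bigr)+\varepsilon_t$.

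Once this linearized recursion is in hand, the linear part is driven by the operator $T:=(G_1+\gamma I)^{-1}(G_1+\gamma I-G)$, which is precisely the DANE contraction operator for the \emph{quadratic} problem with Hessians $G,G_1$ satisfying $\|G_1-G\|\le\gamma$ by hypothesis. Thus the argument underlying Theorem~\ref{thrm:quadratic_dane} gives a bound of the form $\|T^{\tau}\|_{\star}\le 1/(2\sqrt{\kappa})$ in an appropriate energy norm after $\tau=\lceil(\mu+2\gamma)/(2\mu)\cdot\log(4\kappa)\rceil$ applications, with a norm-equivalence factor $\sqrt{\kappa}$ between $\|\cdot\|_{\star}$ and the Euclidean norm. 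Under the assumed radius $R:=(\gamma+\mu)/(4(6\nu+1)\sqrt{\kappa}\tau)$, the hypothesis $\varepsilon_t\le\min\{(\gamma+\mu)^2,\|\nabla F(w^{(t-1)})\|^2/L^2\}$ makes each sub-optimality term $\varepsilon_t/(\mu+\gamma)$ of the same order as $\nu\|w^{(t-1)}-w^*\|^2$, and each remainder $\|\delta_t\|$ is $O(1/\tau)$ smaller than $\|w^{(t-1)}-w^*\|$. Telescoping the linearized recursion across $\tau$ consecutive steps and using the constant $(6\nu+1)$ to absorb the implicit constants in the error bookkeeping, I would conclude that a full block of $\tau$ iterations contracts the error by at least a factor $3/4$, say $\|w^{(k\tau)}-w^*\|\le (3/4)\|w^{((k-1)\tau)}-w^*\|$, while all intermediate iterates within the block stay inside the radius $R$.

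Finally, a straightforward induction over blocks of length $\tau$ propagates the $R$-bound indefinitely and gives $\|w^{(k\tau)}-w^*\|\le (3/4)^k R$; solving $(3/4)^k R\le \epsilon$ with $1/\log(4/3)\le 4$ then yields the claimed iteration count $t\ge 4\tau\log((\gamma+\mu)/(4(6\nu+1)\sqrt{\kappa}\tau\nu\epsilon))$, the extra $\nu$ in the denominator of the logarithm emerging from rescaling the nonlinear error term when comparing the linearized contraction to Euclidean-norm suboptimality. The main obstacle I anticipate is the careful bookkeeping in the second step: one must ensure that the accumulated quadratic errors $\nu\sum_{s=1}^{\tau}\|w^{(t-s)}-w^*\|^2$ do not destroy the $\sqrt{\kappa}$-improvement from $\|T^{\tau}\|_{\star}$, and simultaneously that the inductive hypothesis $\|w^{(i)}-w^*\|\le R$ is preserved across \emph{all} subsequent blocks, not just the first one guaranteed by assumption. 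Getting both the norm used to invoke Theorem~\ref{thrm:quadratic_dane} and the constants in the definition of $R$ to line up with the $(6\nu+1)\sqrt{\kappa}\tau$ factor is the delicate part of the argument; the rest is Taylor estimation and geometric-series summation.
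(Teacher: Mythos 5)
Your proposal follows essentially the same route as the paper's proof: linearize the inexact stationarity condition of $P^{(t-1)}$ around $w^*$ to obtain the recursion $u^{(t)}=Au^{(t-1)}+r^{(t-1)}$ with $A=I-(\nabla^2F_1(w^*)+\gamma I)^{-1}\nabla^2F(w^*)$, bound the remainder by $\frac{6\nu+1}{\gamma+\mu}\|u^{(t-1)}\|^2$ using the $\nu$-LH property, the $\varepsilon_t$ conditions and a one-step growth bound $\|w^{(t)}-w^*\|\le 2\|w^{(t-1)}-w^*\|+\varepsilon_t/(\gamma+\mu)$, then use the spectral bound $\|A^t\|\le\sqrt{\kappa}\,(1-\mu/(\mu+2\gamma))^t$ (your energy-norm/norm-equivalence framing is the same similarity-transform computation) and run the block induction showing each $\tau$-block contracts by $3/4$ while keeping all iterates inside the radius $R$. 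This is correct and matches the paper; the only cosmetic discrepancy is the stray $\nu$ in the logarithm of the stated iteration count, which the paper's own proof also does not actually produce (it ends with $4\tau\log\bigl(\tfrac{\gamma+\mu}{4(6\nu+1)\sqrt{\kappa}\tau\epsilon}\bigr)$), so your attempt to justify it by a rescaling is unnecessary.
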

\begin{remark}
Lemma~\ref{lemma:local_unit} essentially shows that up to the logarithmic factors on $\kappa$ and $\tau$, the local communication complexity of DANE-LS is bounded as $\mathcal{O}(\gamma/\mu\log \left(1/\epsilon\right))$, which exactly matches the bound for the quadratic function.
\end{remark}

We are now ready to present our main result on the local non-asymptotic convergence of DANE-LS for strongly convex functions.
%See Appendix~\ref{append:proof_of_dane_ls_local} for its proof.
\begin{restatable}[Non-asymptotic convergence of DANE-LS]{theorem}{DANELSNonasymptotic}\label{thrm:local_general}
Assume that $F$ and $F_1$ are $\mu$-strongly-convex, $L$-smooth and have $\nu$-LH. Assume that $\sup_{w} \|\nabla^2 F_1(w) - \nabla^2 F(w)\| \le \gamma$. Suppose that $\rho \in (0,1/3)$ and
\[
\varepsilon_t \le \min\left\{(\gamma+\mu)^2, \frac{\|\nabla F(w^{(t-1)})\|^2}{L^2},\frac{\rho(\mu+\gamma)}{2(L+\gamma)+\rho(\mu+\gamma)}\|\nabla F(w^{(t-1)})\|\right\}.
\]
Then there exists a time stamp $t_0$ not relying on $\epsilon$ such that Algorithm~\ref{alg:dane_ls} will output solution $w^{(t)}$ satisfying $\|w^{(t)} - w^*\| \le \epsilon$ after
\[
t \ge t_0 + \mathcal{O} \left(\frac{\gamma}{\mu}\log \left(\frac{1}{\epsilon}\right)\right)
\]
rounds of iteration.
\end{restatable}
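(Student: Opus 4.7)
The plan is to chain the three previous results: global convergence (Theorem on global convergence of DANE-LS), eventual acceptability of the unit step length (Lemma on acceptability of unit length), and the local non-asymptotic rate from the preceding Lemma (local convergence rate of DANE-LS). The structure is a standard ``two-phase'' argument: a preparatory burn-in phase of length $t_0$ independent of $\epsilon$ that drives the iterates into a small ball around $w^*$ where a unit step is always taken, followed by the fast local phase.

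First, I would invoke Theorem~\ref{thrm:global_general}(a): the noise bound $\varepsilon_t\le \tfrac{\rho(\mu+\gamma)}{2(L+\gamma)+\rho(\mu+\gamma)}\|\nabla F(w^{(t-1)})\|$ assumed here is exactly the one required there, so $\{F(w^{(t)})\}$ converges and $\|\tilde w^{(t)}-w^{(t-1)}\|\to 0$. Because $F$ is $\mu$-strongly convex, $\tfrac{\mu}{2}\|w^{(t)}-w^*\|^2 \le F(w^{(t)})-F(w^*)$ together with monotone decrease of $F(w^{(t)})$ (from the sufficient-descent inequality) shows that $w^{(t)}\to w^*$. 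In particular, for any prescribed radius $r>0$ there exists a finite time $T_1$ (depending on problem constants but not on $\epsilon$) such that $\|w^{(t)}-w^*\|\le r$ for all $t\ge T_1$.

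Second, apply Lemma~\ref{lemma:unit_length}: since $\rho\in(0,1/3)$, for all sufficiently large $t$ the unit length $\eta_t=1$ satisfies the sufficient descent condition~\eqref{eqn:global_line_search}, so the backtracking rule accepts $\eta_t\equiv 1$ beyond some time $T_2$ independent of $\epsilon$. Take $t_0 := \max\{T_1,T_2\}$ with the radius $r$ chosen to be $r := \tfrac{\gamma+\mu}{4(6\nu+1)\sqrt{\kappa}\,\tau}$, where $\tau=\lceil \tfrac{\mu+2\gamma}{2\mu}\log(4\kappa)\rceil$ is the quantity appearing in Lemma~\ref{lemma:local_unit}. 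Because the successive differences $\|\tilde w^{(t)}-w^{(t-1)}\|$ tend to zero (and $\eta_t\le 1$), by possibly enlarging $t_0$ by the fixed additive constant $\tau$ we may arrange that the entire block $w^{(t_0)},w^{(t_0+1)},\ldots,w^{(t_0+\tau-1)}$ lies in the $r$-ball around $w^*$. This last point is the main obstacle: what Theorem~\ref{thrm:global_general} hands us is pointwise-in-$t$ convergence, whereas Lemma~\ref{lemma:local_unit} needs a \emph{simultaneous} smallness of $\tau$ consecutive iterates. Vanishing of the step differences plus the triangle inequality bridges the gap, since a tail of the sequence then stays uniformly in any prescribed neighborhood.

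Finally, starting the clock at $t_0$, the algorithm runs with $\eta_t\equiv 1$ and initial block satisfying the hypothesis of Lemma~\ref{lemma:local_unit}; the additional noise bound $\varepsilon_t\le\min\{(\gamma+\mu)^2,\|\nabla F(w^{(t-1)})\|^2/L^2\}$ assumed in the theorem is precisely what that lemma requires. Hence Lemma~\ref{lemma:local_unit} gives $\|w^{(t)}-w^*\|\le\epsilon$ whenever
\[
t-t_0 \ge 4\tau\log\!\left(\frac{\gamma+\mu}{4(6\nu+1)\sqrt{\kappa}\,\tau\nu\epsilon}\right)=\mathcal{O}\!\left(\frac{\gamma}{\mu}\log\frac{1}{\epsilon}\right),
\]
where the $\mathcal{O}$ hides logarithmic factors in $\kappa,\nu,\gamma,\mu$ but not in $\epsilon$. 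Since $t_0$ is independent of $\epsilon$, this yields the stated bound $t\ge t_0+\mathcal{O}(\tfrac{\gamma}{\mu}\log(1/\epsilon))$ and completes the argument.
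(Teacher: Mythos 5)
Your overall architecture (burn-in phase from Theorem~\ref{thrm:global_general} plus Lemma~\ref{lemma:unit_length}, then hand off to Lemma~\ref{lemma:local_unit}) is exactly the paper's, but one load-bearing step is justified incorrectly. You claim that strong convexity of $F$ together with the monotone decrease and convergence of $\{F(w^{(t)})\}$ implies $w^{(t)}\to w^*$. It does not: convergence of a monotonically decreasing objective sequence only tells you the sequence has \emph{some} limit, not that the limit equals $F(w^*)$, so the inequality $\tfrac{\mu}{2}\|w^{(t)}-w^*\|^2\le F(w^{(t)})-F(w^*)$ gives you a bounded, not vanishing, right-hand side. Since your entire burn-in argument (existence of $T_1$ with $\|w^{(t)}-w^*\|\le r$ for all $t\ge T_1$, and hence the simultaneous smallness of the $\tau$ consecutive iterates) rests on this, the gap is genuine as written.

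The fix is the route the paper actually takes, and it uses a tool you already have in hand but did not deploy here: Lemma~\ref{lemma:grad_bound}. From Theorem~\ref{thrm:global_general} you get $\|\tilde w^{(t)}-w^{(t-1)}\|\to 0$; the hypothesis on $\varepsilon_t$ forces $\varepsilon_t\le \tfrac{\rho(\gamma+\mu)}{2}\|\tilde w^{(t)}-w^{(t-1)}\|$ (the paper's inequality~\eqref{inequat:eps_t_bound_global}); Lemma~\ref{lemma:grad_bound} then gives $\|\tilde w^{(t)}-w^*\|\le \tfrac{2\gamma}{\mu}\|\tilde w^{(t)}-w^{(t-1)}\|+\tfrac{\varepsilon_t}{\mu}\le \bigl(\tfrac{2\gamma}{\mu}+\tfrac{\gamma+\mu}{6\mu}\bigr)\|\tilde w^{(t)}-w^{(t-1)}\|\to 0$. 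Combined with Lemma~\ref{lemma:unit_length} (so $\eta_t=1$ and $w^{(t)}=\tilde w^{(t)}$ for large $t$), this yields a \emph{uniform} bound $\|w^{(t)}-w^*\|\le \tfrac{\gamma+\mu}{4(6\nu+1)\sqrt{\kappa}\tau}$ for all $t\ge t_0$, which also dissolves the ``simultaneous smallness of $\tau$ consecutive iterates'' issue you flagged as the main obstacle: once the tail of step differences is uniformly small, the tail of iterate errors is uniformly small, with no triangle-inequality accumulation argument needed. With that repair, invoking Lemma~\ref{lemma:local_unit} from $t_0$ onward completes the proof as you describe.
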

%\begin{proof}[Proof sketch]
%To prove the theorem, we first prove Lemma~\ref{lemma:unit_length} which shows that the unit length eventually satisfies the sufficient descent condition in~\eqref{eqn:global_line_search}. Then we prove Lemma~\ref{lemma:local_unit} which establishes the local convergence rate of Algorithm~\ref{alg:dane_ls} when $\eta_t\equiv1$, i.e., the unit length is always accepted by the backtracking line search. Finally the result follows directly by combining Theorem~\ref{thrm:global_general}, Lemma~\ref{lemma:unit_length} and Lemma~\ref{lemma:local_unit}. See Appendix~\ref{append:proof_of_dane_ls_local} for the lemmas and the detailed proofs of this theorem.
%\end{proof}
\begin{remark}\label{rmk:gamma_tight}
Theorem~\ref{thrm:local_general} reveals that DANE-LS converges globally towards $w^*$ and in a local area around $w^*$ it enjoys a linear rate of convergence with complexity $\mathcal{O}(\gamma/\mu \log(1/\epsilon))$. We comment on the choice of $\gamma$ in the theorem. For a large family of smooth loss functions, the uniform convergence theory from~\citep{mei2018landscape} suggests that $\gamma = \mathcal{\tilde O}\left(\sqrt{p/n}\right)$, which is expected to be small when the number of local samples is sufficiently larger than feature dimension. This result actually answers Question 2 raised in Section~\ref{ssect:review}. Based on that bound, we choose to set $\gamma=\mathcal{O}(1/\sqrt{n})$ in our numerical study to take better advantage of the statistical correlation of local problems for global optimization.
\end{remark}

\section{Heavy-Ball Acceleration of DANE}
\label{sect:dane_hb_analysis}

We further introduce a simple yet effective momentum acceleration method for DANE based on the classic heavy-ball approach~\citep{polyak1964some}, which has long been acknowledged to work favorably for accelerating first-order methods~\citep{ghadimi2015global,loizou2017linearly,wilson2016lyapunov,zhou2018efficient}.

\subsection{The DANE-HB Algorithm}
As outlined in Algorithm~\ref{alg:dane_hb}, the proposed DANE-HB method shares an almost identical centralized computing architecture to DANE-LS. The key difference is that for local subproblem optimization in the master machine, we first estimate $\tilde w^{(t)} \approx \argmin_w  P^{(t-1)}(w)$, and then compute $w^{(t)} = \tilde w^{(t)} + \beta (w^{(t-1)} - w^{(t-2)})$ as a linear combination of $\tilde w^{(t)}$ and the previous two iterates, where $\beta>0$ is the momentum strength coefficient. It is optional to implement the backtracking line search steps as proposed in Algorithm~\ref{alg:dane_ls} which work well in our numerical study to obtain global convergence, although there is no theoretical guarantee that the difference vector $w^{(t)} - w^{(t-1)}$ should point to a descent direction.
Concerning initialization, the simplest way is to set $w^{(-1)} = w^{(0)} = 0$, i.e., starting the iteration from scratch. Since $F_1(w)$ tends to be close to $F(w)$ in stochastic setting, another reasonable option of initialization is to set $w^{(-1)} = w^{(0)}\approx\argmin_{w} F_1(w)$ which is also expected to be close to the global solution $w^*$.

\begin{algorithm}[tb]\caption{DANE with Heavy-Ball acceleration: DANE-HB ($\gamma, \beta$)}
\label{alg:dane_hb}
\SetKwInOut{Input}{Input}\SetKwInOut{Output}{Output}\SetKw{Initialization}{Initialization}
\Input{Parameters  $\gamma, \beta>0$.}
\Output{$w^{(t)}$.}
\Initialization{Set $w^{(0)}=0$ or $w^{(0)} \approx \argmin_{w} F_1(w)$. Let $w^{(-1)} = w^{(0)}$.}

\For{$t=1, 2, ...$}{
\tcc{\textbf{Global computation on master}}

Compute $\nabla F(w^{(t-1)})=\frac{1}{m}\sum_{j=1}^m \nabla F_j(w^{(t-1)})$;

Estimate $\tilde w^{(t)} $ such that $\|\nabla P^{(t-1)}(\tilde w^{(t)} )\| \le \varepsilon_t$, where $P^{(t-1)}$ is defined by~\eqref{equat:P_t_w};

Compute $w^{(t)} = \tilde w^{(t)} + \beta (w^{(t-1)} - w^{(t-2)})$;

(Optionally) Conduct backtracking line search.

\tcc{\textbf{Local gradient evaluation on workers}}

For each machine $j$, compute $\nabla F_j(w^{(t)})$ and broadcast to the master machine;
}
\end{algorithm}

\subsection{Convergence results for quadratic function}

The following result confirms that the heavy-ball acceleration strategy can improve the communication efficiency of DANE for quadratic problems.
%See Appendix~\ref{append:proof_of_dane_hb_quadratic} for a proof.
\begin{restatable}[Convergence rate of DANE-HB for quadratic function]{theorem}{DANEHBQuadratic}\label{thrm:quadratic_hb}
Assume that the loss function is quadratic. Let $H$ and $H_1$ be the Hessian matrices of the global objective $F$ and local objective $F_1$, respectively. Assume that $\mu I \preceq H \preceq L I$. Set $\beta = \left(1-\sqrt{\frac{\mu}{\mu+2\gamma}}\right)^2$. Given precision $\epsilon>0$, if $\|H_1 - H\| \le \gamma$ and $\varepsilon_t \le \frac{\sqrt{2}(\mu+\gamma)\|\nabla F(w^{(0)})\|}{2L(t+1)^2}\left(1-\frac{1}{2}\sqrt{\frac{\mu}{\mu+2\gamma}}\right)^{t+1}$,  then Algorithm~\ref{alg:dane_hb} will output $w^{(t)}$ satisfying $\|w^{(t)} - w^*\| \le \epsilon$ after
\[
t \ge 2\sqrt{\frac{\mu+2\gamma}{\mu}} \log \left(\frac{2\sqrt{2}c\|w^{(0)} - w^*\|}{\epsilon}\right)
\]
rounds of iteration, where $c$ is a constant relying on $\sqrt{\mu/(\mu + 2\gamma)}$.
\end{restatable}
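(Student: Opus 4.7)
Proof plan for Theorem on DANE-HB for quadratic loss.

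The starting point is to write the iteration as an inexact heavy-ball recursion on the error $\delta_t := w^{(t)} - w^*$. Since $F$ is quadratic we have $\nabla F(w) = H(w - w^*)$, and since $F_1$ is quadratic as well, the exact minimizer of $P^{(t-1)}$ is $w^{(t-1)} - (H_1+\gamma I)^{-1}\nabla F(w^{(t-1)})$. Combined with the stopping condition $\|\nabla P^{(t-1)}(\tilde w^{(t)})\|\le \varepsilon_t$ and strong convexity of $P^{(t-1)}$ (its Hessian equals $H_1+\gamma I \succeq (\mu+\gamma)I$ by $\|H_1-H\|\le\gamma$), the inexact step satisfies $\tilde w^{(t)} - w^{(t-1)} = -(H_1+\gamma I)^{-1}\nabla F(w^{(t-1)}) + e_t$ with $\|e_t\|\le \varepsilon_t/(\mu+\gamma)$. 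Setting $M := I - (H_1+\gamma I)^{-1}H$, the momentum step then yields the master recursion
\[
\delta_t = (M+\beta I)\delta_{t-1} - \beta\,\delta_{t-2} + e_t.
\]

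Next I would pin down the spectrum of $M$. Using $\|H_1-H\|\le \gamma$ one gets $H \preceq H_1+\gamma I \preceq H+2\gamma I$, which, after the symmetrizing similarity $S\mapsto (H_1+\gamma I)^{1/2}S(H_1+\gamma I)^{-1/2}$, shows $M$ is similar to a symmetric PSD matrix whose eigenvalues lie in $[0,\,2\gamma/(\mu+2\gamma)]$. Diagonalizing $M$ in that basis reduces the vector recursion to a family of decoupled scalar recurrences
\[
\xi^{(i)}_t = (\lambda_i+\beta)\xi^{(i)}_{t-1} - \beta\xi^{(i)}_{t-2} + \eta^{(i)}_t, \qquad \lambda_i \in [0, 2\gamma/(\mu+2\gamma)].
\]
The characteristic polynomial $z^2-(\lambda_i+\beta)z+\beta$ has discriminant $(\lambda_i+\beta)^2-4\beta$. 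Setting $\alpha := \sqrt{\mu/(\mu+2\gamma)}$ and $\beta := (1-\alpha)^2$, a direct computation shows $2\sqrt{\beta}-\beta = 2\gamma/(\mu+2\gamma)$, so the discriminant is $\le 0$ for every $\lambda_i$ in the admissible range, with equality exactly at the upper endpoint. Hence both roots have modulus $\sqrt{\beta}=1-\alpha$, and a potential Jordan block appears only at $\lambda_i = 2\gamma/(\mu+2\gamma)$.

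The third step is to solve the scalar recursions. Standard Vandermonde/variation‑of‑parameters formulas for second‑order linear recurrences give
\[
|\xi^{(i)}_t| \le C(t+1)\,(\sqrt{\beta})^t \bigl(|\xi^{(i)}_0|+|\xi^{(i)}_1|\bigr) + \sum_{k=1}^{t} (t-k+1)\,(\sqrt{\beta})^{t-k}\,|\eta^{(i)}_k|,
\]
where the $(t+1)$ factor covers the repeated-root case and $C$ depends only on $\alpha$. Taking norms and using $\|e_k\|\le \varepsilon_k/(\mu+\gamma)$ together with the hypothesized decay $\varepsilon_k \lesssim (\mu+\gamma)L^{-1}(k+1)^{-2}(1-\alpha/2)^{k+1}\|\nabla F(w^{(0)})\|$, the geometric-series tail is dominated by $(1-\alpha/2)^t$ up to an absolute constant since $\sqrt{\beta}=1-\alpha < 1-\alpha/2$; the inserted $(k+1)^{-2}$ absorbs the linear $(t-k+1)$ weight. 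Finally, accounting for the norm distortion from the symmetrizing similarity (which costs a factor $\sqrt{\kappa}$ because $H_1+\gamma I$ has condition number $O(\kappa)$), one gets $\|\delta_t\| \le c\,(1-\alpha/2)^t\|\delta_0\|$ for an explicit constant $c=c(\alpha)$, which yields the claimed $t \ge 2\sqrt{(\mu+2\gamma)/\mu}\log(2\sqrt{2}c\|w^{(0)}-w^*\|/\epsilon)$ by taking $\log$ and using $-\log(1-\alpha/2)\ge \alpha/2$.

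The main obstacle, as I see it, is Step 3: the transfer matrix is non-normal, and at the boundary eigenvalue $\lambda=2\gamma/(\mu+2\gamma)$ the block is genuinely defective, so the naive $\rho(T)^t$ bound is false and a polynomial prefactor is unavoidable. One must therefore carefully manage the Jordan contribution, calibrate the $(t+1)^{-2}$ schedule on $\varepsilon_t$ so that the inexactness does not accumulate faster than the nominal $\sqrt{\beta}^t$ rate, and control the constants introduced by the non-orthogonal similarity. Once this is done cleanly, the remaining steps are routine algebra.
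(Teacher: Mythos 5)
Your proposal is correct and follows the same overall skeleton as the paper's proof: the same three-term error recursion $\delta_t=(M+\beta I)\delta_{t-1}-\beta\delta_{t-2}+e_t$ with $M=I-(H_1+\gamma I)^{-1}H$, the same spectral localization of $(H_1+\gamma I)^{-1}H$ in $[\mu/(\mu+2\gamma),1]$ via the symmetrizing similarity, the same observation that $\beta=(1-\alpha)^2$ with $\alpha=\sqrt{\mu/(\mu+2\gamma)}$ makes every $2\times 2$ companion block have both roots of modulus $\sqrt{\beta}=1-\alpha$ (your identity $2\sqrt{\beta}-\beta=2\gamma/(\mu+2\gamma)$ checks out), and the same use of the $(t+1)^{-2}(1-\alpha/2)^{t+1}$ schedule on $\varepsilon_t$ to keep the Duhamel sum geometric. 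Where you diverge is in the one genuinely delicate step, controlling powers of the non-normal transfer matrix: the paper invokes a soft Gelfand-type lemma ($\rho(A)<1$ implies $\|A^t\|\le c(\rho(A)+\delta)^t$ for an unspecified $c(\delta)$) with $\delta=(1-\rho(A))/2$, which silently absorbs both the defective block at the boundary eigenvalue and the similarity distortion into the constant $c$; you instead decouple into scalar recurrences and track the $(t+1)$ polynomial prefactor explicitly, then trade it against the gap between $1-\alpha$ and $1-\alpha/2$. Your route is more elementary and, importantly, makes $c$ explicit — and in doing so it exposes that $c$ actually picks up a $\sqrt{(L+2\gamma)/\mu}$-type factor from the non-orthogonal change of basis, a dependence the theorem statement (and the paper's proof, via the opaque $c(\delta)$) does not surface. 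One small shared blemish: you justify $\|e_t\|\le\varepsilon_t/(\mu+\gamma)$ by claiming $H_1+\gamma I\succeq(\mu+\gamma)I$ follows from $\|H_1-H\|\le\gamma$ and $H\succeq\mu I$; these hypotheses only give $H_1+\gamma I\succeq\mu I$ (you need $H_1\succeq\mu I$ itself for the stated bound). The paper makes the identical leap, and either bound only perturbs constants, so this is not a substantive gap.
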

%\begin{proof}
%The proof idea is to adapt the Lyapunove analysis of heavy-ball method to the analysis of DANE-LS (see Theorem~\ref{thrm:quadratic_dane}) -- See Appendix~\ref{append:proof_of_dane_hb_quadratic} for a detailed proof.
%\end{proof}
%\begin{remark}
%Assume that $\|H\|\preceq L$. It follows directly that $F(w^{(t)}) - F(w^*) \le \frac{L}{2}\|w^{(t)} - w^*\|^2$. Then based on the above result we can show that the sub-optimality $F(w^{(t)}) - F(w^*) \le \epsilon$ can be guaranteed when $t \ge 2\sqrt{\frac{\mu+2\gamma}{\mu}} \log \left(\frac{2\sqrt{L}c\|w^{(0)} - w^*\|}{\sqrt{\epsilon}}\right)$.
%\end{remark}

The following corollary is the implication of Theorem~\ref{thrm:quadratic_hb} in stochastic setting.
\begin{restatable}{corollary}{DANEHBQuadraticCorol}\label{corol:quadratic_hb}
Assume the conditions in Theorem~\ref{thrm:quadratic_hb} hold and $\|\nabla^2 f(w; x_{i}, y_{i})\|\le L$ for all $i\in[N]$. Then for any $\delta\in(0,1)$, with probability at least $1-\delta$ over the samples drawn to construct $F_1$, Algorithm~\ref{alg:dane_hb} with $\gamma=L\sqrt{\frac{32\log(p/\delta)}{n}}$ will attain estimation error $\|w^{(t)} - w^*\| \le \epsilon$ after
\[
t \ge \mathcal{\tilde O} \left(\frac{\sqrt{\kappa}}{n^{1/4}}\log^{1/4}\left(\frac{p}{\delta}\right) \log \left(\frac{1}{\epsilon}\right)\right)
\]
rounds of iteration.
\end{restatable}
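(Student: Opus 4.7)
The plan is to translate the deterministic guarantee of Theorem~\ref{thrm:quadratic_hb} into a high-probability bound by controlling the Hessian discrepancy $\|H_1 - H\|$ via matrix concentration.

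First I would apply a matrix concentration inequality (matrix Hoeffding/Bernstein) to $H_1$. Because the paper assumes training samples are drawn uniformly at random and evenly distributed across machines, the $n$ per-sample Hessians composing $H_1 = \frac{1}{n}\sum_{i=1}^n \nabla^2 f(w; x_{1i},y_{1i})$ are i.i.d.\ (or exchangeable) symmetric matrices each with spectral norm bounded by $L$. Matrix Bernstein/Hoeffding then yields, with probability at least $1-\delta$ over the sampling used to form $F_1$,
\[
\|H_1 - H\| \;\le\; L\sqrt{\frac{32\log(p/\delta)}{n}} \;=:\; \gamma,
\]
which matches the prescribed choice of $\gamma$ in the corollary and supplies the deterministic hypothesis $\|H_1 - H\| \le \gamma$ required by Theorem~\ref{thrm:quadratic_hb}.

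Second, I would substitute this $\gamma$ into the iteration complexity of Theorem~\ref{thrm:quadratic_hb}, namely $t \ge 2\sqrt{(\mu+2\gamma)/\mu}\,\log\!\big(2\sqrt{2}c\|w^{(0)}-w^*\|/\epsilon\big)$. The sample-size threshold of interest is $\gamma \gtrsim \mu$ (otherwise the bound is trivially $\mathcal{O}(\log(1/\epsilon))$ and even stronger than claimed); in that regime
\[
\sqrt{\frac{\mu+2\gamma}{\mu}} \;\le\; \sqrt{\frac{3\gamma}{\mu}} \;=\; \sqrt{\frac{3L}{\mu}}\cdot\left(\frac{32\log(p/\delta)}{n}\right)^{1/4} \;=\; \mathcal{\tilde O}\!\left(\frac{\sqrt{\kappa}}{n^{1/4}}\log^{1/4}(p/\delta)\right).
\]
The logarithmic factor in the bound absorbs $\log c$ and the residual $\log(\|w^{(0)}-w^*\|/\epsilon)$; since $c$ is bounded by a polynomial in $\sqrt{\mu/(\mu+2\gamma)}$, the contribution of $\log c$ amounts to $\log\log$ terms in $\kappa$, $n$, and $\log(p/\delta)$, which are hidden by the $\mathcal{\tilde O}$ notation as defined in the paper.

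The main obstacle I anticipate is the matrix concentration step: one must verify that the sampling model (random partition across machines) yields the independence/exchangeability needed to apply matrix Hoeffding, and pick the specific deviation constant so that the resulting high-probability bound matches exactly $\gamma = L\sqrt{32\log(p/\delta)/n}$ rather than a looser quantity. Beyond this, the computation is essentially a substitution accompanied by the elementary simplification $(\mu+2\gamma)/\mu \le 3\gamma/\mu$ when $\gamma \ge \mu$, followed by noting that all remaining multiplicative constants and lower-order logarithms are swallowed by $\mathcal{\tilde O}$.
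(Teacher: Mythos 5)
Your proposal is correct and follows essentially the same route as the paper: the paper's proof simply invokes its Lemma~\ref{lemma:hessian_close} (a matrix-concentration bound giving $\|H_1-H\|\le L\sqrt{32\log(p/\delta)/n}$ with probability $1-\delta$, which is exactly the concentration step you re-derive) and then substitutes this $\gamma$ into Theorem~\ref{thrm:quadratic_hb}. Your additional care about the regime $\gamma\gtrsim\mu$ and the absorption of $\log c$ into $\mathcal{\tilde O}$ only makes explicit what the paper leaves implicit.
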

%\begin{proof}
%Since $H(w)\equiv H$ and $H_1(w)\equiv H_1$ in the quadratic case, we know from Lemma~\ref{lemma:hessian_close} that $\|H_1 - H\|\le \gamma=L\sqrt{\frac{32\log(p/\delta)}{n}}$ holds with probability at least $1-\delta$. By invoking Theorem~\ref{thrm:quadratic_hb} we get the desired bound.
%\end{proof}
\begin{remark}
The result shows that  in the quadratic case, DANE-HB is able to match the communication complexity lower bounds (up to logarithmic factors) proved in~\citep{arjevani2015communication}. Similar guarantees for quadratic function have also been proved for AIDE and MP-DANE with acceleration achieved via applying the catalyst scheme~\citep{lin2015universal}.
\end{remark}

\subsection{Convergence results for strongly convex functions}

For a broad class of strongly convex functions with Lipschitz continuous Hessian, we show in the following theorem that in a vicinity of the global minimizer, DANE-HB enjoys the same appealing rate of convergence as established for the ridge regression problems.
\begin{restatable}[Local convergence rate of DANE-HB]{theorem}{DANEHBLocalrate}\label{thrm:local_general_hb}
Assume that $F$ and $F_1$ are $\mu$-strongly-convex and has $\nu$-LH. Assume that $\sup_{w} \|\nabla^2 F_1(w) - \nabla^2 F(w)\| \le \gamma$. Choose $\beta = \left(1-\sqrt{\mu/(\mu+2\gamma)}\right)^2$. Let $\tau= \left\lceil 2\sqrt{(\mu+2\gamma)/\mu} \log(2c)  \right\rceil$ in which $c$ is a constant dependent on $\sqrt{\mu/(\mu+2\gamma)}$. Assume that $\varepsilon_t \le \min\left\{(\gamma+\mu)^2, \|\nabla F(w^{(t-1)})\|^2/L^2\right\}$. Given precision $\epsilon>0$, if $\max_{-1\le i \le \tau-1}\|w^{(i)} - w^*\| \le \frac{\gamma+\mu}{4(6\nu+1)\sqrt{2}c\tau}$, then Algorithm~\ref{alg:dane_hb} will output $w^{(t)}$ satisfying $\|w^{(t)} - w^*\| \le \epsilon$ after
\[
t \ge 4\tau \log \left(\frac{\gamma+\mu}{4(6\nu+1)c\tau } \left(\frac{1}{\epsilon}\right)\right)
\]
rounds of iteration.
\end{restatable}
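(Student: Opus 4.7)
The plan is to reduce the non-quadratic analysis to the quadratic case established in Theorem~\ref{thrm:quadratic_hb} via Taylor linearisation around the minimizer $w^*$, and then run a blockwise induction over epochs of length $\tau$. Define the centered Hessians $H:=\nabla^2 F(w^*)$ and $H_1:=\nabla^2 F_1(w^*)$, which by hypothesis satisfy $\mu I\preceq H$, $\mu I \preceq H_1$, and $\|H_1-H\|\le \gamma$. I would first use the $\nu$-Lipschitz Hessian property to write, for any iterate $w^{(t-1)}$ near $w^*$,
\[
\nabla F(w^{(t-1)})=H(w^{(t-1)}-w^*)+r_t,\qquad \nabla F_1(\tilde w^{(t)})=H_1(\tilde w^{(t)}-w^*)+\tilde r_t,
\]
with $\|r_t\|\le(\nu/2)\|w^{(t-1)}-w^*\|^2$ and $\|\tilde r_t\|\le(\nu/2)\|\tilde w^{(t)}-w^*\|^2$. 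Substituting into the inexact stationarity condition $\|\nabla P^{(t-1)}(\tilde w^{(t)})\|\le\varepsilon_t$ yields
\[
(H_1+\gamma I)(\tilde w^{(t)}-w^*)=(H_1+\gamma I - H)(w^{(t-1)}-w^*)+\xi_t,
\]
where the error term $\xi_t$ collects $\tilde r_t$, $r_t$, and the inexactness residual, and obeys
\[
\|\xi_t\|\le\tfrac{\nu}{2}\bigl(\|\tilde w^{(t)}-w^*\|^2+\|w^{(t-1)}-w^*\|^2\bigr)+\varepsilon_t.
\]

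Next I would cast the heavy-ball update $w^{(t)}=\tilde w^{(t)}+\beta(w^{(t-1)}-w^{(t-2)})$ in the block-vector form $z_t:=[(w^{(t)}-w^*)^\top,(w^{(t-1)}-w^*)^\top]^\top$, obtaining $z_t=Mz_{t-1}+\tilde\xi_t$ where $M$ is exactly the HB iteration matrix analyzed in the quadratic Theorem~\ref{thrm:quadratic_hb} (with $A:=(H_1+\gamma I)^{-1}(H_1+\gamma I-H)$ playing the role of the quadratic iteration operator) and $\tilde\xi_t$ is a block vector with $\|\tilde\xi_t\|\lesssim\|(H_1+\gamma I)^{-1}\xi_t\|\le(\gamma+\mu)^{-1}\|\xi_t\|$. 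From the proof of the quadratic case, $\|M^\tau\|\le c(1-\tfrac12\sqrt{\mu/(\mu+2\gamma)})^\tau$, and the definition of $\tau$ is precisely tuned so that this block contraction factor is at most $1/2$.

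Iterating the perturbed linear recursion over one epoch of $\tau$ steps gives
\[
\|z_{t+\tau}\|\le \tfrac12\|z_t\|+\sum_{s=1}^{\tau}\|M^{\tau-s}\|\,\|\tilde\xi_{t+s}\|.
\]
Using the bound on $\|\xi_t\|$ together with $\sum_{s=0}^{\tau-1}\|M^s\|\le 2c\tau$ (a crude but sufficient estimate from the quadratic analysis), and the tolerance $\varepsilon_t\le\|\nabla F(w^{(t-1)})\|^2/L^2\lesssim\|w^{(t-1)}-w^*\|^2$, the accumulated perturbation is controlled by
\[
C(6\nu+1)\,c\,\tau\,(\gamma+\mu)^{-1}\!\!\max_{t\le s\le t+\tau}\!\|z_s\|^2
\]
for an absolute constant $C$. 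The hypothesis $\max_{-1\le i\le\tau-1}\|w^{(i)}-w^*\|\le\frac{\gamma+\mu}{4(6\nu+1)\sqrt{2}c\tau}$ is designed so that this second-order term is bounded by $\tfrac14\|z_t\|$, giving a net per-epoch contraction of the form $\|z_{t+\tau}\|\le\tfrac34\|z_t\|$. A straightforward induction on the epoch index then preserves the smallness hypothesis and yields geometric decay at rate $(3/4)^k$ after $k$ epochs; solving $(3/4)^k\|z_0\|\le\epsilon$ for $k$ gives the announced $t\ge 4\tau\log(\cdot/\epsilon)$ bound.

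The main obstacle will be the last step: showing inductively that the local-ball condition is preserved across an arbitrary number of epochs, while simultaneously arguing that the sub-quadratic error $\nu\|z\|^2$ remains dominated by $(\gamma+\mu)\cdot\|z\|/(c\tau)$. This is delicate because, unlike for DANE-LS, the heavy-ball iteration is not monotone and one cannot bound $\|z_s\|$ in the middle of an epoch by $\|z_t\|$ at the epoch boundary; I would handle this by tracking the running maximum $\max_{s\le t}\|z_s\|$ and showing it is itself non-increasing once contraction dominates, using the transition-matrix bound $\|M^s\|\le c$ for all $s\ge 0$ inside a single epoch. The bookkeeping of constants (the factor $6\nu+1$, the $\sqrt{2}$, the factor $4$ in the iteration count) is driven entirely by this worst-case amplification inside an epoch and the $1/2$ contraction across it.
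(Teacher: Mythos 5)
Your proposal is correct and follows essentially the same route as the paper's proof: linearize the gradients around $w^*$ via the $\nu$-Lipschitz Hessian property, write the heavy-ball iteration as a perturbed block-linear recursion $u^{(t)}=Au^{(t-1)}+r^{(t-1)}$ with $\|r^{(t-1)}\|\le\frac{6\nu+1}{\gamma+\mu}\|u^{(t-1)}\|^2$, invoke the spectral-radius bound $\rho(A)\le 1-\sqrt{\mu/(\mu+2\gamma)}$ with $\|A^\tau\|\le 1/2$, and run an epoch-wise induction at rate $(3/4)^k$. The within-epoch non-monotonicity you flag as the main obstacle is resolved exactly as you suggest — the paper's induction maintains the bound $\max_{0\le i\le\tau-1}\|u^{(k\tau+i)}\|\le\frac{1}{4c\tau\vartheta}(3/4)^k$ for \emph{every} index inside each epoch using the uniform bound $\|A^i\|\le c$.
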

%\begin{proof}
%The proof mimics that of Lemma~\ref{lemma:local_unit} with proper adaptation to the heave-ball momentum formulation. A detailed proof is given in Appendix~\ref{append:proof_of_dane_hb_local}.
%\end{proof}
%A proof of this result is given in Appendix~\ref{append:proof_of_dane_hb_local}.
We comment on the related bounds of DANE-HB and AIDE for non-quadratic convex problems. It was proved in~\citep[Theorem 6]{reddi2016aide} that AIDE converges at the rate of $\mathcal{O}\left(\sqrt{\kappa}\log\left(1/\epsilon\right)\right)$ for non-quadratic strongly convex functions with $\gamma=\mathcal{O}(L)$, and that result is global. In contrast, we obtain the $\mathcal{O} \left(\sqrt{\gamma/\mu}\log \left(1/\epsilon\right)\right)$ bound in Theorem~\ref{thrm:local_general_hb} for arbitrary $\gamma$ as long as the $\gamma$-related condition $\sup_{w} \|\nabla^2 F_1(w) - \nabla^2 F(w)\| \le \gamma$ holds, and hence is tighter when $\gamma\ll L$ (see Remark~\ref{rmk:gamma_tight}). However, this bound is only provable in a local area around the global minimizer.

\subsection{Extension for learning with linear models}
\label{sect:dane_hb_lm_analysis}

So far, DANE-HB has been shown to converge globally for the quadratic objective, whilst for non-quadratic problems it can merely be shown to converge in a vicinity of the global minimizer. In this section, we move to study a special class of learning problems with linear regression or prediction models. More specifically, we consider the loss function of the form
\[
f(w; x_i, y_i) = l(w^\top x_i, y_i) + \frac{\mu}{2}\|w\|^2,
\]
where $l(w^\top x_i;y_i)$ is a convex function that measures the linear regression/prediction loss of $w$ at data point $(x_i,y_i)$ and $\mu>0$ controls the strength of $\ell_2$-regularization. For example, the least square loss $l(w^\top x_i,y_i)=\frac{1}{2}(y_i - w^\top x_i)^2$ is used in linear regression and the logistic loss $l(w^\top x_i, y_i)=\log\left(1+\exp(-y_{i}w^\top x_{i})\right)$ in logistic binary classification. Then we can reexpress problem~\eqref{eqn:general}
\[
\min_{w\in \mathbb{R}^p} F(w)= \tilde F(w) + \frac{\mu}{2}\|w\|^2,
\]
where $\tilde F(w):=\frac{1}{N}\sum_{i=1}^N l(w^\top x_i, y_i)$.
For such a problem of learning with linear models, we will be able to show that with proper modification, DANE-HB actually converges globally at a rate similar to that of the quadratic problem.

\textbf{The DANE-HB-LM algorithm.} The method of DANE-HB-LM (DANE-HB for linear models) is formally stated in Algorithm~\ref{alg:dane_hb_lm}. The idea behind the method is fairly straightforward: at each iterate $w^{(t-1)}$, we first construct a quadratic approximation $Q^{(t-1)}(w)$ to the original problem around $w^{(t-1)}$ as in~\eqref{eqn:quadratic_approx} and then apply DANE-HB to optimize $Q^{(t-1)}(w)$ in a distribute fashion. Specially, when $\beta=1$, DANE-HB-LM reduces to a variant of plain DANE for learning with linear models.

\begin{algorithm}[tb]\caption{DANE-HB for Linear Models: DANE-HB-LM($\gamma, \beta, \ell$)}
\label{alg:dane_hb_lm}
\SetKwInOut{Input}{Input}\SetKwInOut{Output}{Output}\SetKw{Initialization}{Initialization}
\Input{Parameters $\gamma, \beta, \ell>0$. Typically $\gamma=\mathcal{O}(1/\sqrt{n})$.}
\Output{$w^{(t)}$.}
\Initialization{Set $w^{(0)}=0$.}

\For{$t=1, 2, ...$}{

(S1) Construct a quadratic approximation to $F$ at $w^{(t-1)}$ as:
\begin{equation}\label{eqn:quadratic_approx}
\begin{aligned}
&Q^{(t-1)}(w)\\
:=& \tilde F(w^{(t-1)}) + \langle \nabla \tilde F(w^{(t-1)}), w - w^{(t-1)}\rangle + \frac{1}{2} (w-w^{(t-1)})^\top H (w - w^{(t-1)}) + \frac{\mu}{2}\|w\|^2,
\end{aligned}
\end{equation}
where $H = \frac{\ell X X^\top}{N}$.

(S2) Estimate $w^{(t)}=\text{DANE-HB}(\gamma,\beta)$ by applying DANE-HB to $Q^{(t-1)}(w)$ with a warm-start initialization $w^{(t-1)}$ such that
\[
Q^{(t-1)}(w^{(t)}) \le \min_w Q^{(t-1)}(w) + \varepsilon_t.
\]
}
\end{algorithm}

 \textbf{Convergence analysis.} Let us denote $X_1$ the subset of data samples associated with $F_1$ that stored on the master machine. The following is our main result on the convergence rate of DANE-HB-LM.

\begin{restatable}[Convergence of DANE-HB-LM]{theorem}{DANEHBLMConvergence}\label{thrm:global_dane_hb_lm}
Assume that the univariate functions $l_i$ are $\ell$-smooth and $\sigma$-strongly convex, and $F$ is $L$-smooth. Let $H=\frac{\ell}{N}XX^\top + \mu I$ and $H_1 = \frac{\ell}{n}X_1 X_1^\top + \mu I$. Choose $\beta = \left(1-\sqrt{\frac{\mu}{\mu+2\gamma}}\right)^2$. If $\|H_1 - H\| \le \gamma$ and $\varepsilon_t \le \frac{\sigma\mu}{4\ell L^2}\|\nabla F(w^{(t-1)})\|^2$ , then Algorithm~\ref{alg:dane_hb_lm} will output solution $w^{(t)}$ with sub-optimality $F(w^{(t)}) - F(w^*) \le \epsilon$ after
\[
t \ge \frac{\ell}{\sigma} \log \left(\frac{2 (F(w^{(0)}) - F(w^*))}{\epsilon}\right)
\]
rounds of outer-loop iteration and
\[
\mathcal{O}\left(\frac{\ell }{\sigma}\sqrt{\frac{\gamma}{\mu}} \log^2 \left(\frac{1}{\epsilon}\right)\right)
\]
rounds of inner-loop iteration of DANE-HB.
\end{restatable}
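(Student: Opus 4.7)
The plan is to analyze Algorithm~\ref{alg:dane_hb_lm} as a two-level scheme: an outer majorization-minimization on $F$, in which each subproblem $Q^{(t-1)}$ is a quadratic upper bound on $F$ at $w^{(t-1)}$, and an inner DANE-HB call that inexactly minimizes this quadratic. The key structural observation is that, because $l''_i(\cdot) \in [\sigma, \ell]$, the Hessian of $F$ sandwiches cleanly between two scalings of $H := \frac{\ell}{N}XX^\top + \mu I$, namely $\frac{\sigma}{\ell} H \preceq \nabla^2 F(w) \preceq H$ for every $w$. Together with $Q^{(t-1)}(w^{(t-1)}) = F(w^{(t-1)})$ and $\nabla Q^{(t-1)}(w^{(t-1)}) = \nabla F(w^{(t-1)})$, the upper bound yields the global majorization $F(w) \le Q^{(t-1)}(w)$, and the lower bound gives a relative-strong-convexity inequality $F(w) \ge F(w^{(t-1)}) + \langle \nabla F(w^{(t-1)}), w - w^{(t-1)}\rangle + \frac{\sigma}{2\ell}(w - w^{(t-1)})^\top H (w - w^{(t-1)})$ with constant $\sigma/\ell$.

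For the outer-loop contraction, let $w_Q^\star = \arg\min_w Q^{(t-1)}(w) = w^{(t-1)} - H^{-1}\nabla F(w^{(t-1)})$ and set $G_t := \frac{1}{2}\nabla F(w^{(t-1)})^\top H^{-1} \nabla F(w^{(t-1)})$. The majorization combined with $Q^{(t-1)}(w^{(t)}) \le \min Q^{(t-1)} + \varepsilon_t$ gives $F(w^{(t)}) - F(w^*) \le F(w^{(t-1)}) - F(w^*) - G_t + \varepsilon_t$, while minimizing the relative-strong-convexity lower bound over $w$ yields $F(w^*) \ge F(w^{(t-1)}) - \frac{\ell}{\sigma} G_t$, i.e.\ $G_t \ge \frac{\sigma}{\ell}(F(w^{(t-1)}) - F(w^*))$. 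Combining the two gives $F(w^{(t)}) - F(w^*) \le (1 - \frac{\sigma}{\ell})(F(w^{(t-1)}) - F(w^*)) + \varepsilon_t$. The prescribed $\varepsilon_t \le \frac{\sigma\mu}{4\ell L^2}\|\nabla F(w^{(t-1)})\|^2 \le \frac{\sigma}{2\ell}(F(w^{(t-1)}) - F(w^*))$ (using $\|\nabla F\|^2 \le 2L(F - F^*)$ and $\mu \le L$) is absorbed into the geometric factor, producing a one-step rate of $1 - \sigma/(2\ell)$ and hence an outer-iteration count of $\mathcal{O}((\ell/\sigma)\log(1/\epsilon))$, matching the bound in the theorem.

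For the inner count, $Q^{(t-1)}$ is a $\mu$-strongly convex quadratic with global Hessian $H$ and master Hessian $H_1 = \frac{\ell}{n}X_1X_1^\top + \mu I$ satisfying $\|H_1 - H\| \le \gamma$ by hypothesis, so Theorem~\ref{thrm:quadratic_hb} applies with the prescribed $\beta = (1-\sqrt{\mu/(\mu+2\gamma)})^2$. Warm-starting DANE-HB at $w^{(t-1)}$, the initial distance is $\|w^{(t-1)} - w_Q^\star\| = \|H^{-1}\nabla F(w^{(t-1)})\| \le \|\nabla F(w^{(t-1)})\|/\mu$, and via $L$-smoothness of $Q^{(t-1)}$ the function-value tolerance $\varepsilon_t$ is ensured by an iterate-distance tolerance of order $\sqrt{\varepsilon_t/L}$. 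Plugging these quantities into Theorem~\ref{thrm:quadratic_hb} gives $\mathcal{O}(\sqrt{\gamma/\mu}\log(1/\epsilon))$ inner iterations per outer call, once one accounts for the fact that $\|\nabla F(w^{(t-1)})\|$ shrinks to $\Theta(\sqrt{\epsilon})$ by the end of the outer sweep, so the worst-case log ratio in Theorem~\ref{thrm:quadratic_hb} is of order $\log(1/\epsilon)$. Multiplying by the $\mathcal{O}((\ell/\sigma)\log(1/\epsilon))$ outer count yields the claimed total $\mathcal{O}((\ell/\sigma)\sqrt{\gamma/\mu}\log^2(1/\epsilon))$ inner iterations.

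The main technical obstacle is a careful propagation of the inexactness $\varepsilon_t$ through the outer recursion so that the idealized $(1-\sigma/\ell)$ contraction is preserved up to constants, which forces the specific $\|\nabla F(w^{(t-1)})\|^2$ scaling of $\varepsilon_t$ given in the theorem. A secondary step, essentially routine, is to convert the gradient-norm / distance stopping criterion of Theorem~\ref{thrm:quadratic_hb} into the function-value tolerance $\varepsilon_t$ required by Algorithm~\ref{alg:dane_hb_lm}; this is handled by the $\mu$-strong convexity and $L$-smoothness of $Q^{(t-1)}$.
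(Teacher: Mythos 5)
Your proposal is correct and follows essentially the same route as the paper: a majorization--minimization outer loop in which the relative smoothness and relative strong convexity of $F$ with respect to $H$ (with constants $\ell$ and $\sigma$) yield the recursion $F(w^{(t)})-F(w^*)\le(1-\sigma/\ell)(F(w^{(t-1)})-F(w^*))+\varepsilon_t$, the gradient-squared choice of $\varepsilon_t$ is absorbed to give a $1-\sigma/(2\ell)$ contraction, and Theorem~\ref{thrm:quadratic_hb} supplies the $\mathcal{O}(\sqrt{\gamma/\mu}\log(1/\epsilon))$ inner count per outer step. The only (cosmetic) difference is that you obtain the contraction by explicitly minimizing both quadratic bounds and comparing the model decrease $G_t$ to $\frac{\sigma}{\ell}(F(w^{(t-1)})-F(w^*))$, whereas the paper evaluates $Q^{(t-1)}$ at the convex combination $(1-\eta)w^{(t-1)}+\eta w^*$ with $\eta=\sigma/\ell$; the two derivations produce the identical recursion.
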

\begin{remark}
When the univariate function $l_i$ is second-order differentiable, the condition of $l_i$ being $\ell$-smooth and $\sigma$-strongly convex is identical to $\sigma\le l''_i(\cdot)\le\ell$. For the quadratic loss function $l(w^\top x_i,y_i)=\frac{1}{2}(y_i - w^\top x_i)^2$, we have $\ell=\sigma=1$. For the binary logistic loss $l(w^\top x_i, y_i)=\log\left(1+\exp(-y_{i}w^\top x_{i})\right)$, let us consider without loss of generality that $\forall i$, $\|x_i\|\le 1$ and the domain of interest is bounded, i.e., $\|w\|\le B$ for some $B>0$. Then we can verify that $\ell = 1/4$ and $\sigma=\frac{\exp(B)}{(1+\exp(B))^2}$ which do not scale with sample size.
\end{remark}

We also have the following stochastic variant of Theorem~\ref{thrm:global_dane_hb_lm} which is a direct consequence of applying Lemma~\ref{lemma:hessian_close} to the theorem.
\begin{corollary}\label{corol:local_general_hb_lm}
Assume the conditions in Theorem~\ref{thrm:global_dane_hb_lm} hold and $\|\nabla^2 f(w; x_{i}, y_{i})\|\le L$ for all $i\in[N]$. Then for any $\delta\in(0,1)$, with probability at least $1-\delta$ over the samples drawn to construct $F_1(w)$, Algorithm~\ref{alg:dane_hb_lm} with $\gamma=L\sqrt{\frac{32\log(p/\delta)}{n}}$ will attain estimation error $\|w^{(t)} - w^*\| \le \epsilon$ after
\[
t = \mathcal{\tilde O} \left(\frac{\ell\sqrt{\kappa}}{\sigma n^{1/4}}\log^{1/4}\left(\frac{p}{\delta}\right) \log^2 \left(\frac{1}{\epsilon}\right)\right).
\]
rounds of iteration.
\end{corollary}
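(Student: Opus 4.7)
The plan is to derive Corollary \ref{corol:local_general_hb_lm} as a straightforward stochastic specialization of Theorem \ref{thrm:global_dane_hb_lm}, with the sole technical input being a matrix concentration bound on $\|H_1 - H\|$ that justifies the choice of $\gamma$. The only genuinely probabilistic step is controlling the deviation of the local Hessian $H_1 = \frac{\ell}{n} X_1 X_1^\top + \mu I$ from the global Hessian $H = \frac{\ell}{N} X X^\top + \mu I$; everything else is bookkeeping on complexity exponents.

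First, I would invoke Lemma \ref{lemma:hessian_close} (the matrix-concentration result already used in Corollaries \ref{corol:quadratic_dane} and \ref{corol:quadratic_hb}) applied to the i.i.d.\ samples drawn to construct $F_1$. Under the uniform bound $\|\nabla^2 f(w; x_i, y_i)\| \le L$, this yields
\[
\|H_1 - H\| \le L \sqrt{\frac{32 \log(p/\delta)}{n}}
\]
with probability at least $1-\delta$. Choosing $\gamma = L\sqrt{32\log(p/\delta)/n}$ then verifies the hypothesis $\|H_1 - H\| \le \gamma$ required by Theorem \ref{thrm:global_dane_hb_lm}, and all remaining conditions of the theorem transfer unchanged.

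Second, I would substitute this $\gamma$ into the inner-loop complexity bound from Theorem \ref{thrm:global_dane_hb_lm}. Since $\sqrt{\gamma/\mu} = \sqrt{L/\mu}\bigl(32\log(p/\delta)/n\bigr)^{1/4} = \sqrt{\kappa}\,\bigl(32\log(p/\delta)/n\bigr)^{1/4}$, the total inner-loop communication count becomes
\[
\mathcal{O}\!\left(\frac{\ell}{\sigma}\sqrt{\frac{\gamma}{\mu}}\,\log^2\!\left(\frac{1}{\epsilon}\right)\right) = \mathcal{\tilde O}\!\left(\frac{\ell\sqrt{\kappa}}{\sigma n^{1/4}} \log^{1/4}\!\left(\frac{p}{\delta}\right) \log^2\!\left(\frac{1}{\epsilon}\right)\right),
\]
where the logarithmic factor $\log(p/\delta)$ is absorbed partially into $\mathcal{\tilde O}$. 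Finally, to convert the sub-optimality guarantee $F(w^{(t)}) - F(w^*) \le \epsilon$ of Theorem \ref{thrm:global_dane_hb_lm} into the estimation-error guarantee $\|w^{(t)} - w^*\| \le \epsilon$, I would use $\mu$-strong convexity, which gives $\|w^{(t)} - w^*\|^2 \le 2(F(w^{(t)}) - F(w^*))/\mu$; this only affects constants hidden in $\mathcal{\tilde O}$ and an extra $\log$ factor inside the outer logarithm.

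There is no real obstacle here, since the non-trivial analytical work is already packaged inside Theorem \ref{thrm:global_dane_hb_lm} and Lemma \ref{lemma:hessian_close}. The only mild care required is to verify that the sub-optimality tolerance condition $\varepsilon_t \le \frac{\sigma\mu}{4\ell L^2}\|\nabla F(w^{(t-1)})\|^2$ remains satisfiable with the stochastic choice of $\gamma$, which it does because the condition involves only $\sigma$, $\mu$, $\ell$, $L$ and the current gradient, none of which depend on the particular realization of $F_1$. Combining the high-probability event from the concentration step with the deterministic bound of Theorem \ref{thrm:global_dane_hb_lm} then yields the claimed communication complexity on the same $1-\delta$ event.
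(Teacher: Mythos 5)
Your proposal is correct and follows essentially the same route as the paper, which dispatches this corollary in one line as ``a direct consequence of applying Lemma~\ref{lemma:hessian_close} to the theorem'': verify $\|H_1-H\|\le\gamma$ with probability $1-\delta$ via the matrix concentration bound, substitute $\gamma=L\sqrt{32\log(p/\delta)/n}$ into the inner-loop bound of Theorem~\ref{thrm:global_dane_hb_lm} so that $\sqrt{\gamma/\mu}=\sqrt{\kappa}\,(32\log(p/\delta)/n)^{1/4}$, and absorb constants into $\mathcal{\tilde O}$. Your extra remark on converting sub-optimality to estimation error via $\mu$-strong convexity is a sound (and arguably needed) addition that the paper leaves implicit.
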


\begin{remark}
To our best knowledge, this is the first provable nearly-optimal non-asymptotic bound for DANE-type methods for non-quadratic convex functions. In contrast to the DiSCO method~\citep{zhang2015disco} which has similar communication bound but for self-concordant functions, DANE-HB-LM does not need to access the Hessian matrix of the model which could be huge in high dimensional learning problems.
\end{remark}

\subsection{Comparison against prior methods}

\begin{table}
% increase table row spacing, adjust to taste
%\renewcommand{\arraystretch}{1.0}
\centering
\begin{tabular}{|c|c|c|}
\hline
 Method &  Ridge regression & Logistic regression \\
\hline
GIANT & $\mathcal{O}\left(\log \left(\frac{\kappa}{\epsilon}\right)\right)$ & \xmark \\
DSVRG & $\mathcal{O}\left(\frac{\kappa}{n} \log \left(\frac{1}{\epsilon}\right)+ \frac{\kappa^2}{mn} \log^2 \left(\frac{1}{\epsilon}\right)\right)$ & $\mathcal{O}\left(\frac{\kappa}{n} \log \left(\frac{1}{\epsilon}\right)+ \frac{\kappa^2}{mn} \log^2 \left(\frac{1}{\epsilon}\right)\right)$ \\
DiSCO & $\mathcal{O}\left(\frac{\sqrt{\kappa}}{n^{1/4}}\log \left(\frac{1}{\epsilon}\right)\right)$  & $\mathcal{O}\left(p^{1/4}\left(\frac{\sqrt{\kappa}}{n^{1/4}}\log \left(\frac{1}{\epsilon}\right) + \frac{\kappa^{3/2}}{n^{3/4}}\right)\right)$ \\
DANE-HB (\textbf{ours}) & $\mathcal{O}\left(\frac{\sqrt{\kappa}}{n^{1/4}}\log \left(\frac{1}{\epsilon}\right)\right)$ & Local rate: $\mathcal {O}\left(\sqrt{\frac{\gamma}{\mu}}\log \left(\frac{1}{\epsilon}\right)\right)$  \\
DANE-HB-LM (\textbf{ours}) & $\mathcal{O}\left(\frac{\sqrt{\kappa}}{n^{1/4}}\log \left(\frac{1}{\epsilon}\right)\right)$ & $\mathcal{O}\left(\frac{\sqrt{\kappa}}{n^{1/4}}\log^2 \left(\frac{1}{\epsilon}\right)\right)$ \\
\hline
\end{tabular}
\caption{Comparison of communication complexity for different distributed learning methods. The x-mark ``\xmark'' indicates that the related result was not explicitly reported in the corresponding reference. \label{tab:result_comparison_non_dane}}
\end{table}

In Table~\ref{tab:result_comparison}, we have listed the communication complexity bounds of DANE-LS and DANE-HB and highlighted their advantages over prior DANE-type methods.
To further compare our methods against other distributed learning algorithms beyond DANE, we list in Table~\ref{tab:result_comparison_non_dane} the amount of communication required by DANE-HB/DANE-HB-LM and several representative sample-distributed learning algorithms for solving ridge regression and logistic regression problems. The amount of communication is measured by the number of vectors of size $p$ transmitted among the networked machines. Here we do not count the communication cost spent for distributing data to machines which is required virtually by all the sample-distributed methods. The only exception is DSVRG which, in addition to data allocation, also requires to distribute a random subset of data in order to guarantee unbiased estimation of batch gradient for local optimization. In the following elaboration, we highlight the key observations that can be made from Table~\ref{tab:result_comparison_non_dane}.
\begin{itemize}
  \item \textbf{Results for ridge regression problem.} In this quadratic loss setting, GIANT~\citep{wang2018giant} has logarithmic dependence on the condition number $\kappa$ and hence is superior to the other methods that have polynomial bounds on $\kappa$. However, such an improvement of GIANT is only valid in the well-conditioned regime where the sample size $N$ should be sufficiently larger than feature dimension $p$. In contrast, without assuming $N\gg p$, DiSCO~\citep{zhang2015disco} and our DANE-HB/DANE-HB-LM require $\mathcal{O}\left(\frac{\sqrt{\kappa}}{n^{1/4}}\log \left(\frac{1}{\epsilon}\right)\right)$ rounds of communications with $\mathcal{O}(p)$ bits communicated per round. The amount required by DSVRG~\citep{lee2017distributed,shamir2016without} is $\mathcal{O}\left(\frac{\kappa}{n} \log \left(\frac{1}{\epsilon}\right)+ \frac{\kappa^2}{mn} \log^2 \left(\frac{1}{\epsilon}\right)\right)$ in which the additional term $\frac{\kappa^2}{mn} \log^2 \left(\frac{1}{\epsilon}\right)$ arises from distributing a multi-set sampled with replacement from the entire data, and it certainly dominates the bound when $\kappa=\Omega(m)$. If this is the case, then DSVRG will be comparable or superior to DiSCO/DANE-HB/DANE-HB-LM when $\kappa = \mathcal{O}(n^{1/2}m^{2/3})$, and otherwise the former will be inferior to the latter in communication efficiency.
  \item \textbf{Results for logistic regression problem.} For general smooth loss functions such as logistic loss, GIANT exhibits linear-quadratic local convergence behavior but without any communication complexity bound explicitly provided. The amount of communication required by DSVRG is still $\mathcal{O}\left(\frac{\kappa}{n} \log \left(\frac{1}{\epsilon}\right)+ \frac{\kappa^2}{mn} \log^2 \left(\frac{1}{\epsilon}\right)\right)$. For DiSCO, the communication complexity becomes  $\mathcal{O}\left(p^{1/4}\left(\frac{\sqrt{\kappa}}{n^{1/4}}\log \left(\frac{1}{\epsilon}\right) + \frac{\kappa^{3/2}}{n^{3/4}}\right)\right)$ which tends to be inferior to DSVRG especially in high dimensional settings due to its polynomial dependence on $p$. DANE-HB has the $\mathcal{O}\left(\sqrt{\gamma/\mu}\log \left(1/\epsilon\right)\right)$ bound in a local area around the minimizer, provided that the local and global objectives are $\gamma$-related, i.e., $\sup_{w} \|\nabla^2 F_1(w) - \nabla^2 F(w)\| \le \gamma$. For DANE-HB-LM, the required amount of communications is bounded by $\mathcal{O}\left(\frac{\sqrt{\kappa}}{n^{1/4}}\log^2 \left(\frac{1}{\epsilon}\right)\right)$. In view of the discussions in the previous quadratic case, given that $\kappa=\Omega(m)$, DSVRG will be comparable or superior to DANE-HB-LM when $\kappa = \mathcal{O}(n^{1/2}m^{2/3})$, and otherwise DANE-HB-LM should be more cheaper in communication.
\end{itemize}
To summarize the above discussions, DANE-HB/DANE-HB-LM is able to offer comparable or superior communication efficiency to the considered distributed learning algorithms in high-dimensional and ill-conditioned (e.g., $\kappa = \Omega(n^{1/2}m^{2/3})$) regimes.

\section{Experiments}
\label{sect:experiment}

In this section, we present a numerical study for theory verification and algorithm evaluation. In the theory verification part, we conduct simulations on linear regression and binary logistic regression problems to verify the strong convergence guarantees established for DANE-LS, DANE-HB and DANE-HB-LM. Then in the algorithm evaluation part, we run experiments on synthetic and real data binary logistic regression tasks to evaluate the numerical performance of these alternatives with comparison to several state-of-the-art distributed learning methods. We simulate the distributed environment on a single server powered by dual Intel(R) Xeon(R) E5-2630V4@2.2GHz CPU with multiple logic processors simulating multiple machines. All the considered methods are implemented in Matlab R2018b on Microsoft Windows 10. The local subproblems in DANE are solved by an SVRG solver from SGDLibrary~\citep{kasai2017sgdlibrary}, and the momentum coefficient $\beta$ in DANE-HB is set according to Theorem~\ref{thrm:quadratic_hb}. We replicate each experiment $10$ times over random split of data and report the results in mean-value along with error bar. We initialize $w^{(0)} = 0$ throughout our numerical study.

\subsection{Theory verification}

The following experimental protocol is considered for theory verification study.
 \begin{itemize}
   \item To verify the bounds established in Theorem~\ref{thrm:quadratic_dane} for DANE-LS and in Theorem~\ref{thrm:quadratic_hb} for DANE-HB for quadratic problems, we consider the ridge regression model with loss function $f(w; x_i, y_i)=\frac{1}{2}(y_i-w^{\top}x_i)^2 + \frac{\mu}{2}\|w\|^2$. The feature points $\{x_i\}_{i=1}^N$ are sampled from standard multivariate normal distribution. The responses $\{y_i\}_{i=1}^N$ are generated according to a linear model $y_i=\bar{w}^{\top}x_i+e_i$ with a random Gaussian vector $\bar{w}\in \mathbb{R}^p$ and random Gaussian noise $e_i\sim \mathcal{N}(0,\sigma^2)$.
   \item For DANE-HB-LM, we verify its communication complexity bounds as presented in Theorem~\ref{thrm:global_dane_hb_lm} by applying it to the  binary logistic regression model with loss function $f(w; x_i, y_i)=\log\left(1+\exp(-y_{i}w^\top x_{i})\right)+ \frac{\mu}{2}\|w\|^2$. We consider a simulation task in which each data feature $x_i$ is sampled from standard multivariate normal distribution and its binary label $y_i \in \{-1,+1\}$ is determined by the conditional probability $\mathbb{P}(y_i|x_i; \bar w) = \exp (2y_i \bar w^\top x)/(1+\exp (2y_i \bar w^\top x_i))$ with a Gaussian vector $\bar{w}$.
 \end{itemize}

For our simulation study, we test with feature dimensions $p\in\{200, 500\}$.   We fix $N=10 p$, $\mu = 1/\sqrt{N}$, and study the impact of varying number of machines $m$ and regularization $\gamma$ on the needed rounds of communication to reach sub-optimality $\epsilon=10^{-6}$. We replicate the experiment $10$ times over random split of data.\\

\begin{figure}[h!]
\begin{center}
\mbox{
\subfigure[$p=200$]{
\includegraphics[width=2.1in]{fig/dane_sharp_analysis/p_200/dane_communication_numMachines_root.eps}\hspace{-0.13in}
\includegraphics[width=2.1in]{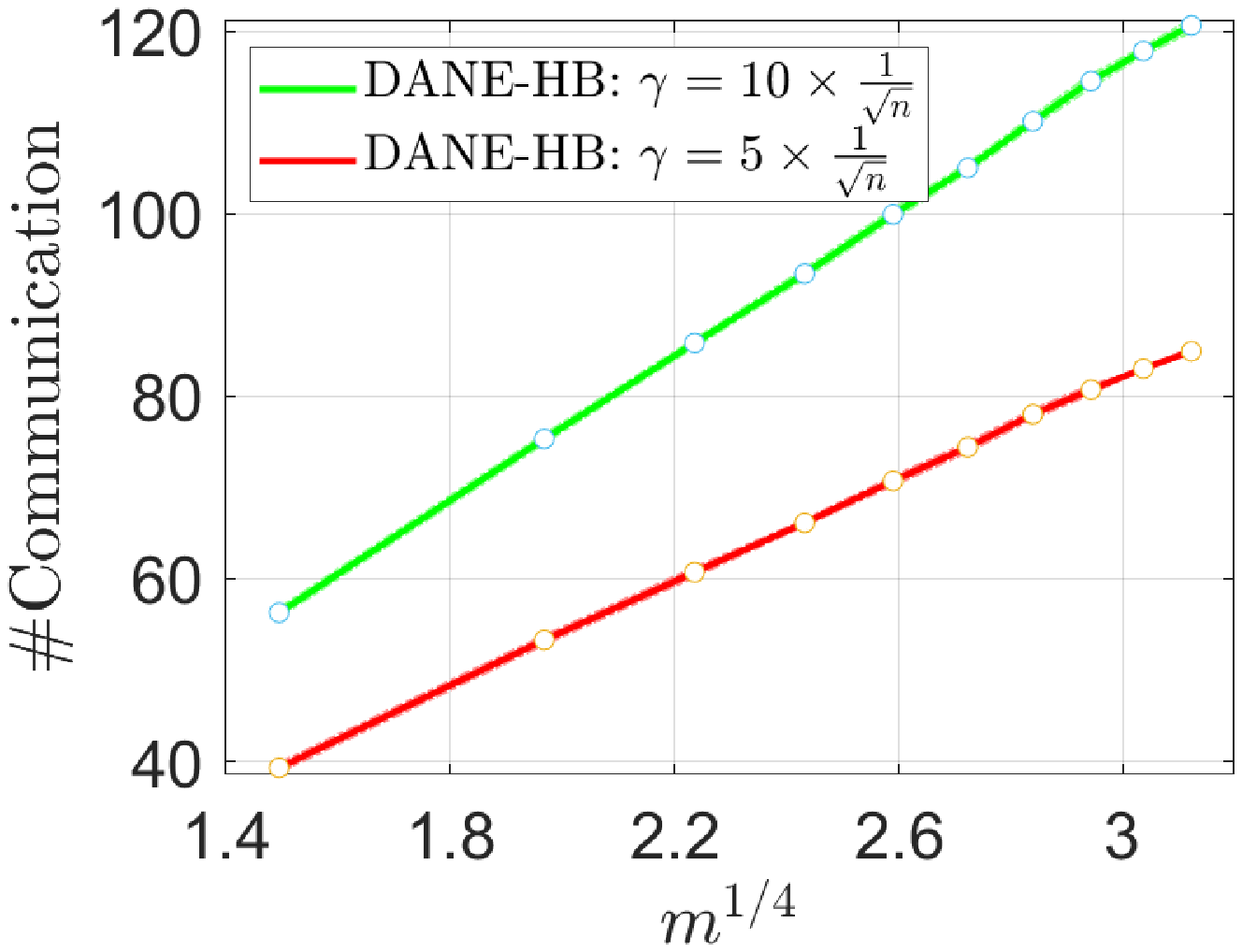}\hspace{-0.13in}
\includegraphics[width=2.1in]{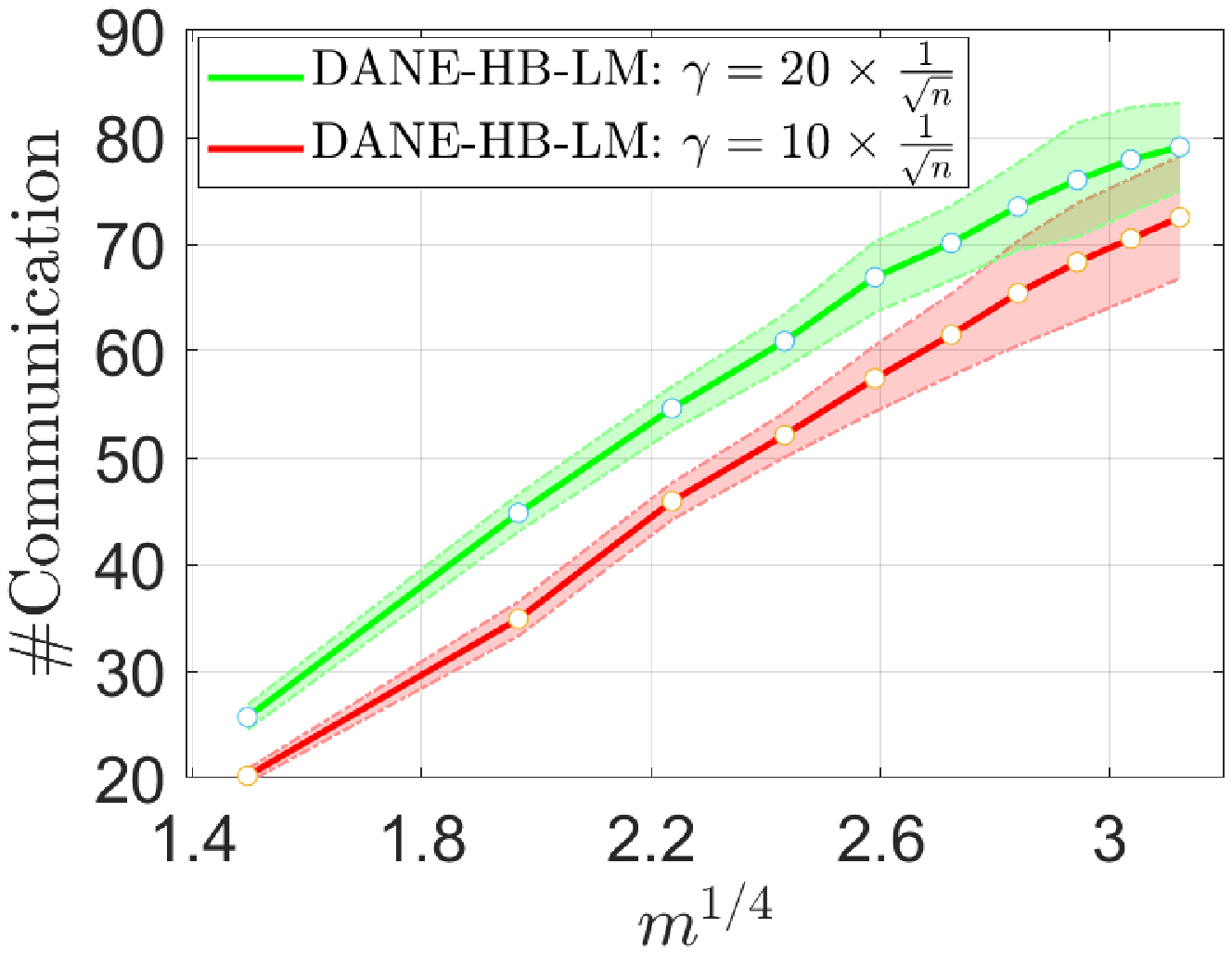}
}}

\mbox{
\subfigure[$p=500$]{
\includegraphics[width=2.1in]{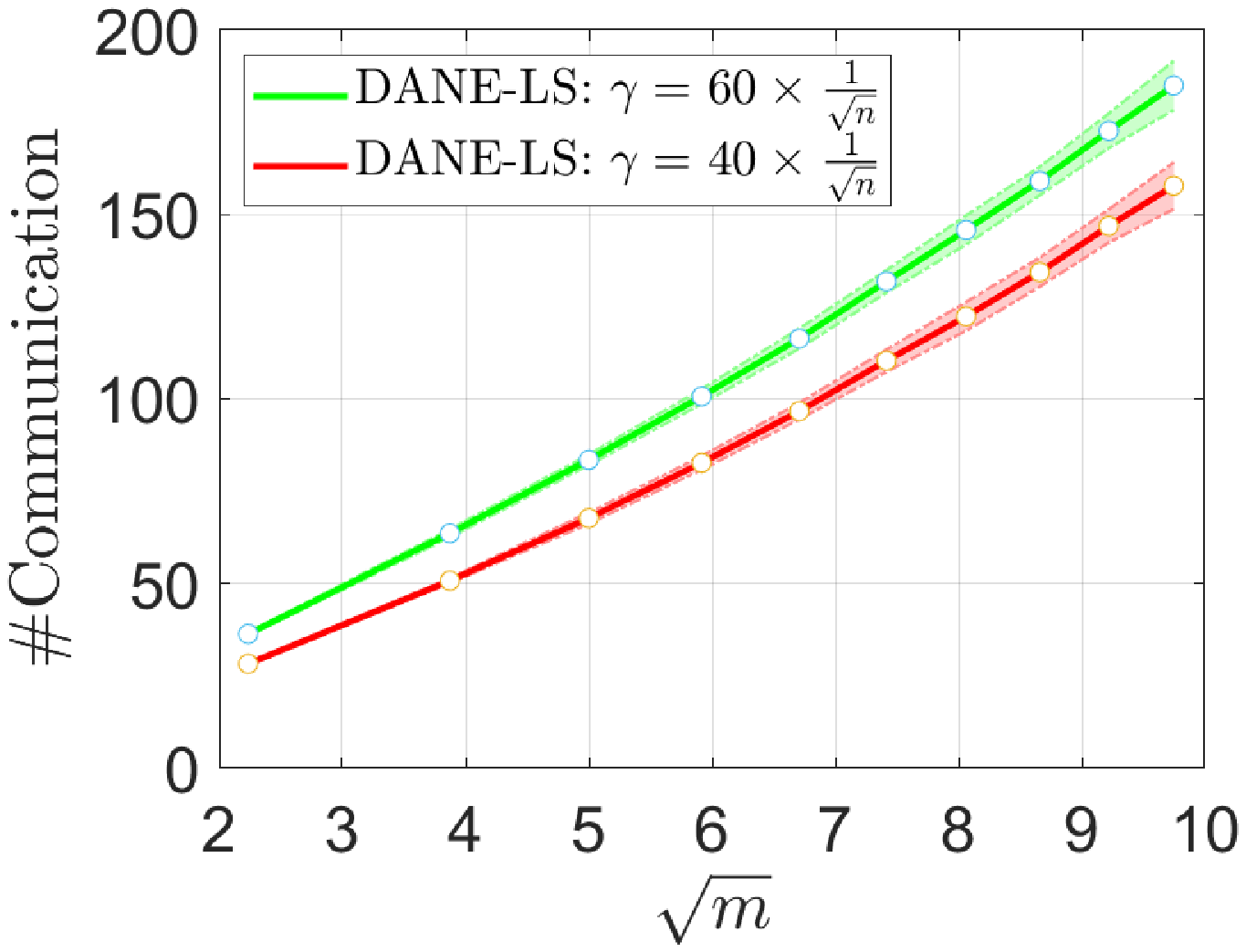}\hspace{-0.13in}
\includegraphics[width=2.1in]{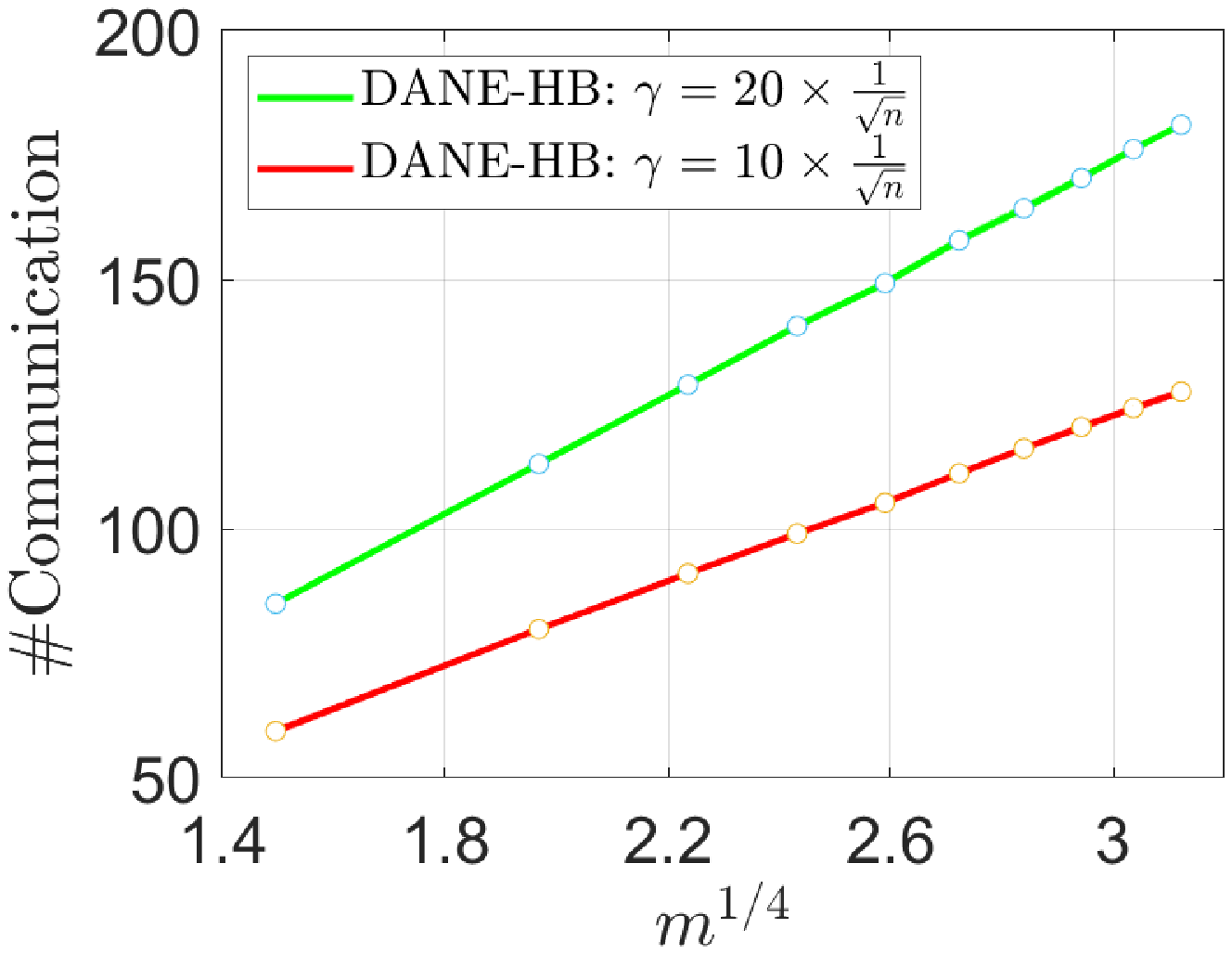}\hspace{-0.13in}
\includegraphics[width=2.1in]{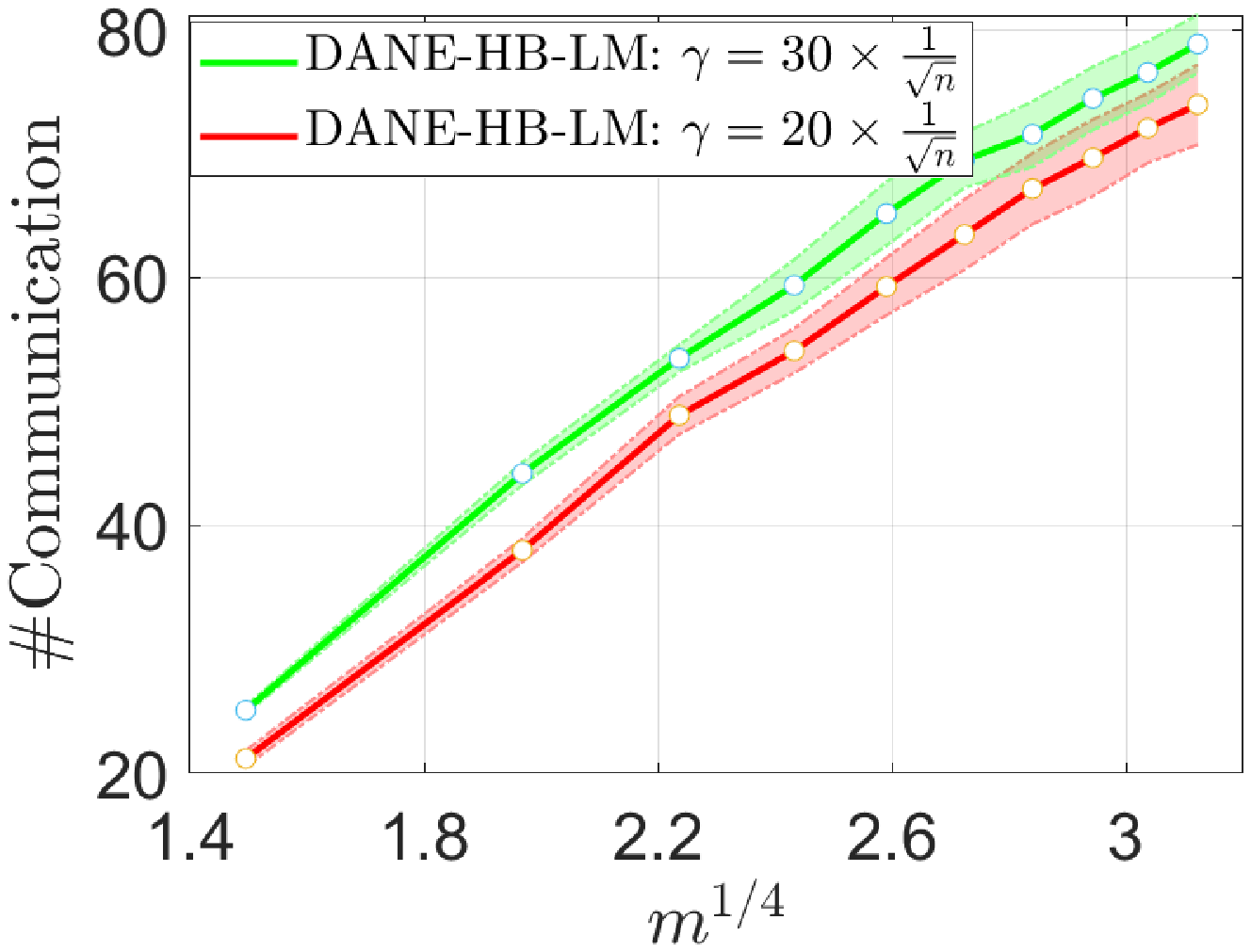}
}}

\end{center}
\vspace{-0.2in}
\caption{Theory verification: the number of communication rounds (y-axis) versus number of machines (x-axis) curves of DANE-LS (left panels) and DANE-HB (middle panels) on a synthetic ridge regression task, and of DANE-HB-LM (right panels) on a synthetic logistic regression task.}
\label{fig:theory_sharpbound_add}
\end{figure}

\textbf{Results.} Figure~\ref{fig:theory_sharpbound_add} shows the evolving curves (error bar shaded in color) of the needed communication rounds as functions of number of machines achieved by DANE-LS (left panel), DANE-HB (middle panel) and DANE-HB-LM (right panel)in the considered setting. Visually speaking, the number of communication rounds scales roughly linearly with respect to $\sqrt{m}$ for DANE-LS and to $m^{1/4}$ for DANE-HB and DANE-HB-LM, under varying values of $\gamma$. We can also observe that smaller $\gamma$ always leads to fewer rounds of communication. These results confirm the theoretical predictions in Theorem~\ref{thrm:quadratic_dane}, Theorem~\ref{thrm:quadratic_hb} and Theorem~\ref{thrm:global_dane_hb_lm}.

\begin{figure}[h]
\begin{center}
\mbox{
\subfigure[Synthetic]{
\includegraphics[width=2.1in]{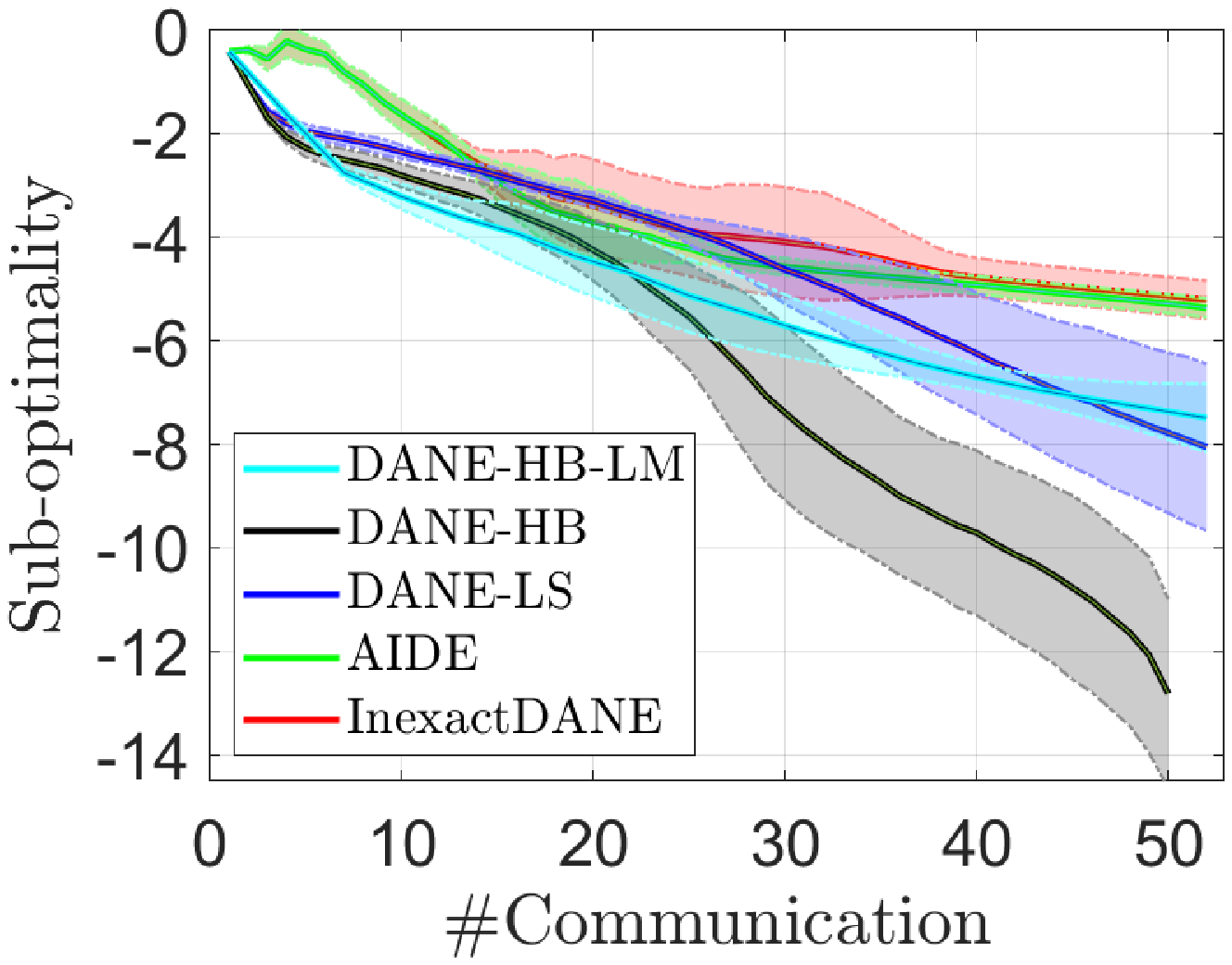}\hspace{-0.13in}
\includegraphics[width=2.1in]{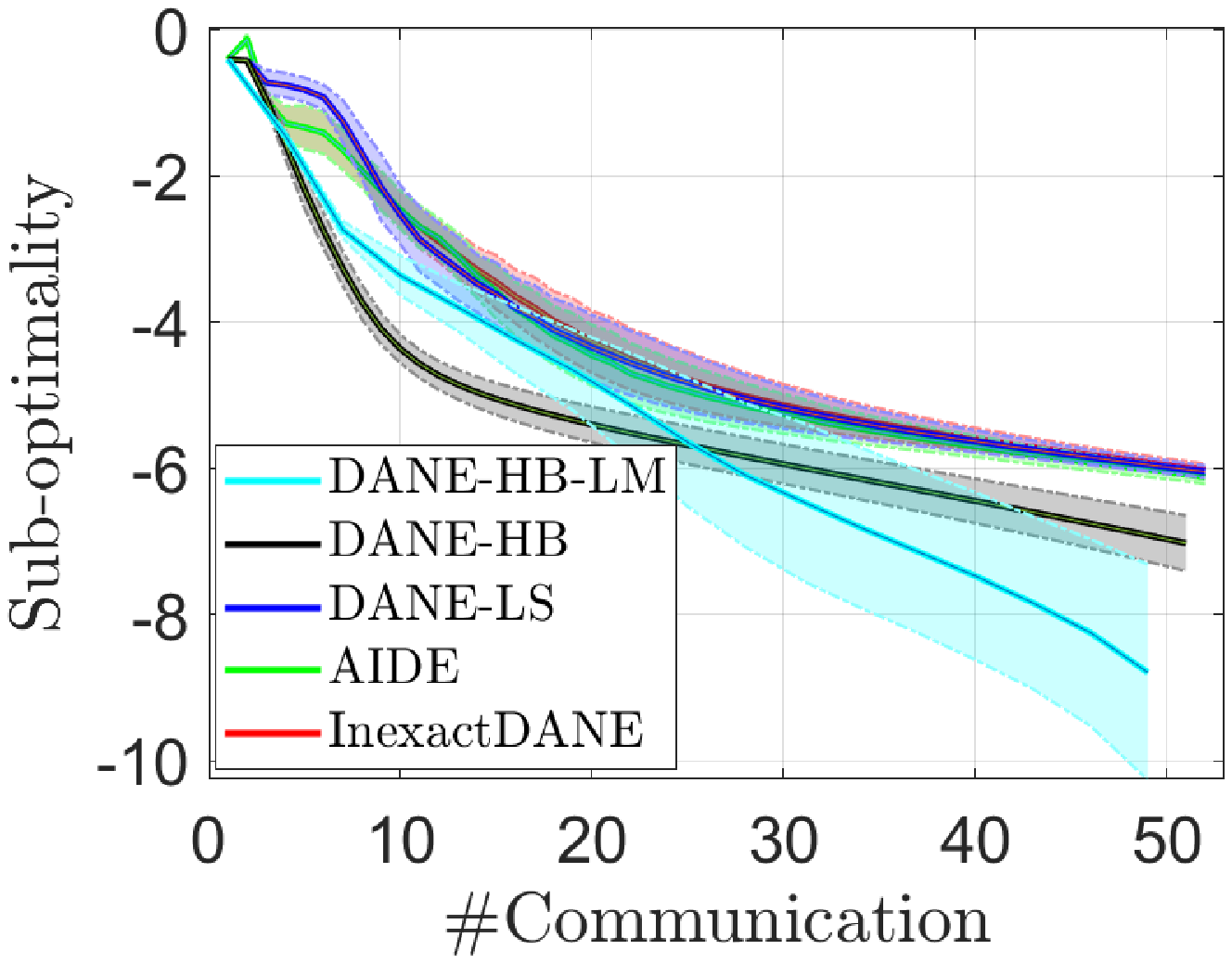}\hspace{-0.13in}
\includegraphics[width=2.1in]{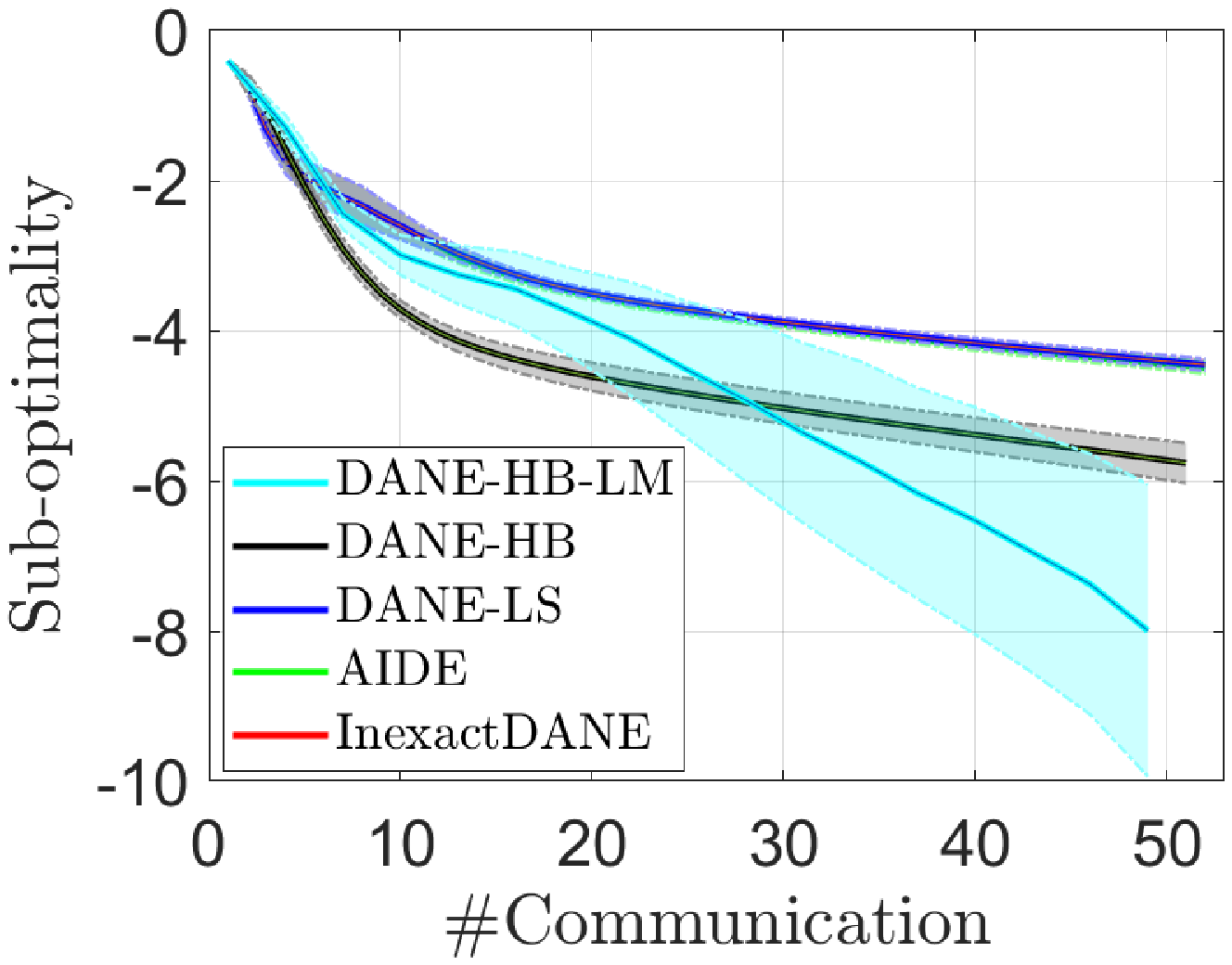}
\label{fig:algorithm_synthetic_add}}
}

\mbox{
\subfigure[\texttt{gisette}]{
\includegraphics[width=2.1in]{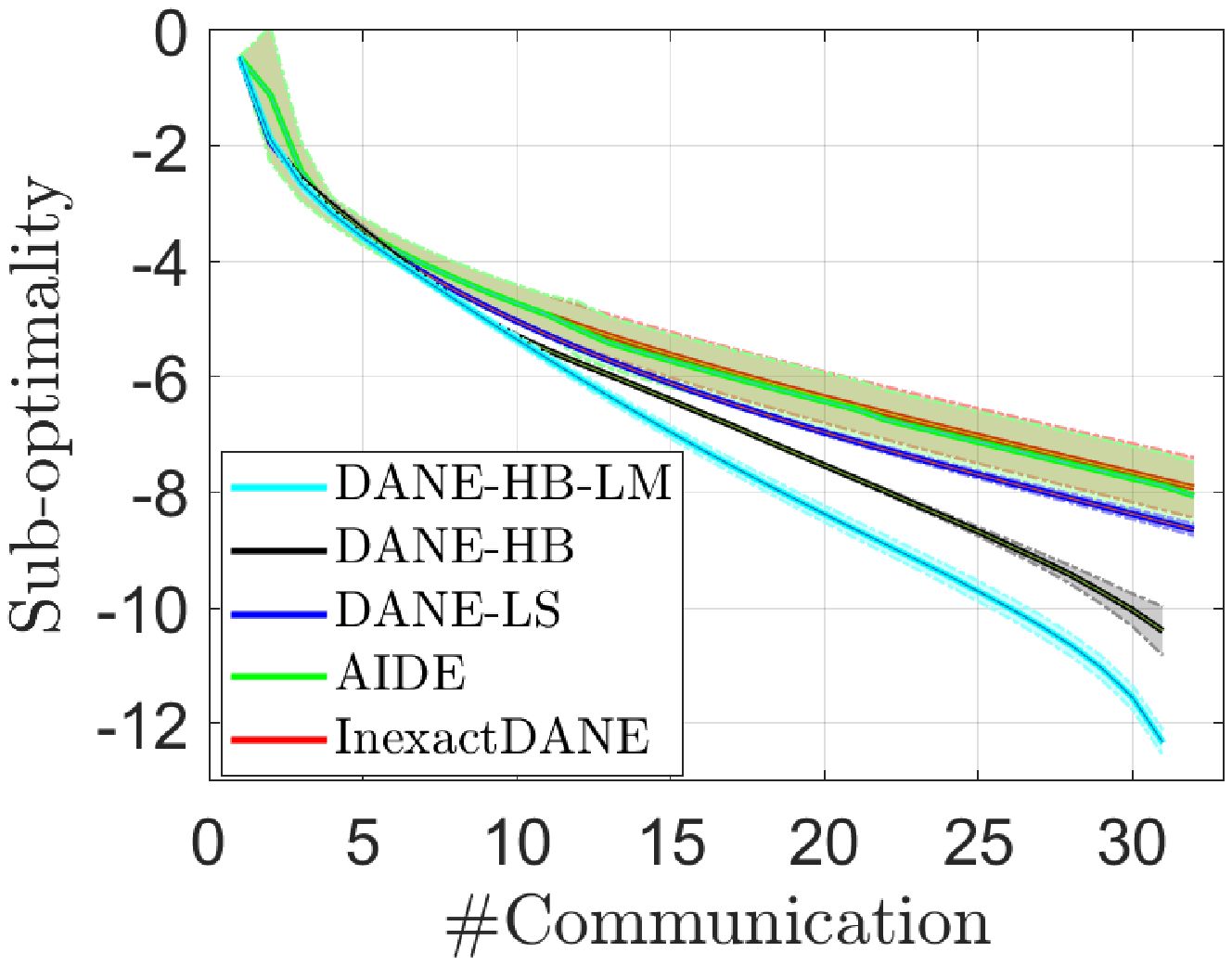}\hspace{-0.13in}
\includegraphics[width=2.1in]{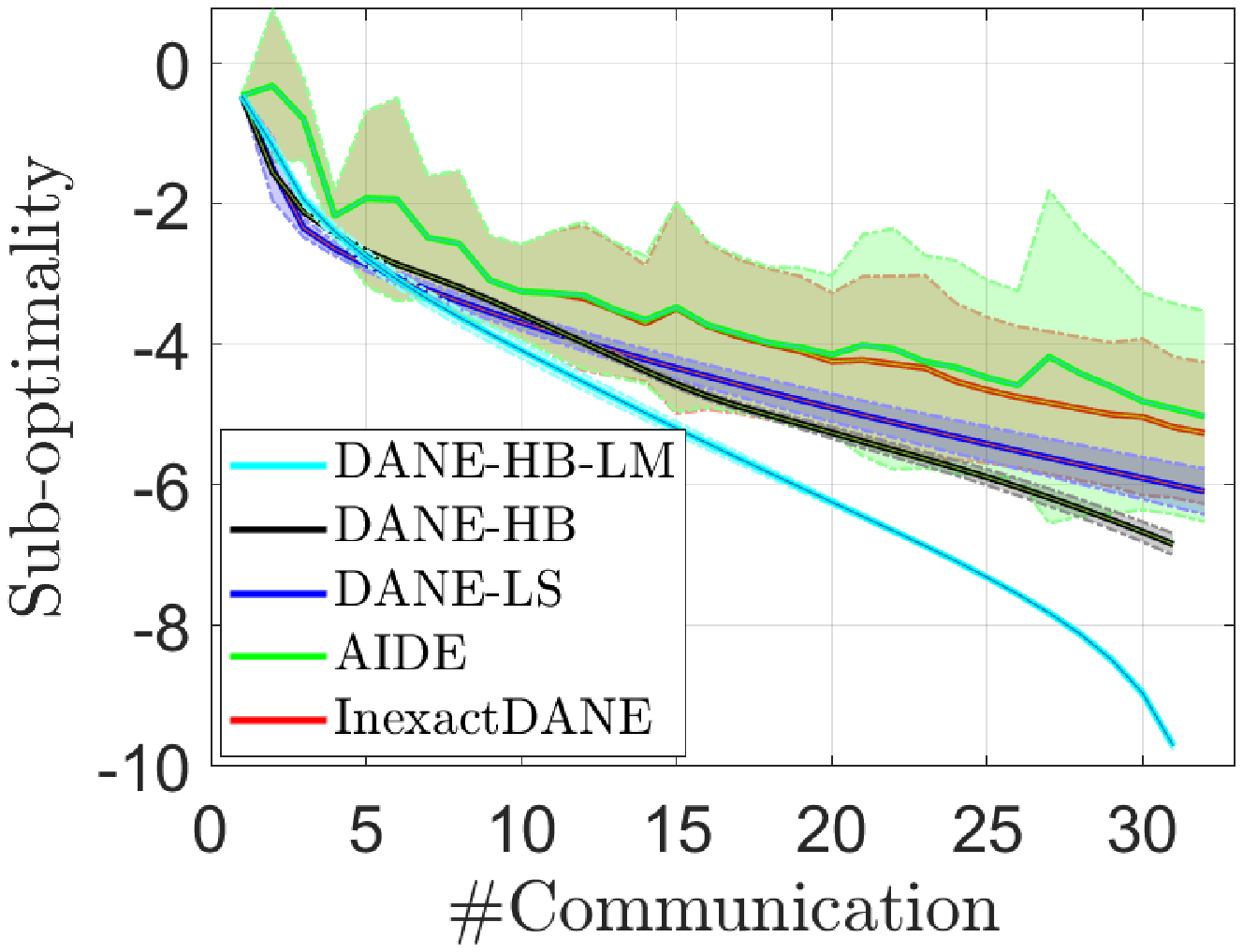}\hspace{-0.13in}
\includegraphics[width=2.1in]{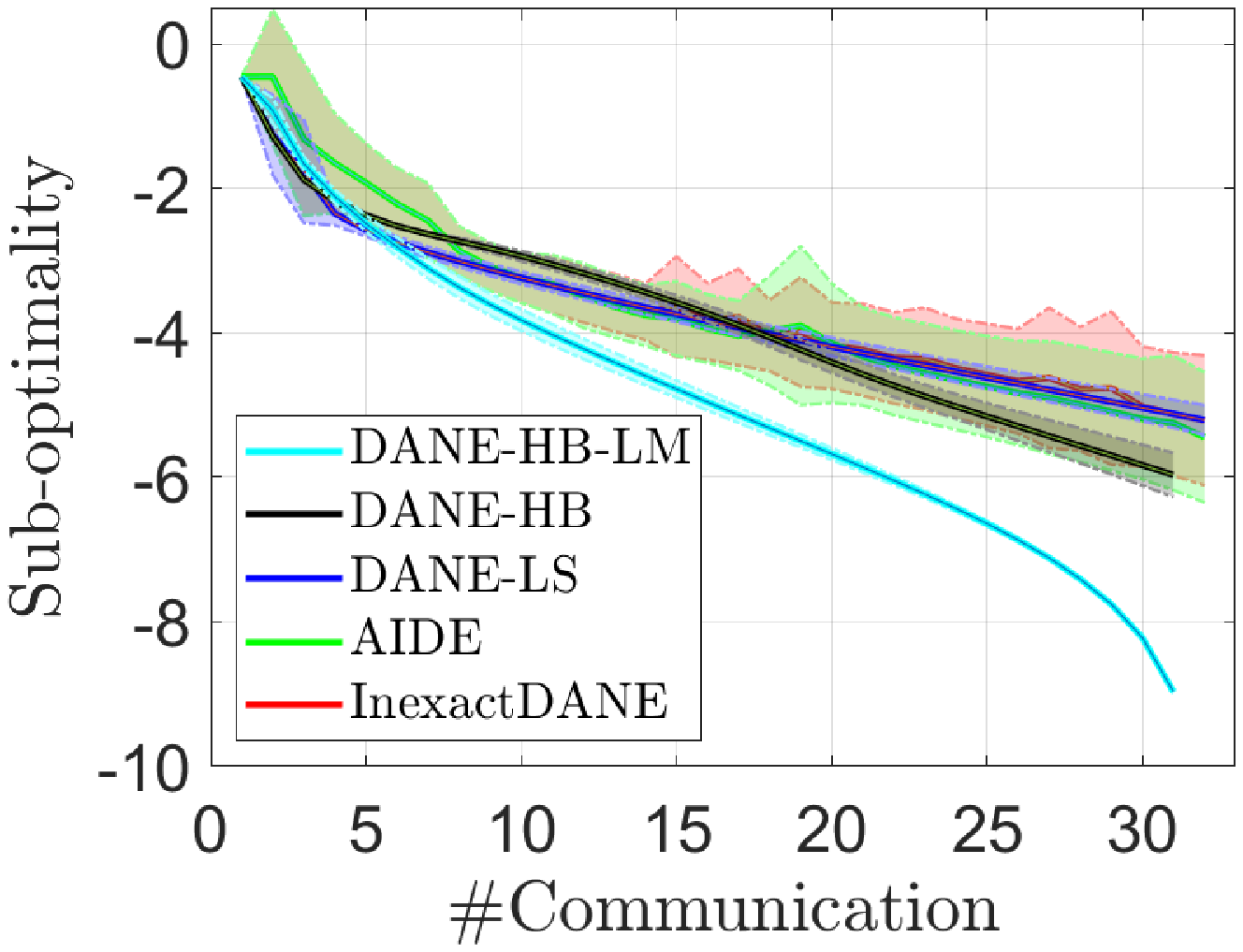}
\label{fig:algorithm_real_gistte_add}}
}

\mbox{
\subfigure[\texttt{rcv1.binary}]{
\includegraphics[width=2.1in]{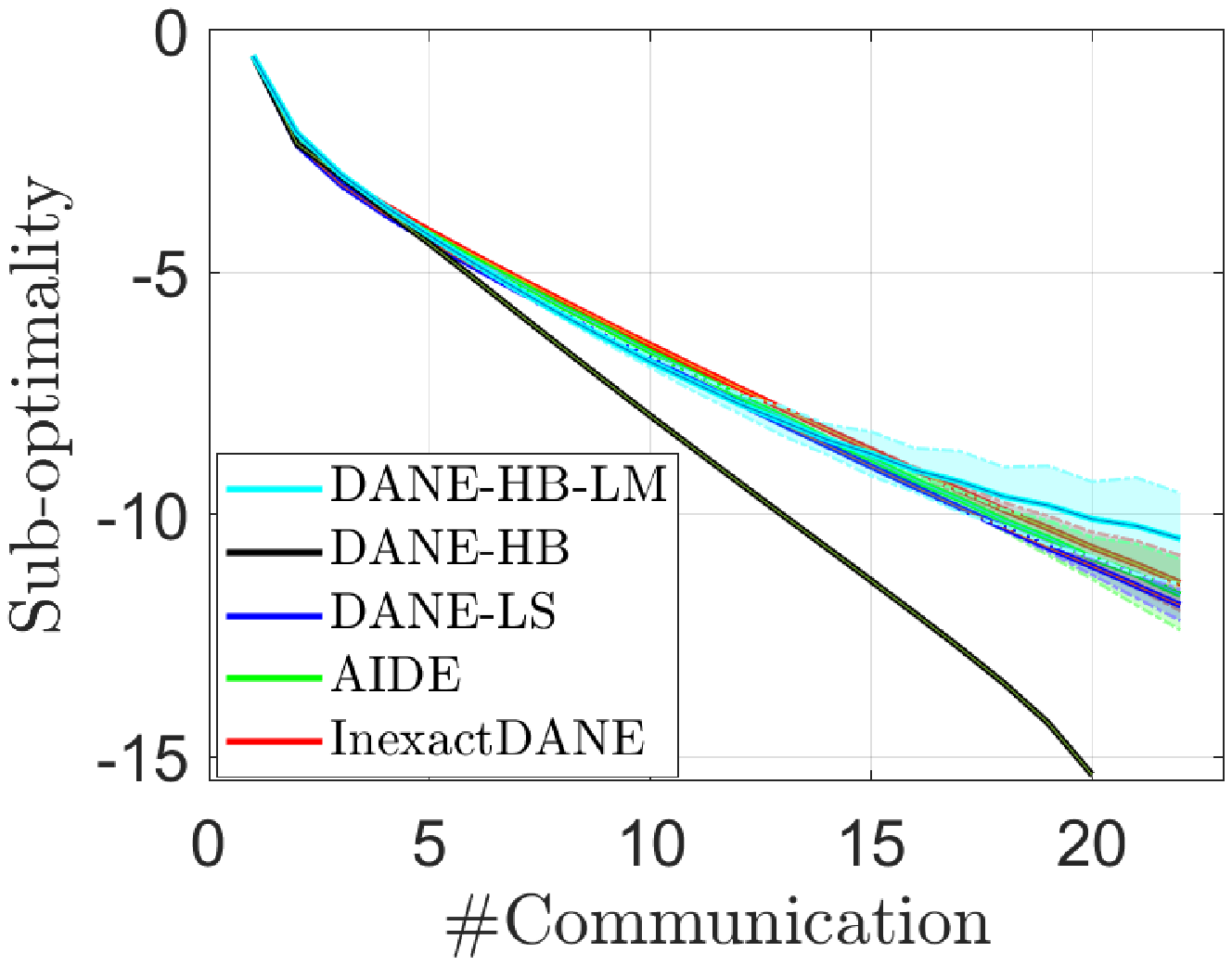}\hspace{-0.13in}
\includegraphics[width=2.1in]{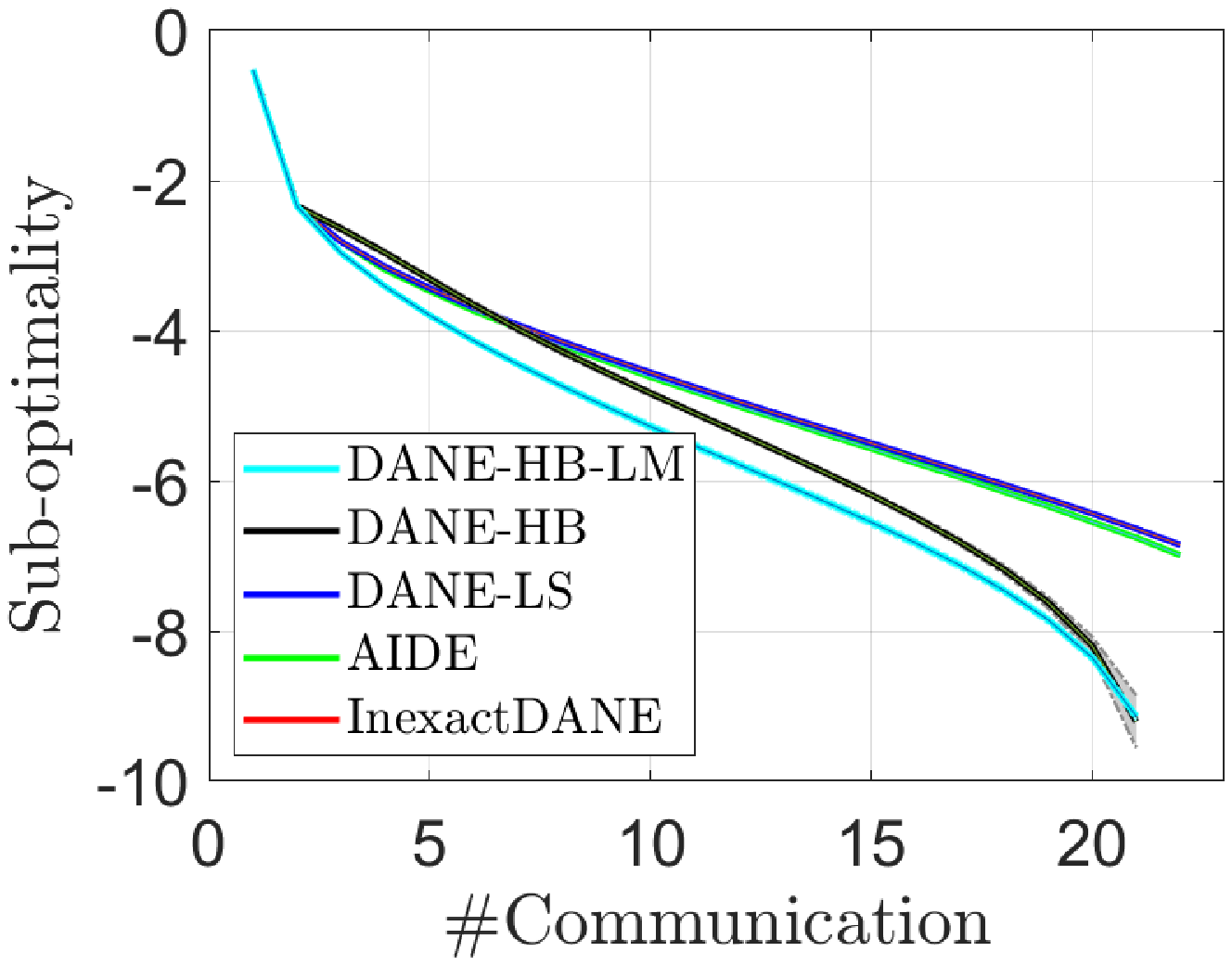}\hspace{-0.13in}
\includegraphics[width=2.1in]{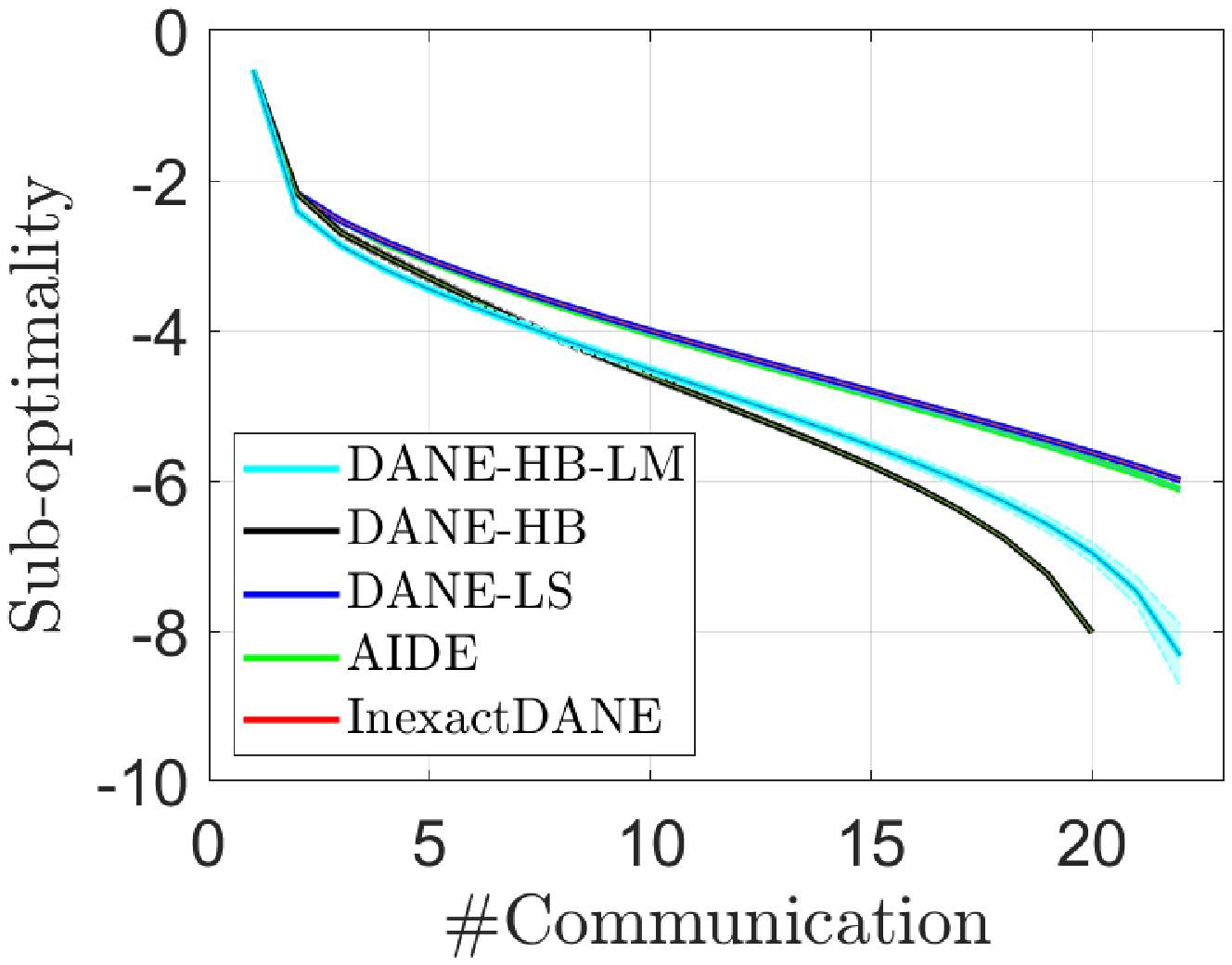}
\label{fig:algorithm_real_rcv1_add}}}
\end{center}
\vspace{-0.2in}
\caption{Algorithm evaluation with comparison to DANE-type methods: the objective value evolving curves on synthetic and real logistic regression tasks with $m=4$ (left panels), $m=16$ (middle panels) and $m=32$ (right panels).  Best viewed in color.}
\label{fig:algorithm_add}
\end{figure}

\subsection{Algorithm evaluation}

We further compare the convergence performance of DANE-LS and DANE-HB/DANE-HB-LM with several representative communication-efficient distributed learning methods. For the sake of presentation clarity, we divide the numerical study into two categories using the DANE-type methods and other type of methods as baselines respectively.

\subsubsection{Comparison against DANE-type methods}

In this part, we carry out experiments to compare our methods with \InexactDane and AIDE, both are developed by~\citet{reddi2016aide}, for binary logistic regression problems. We begin with a simulation study using the same data generation protocol as in the previous theory verification study. We test with $p=200$, $N=10p$, $\gamma=40/\sqrt{n}$, $\mu = 1/\sqrt{N}$ and $m\in \{4, 16, 32\}$. Figure~\ref{fig:algorithm_synthetic_add} shows the objective value convergence curves (w.r.t. communication rounds) of the considered algorithms. From these curves we can see that DANE-LS and DANE-HB/DANE-HB-LM are stable in convergence while \InexactDane and AIDE exhibit strong zigzag effect in the early stage of iteration when $m=4, 16$. The convergence instability of the plain DANE method has also been observed in~\citep{shamir2014communication}. The stability of our proposed methods shows the benefit of line search for improving the convergence behavior of DANE-type methods. In terms of communication efficiency, it can be seen that: i) DANE-LS is superior or comparable to \InexactDane and AIDE in decreasing the global objective value after the same rounds of communication; and ii) DANE-HB and DANE-HB-LM converge considerably faster than the other methods. These observations confirm the effectiveness of heavy-ball approach for accelerating the communication efficiency of DANE.

Next, we evaluate the convergence performance of the considered algorithms on two real data sets \texttt{gisette}~\citep{guyon2005result} ($p=5000$, $N=6000$) and \texttt{rcv1.binary}~\citep{lewis2004rcv1} ($p=47236$, $N=20242$). For each data set, we fix the regularization parameter $\mu = 10^{-5}$ and test with $m\in\{4,16, 32\}$. The results are shown in Figure~\ref{fig:algorithm_add} from which we have the following observations:
\begin{itemize}
  \item For \texttt{gisette}, it can be observed from Figure~\ref{fig:algorithm_real_gistte_add} that DANE-LS and DANE-HB/DANE-HB-LM converge much more stably than \InexactDane and AIDE, which again demonstrates the effectiveness of backtracking line search adopted by our methods. In terms of communication efficiency, DANE-HB-LM outperforms the other considered methods with a clear margin and DANE-HB is the runner-up. DANE-LS converges slightly faster than \InexactDane and AIDE when $m=4,16$, while the former is comparable to the latter ones when $m=32$.
  \item For \texttt{rcv1.binary}, Figure~\ref{fig:algorithm_real_rcv1_add} shows that all the considered algorithms converge smoothly, and thus line search does not help much to improve performance. In most cases, DANE-HB and DANE-HB-LM are superior to DANE-LS, \InexactDane and AIDE which exhibit very close performance on this data.
\end{itemize}

To summarize this group of experiments, our proposed algorithms are stabler than the prior DANE-type methods which matches the global convergence theory established for our algorithms. Particularly, DANE-HB and DANE-HB-LM tend to substantially outperform the other methods in communication efficiency.

\begin{figure}[h]
\begin{center}
\mbox{
\subfigure[Synthetic]{
\includegraphics[width=2.1in]{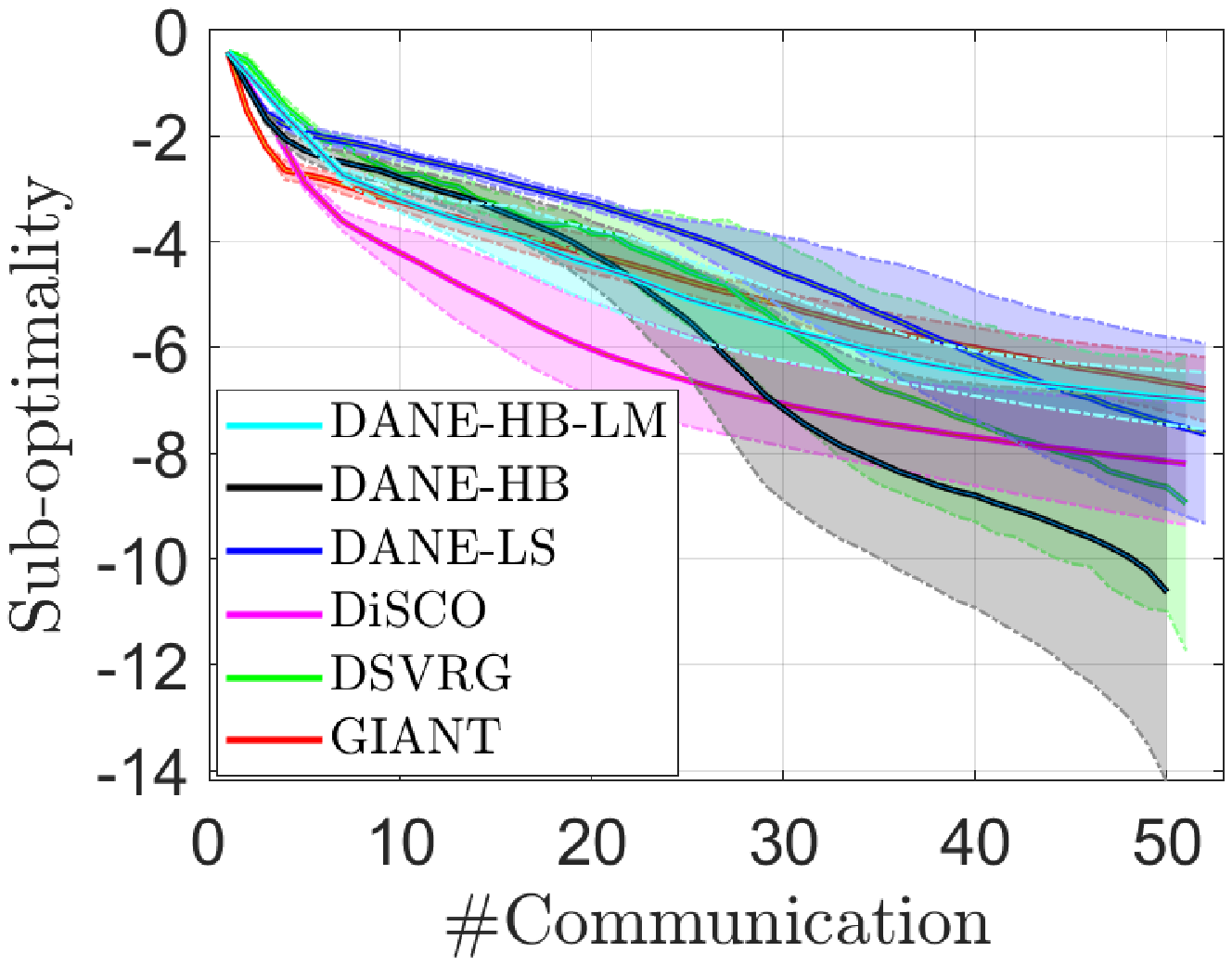}\hspace{-0.13in}
\includegraphics[width=2.1in]{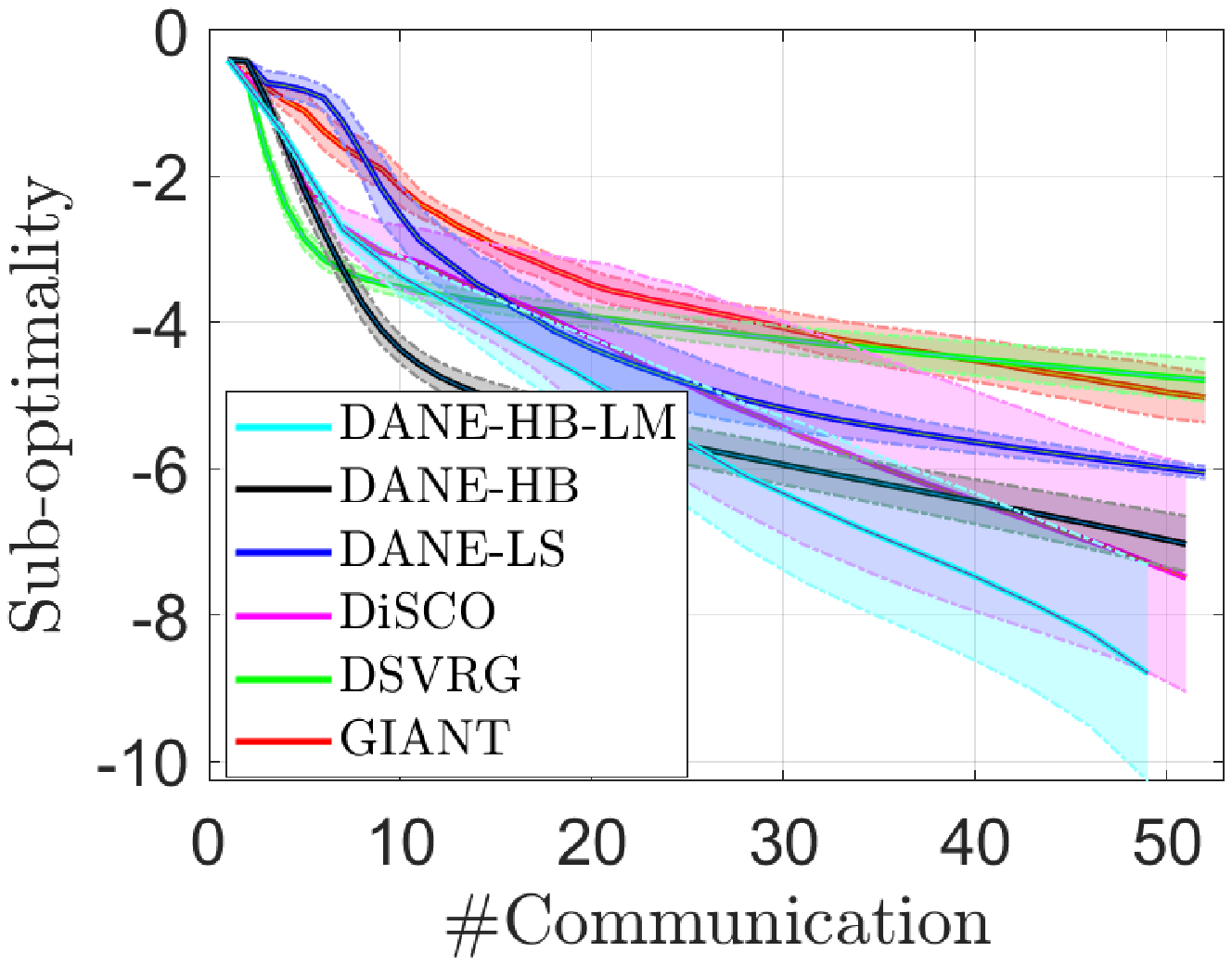}\hspace{-0.13in}
\includegraphics[width=2.1in]{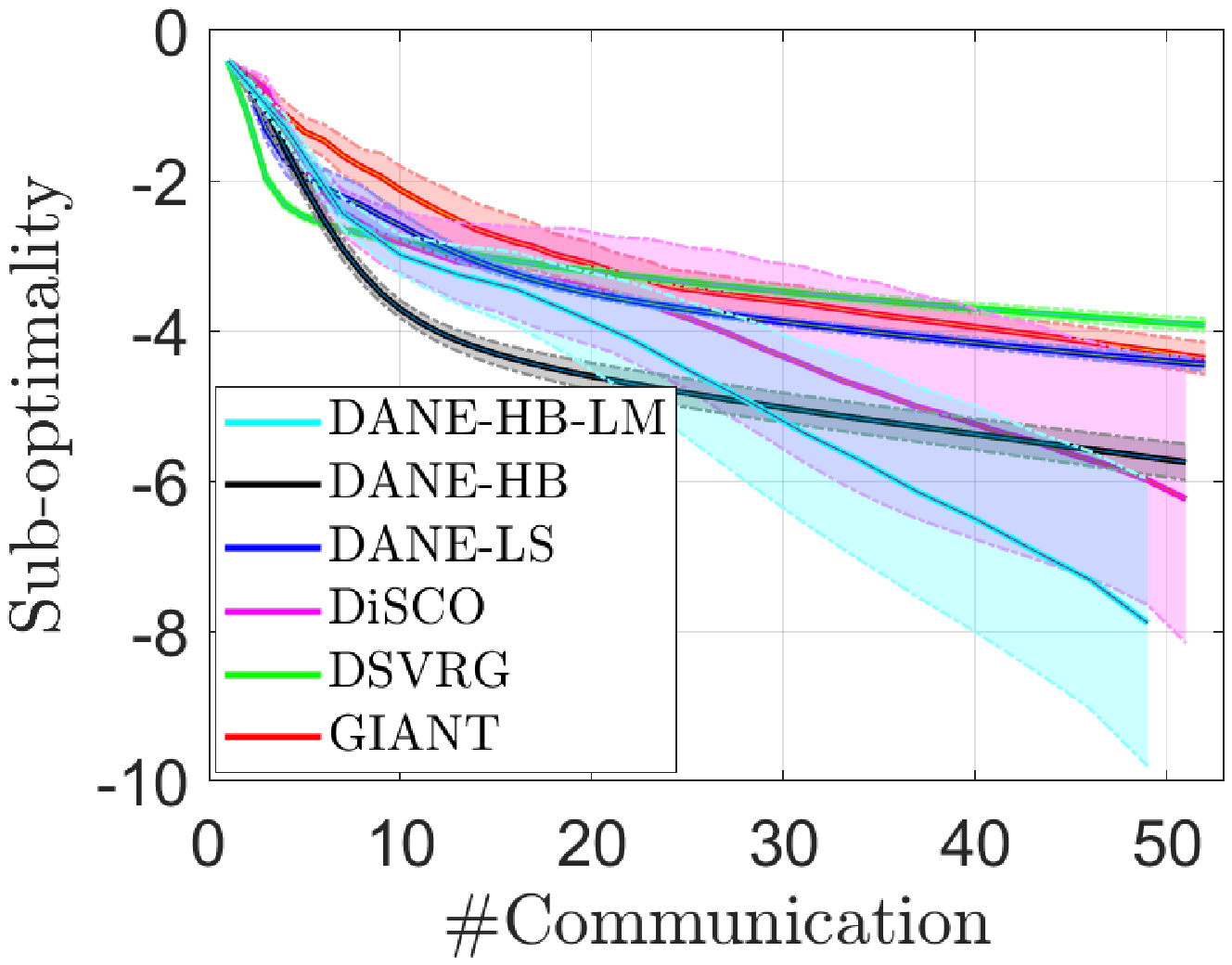}
\label{fig:algorithm_synthetic_add_other}}
}

\mbox{
\subfigure[\texttt{gisette}]{
\includegraphics[width=2.1in]{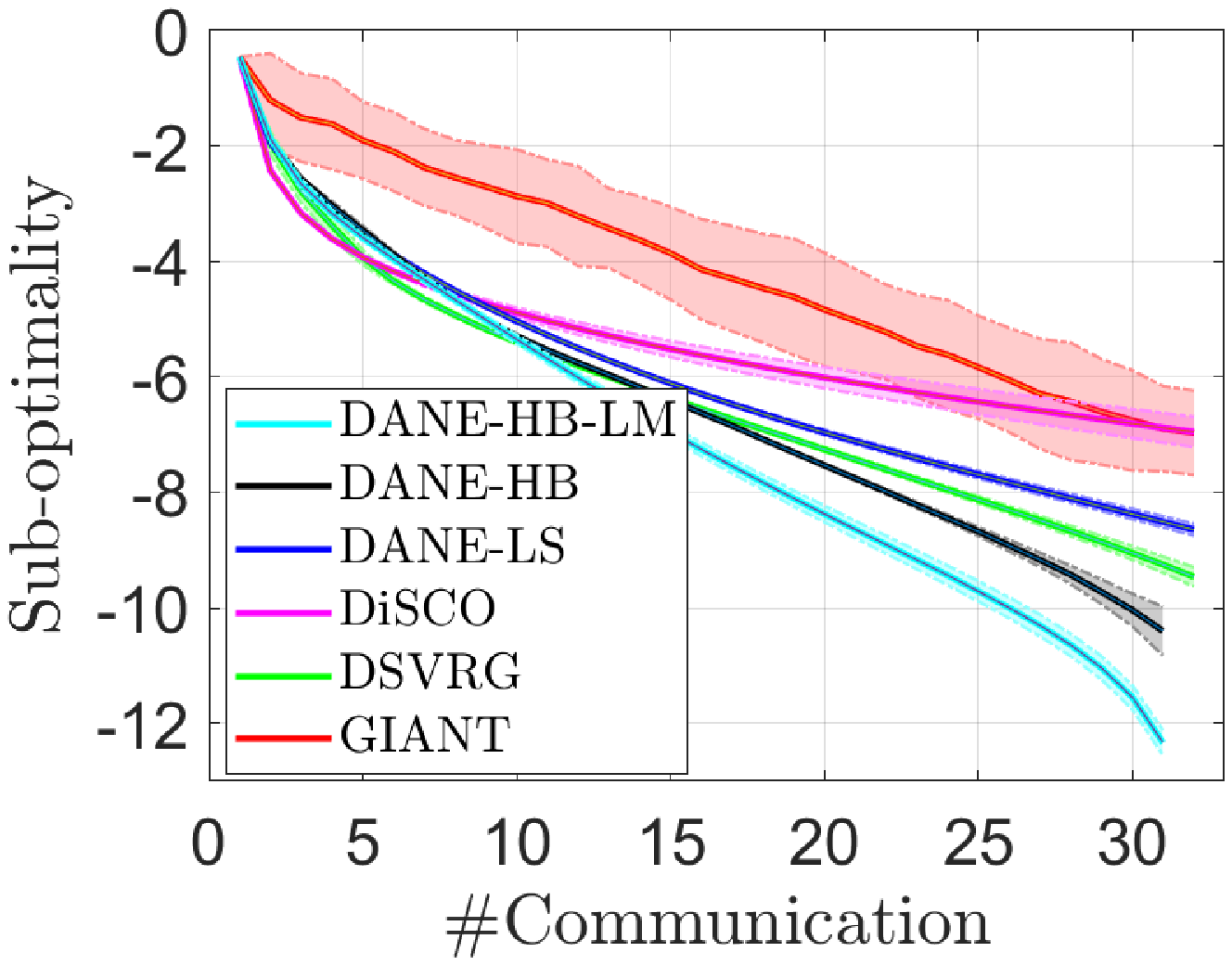}\hspace{-0.13in}
\includegraphics[width=2.1in]{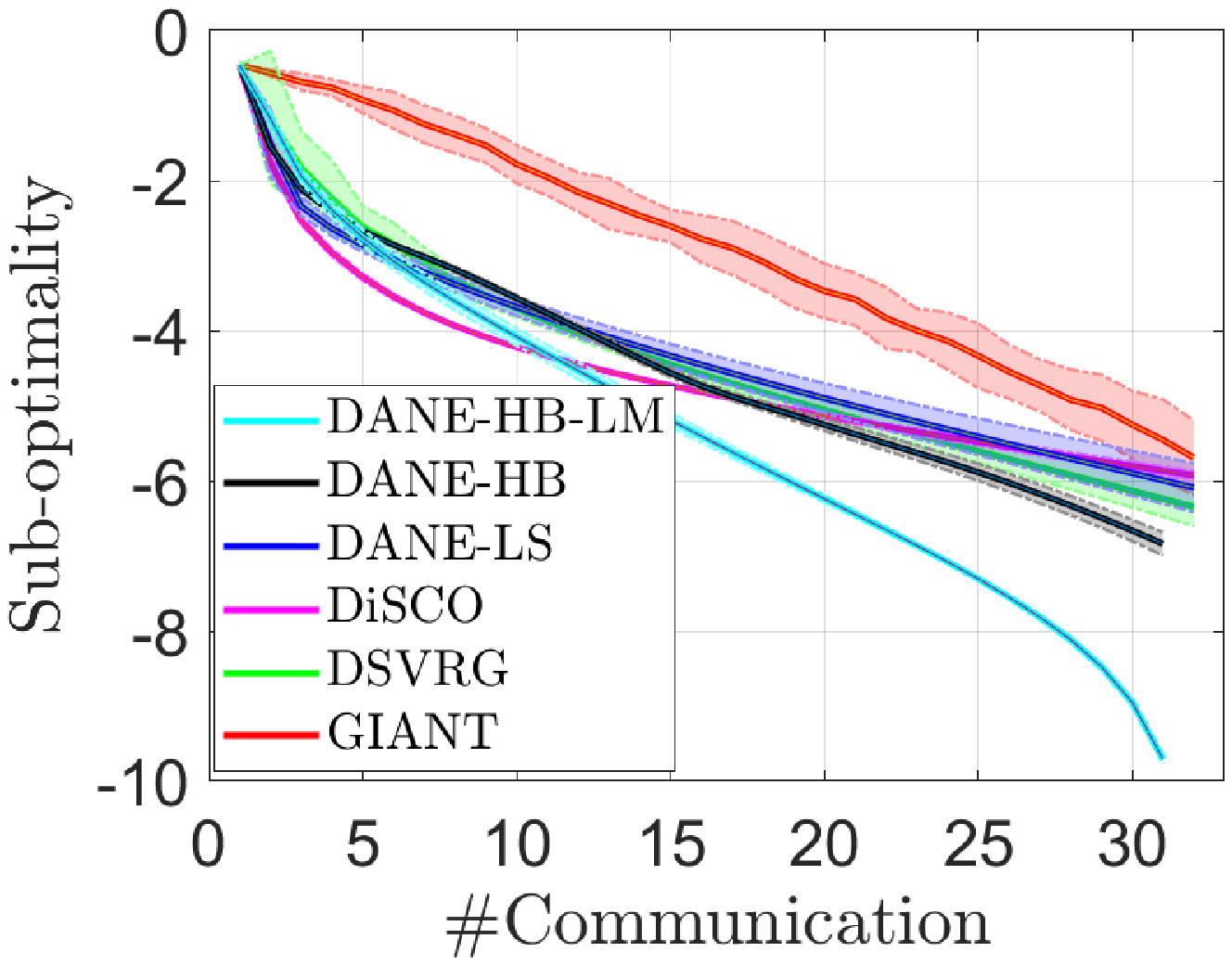}\hspace{-0.13in}\hspace{-0.13in}
\includegraphics[width=2.1in]{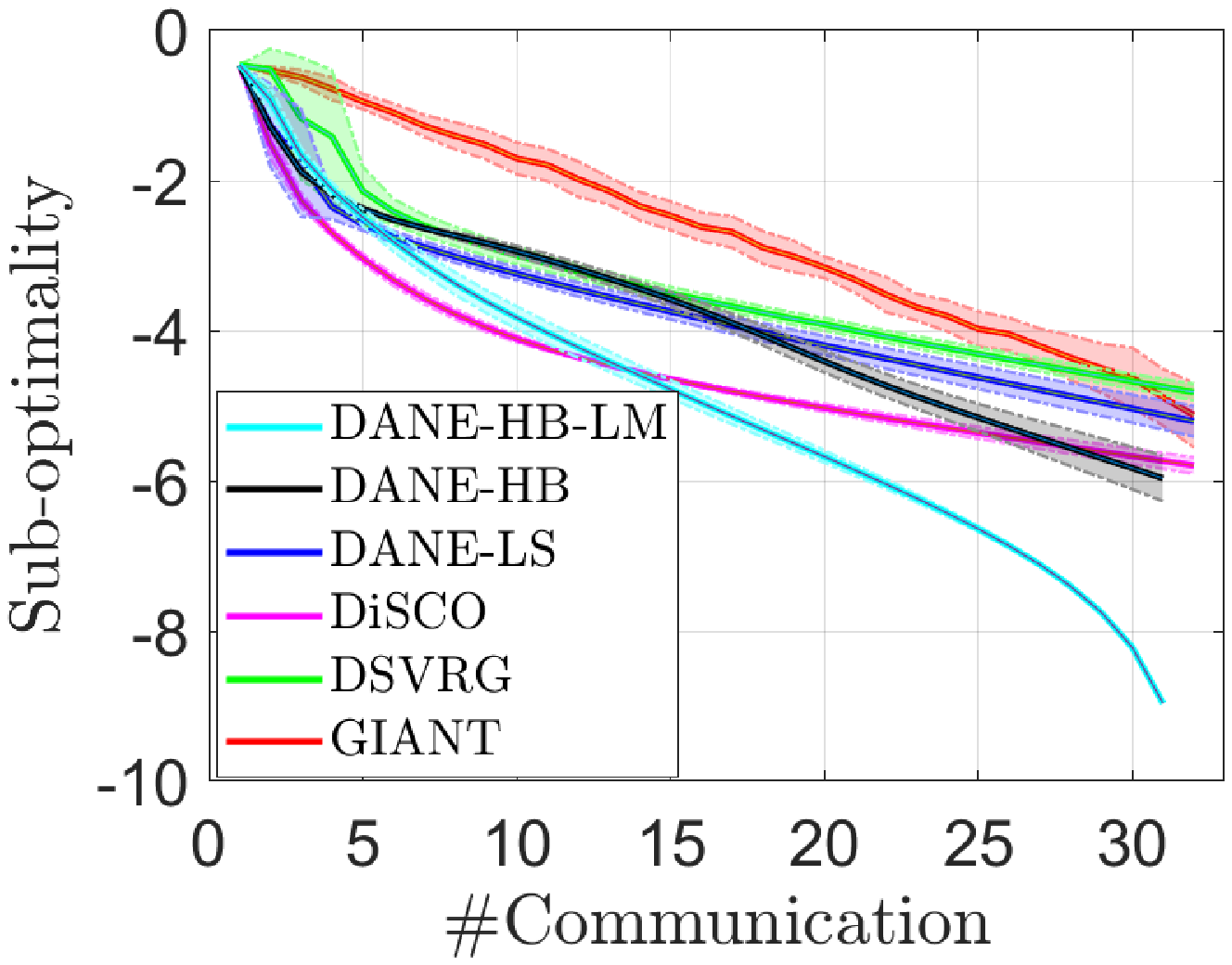}
\label{fig:algorithm_real_gistte_add_other}}
}

\mbox{
\subfigure[\texttt{rcv1.binary}]{
\includegraphics[width=2.1in]{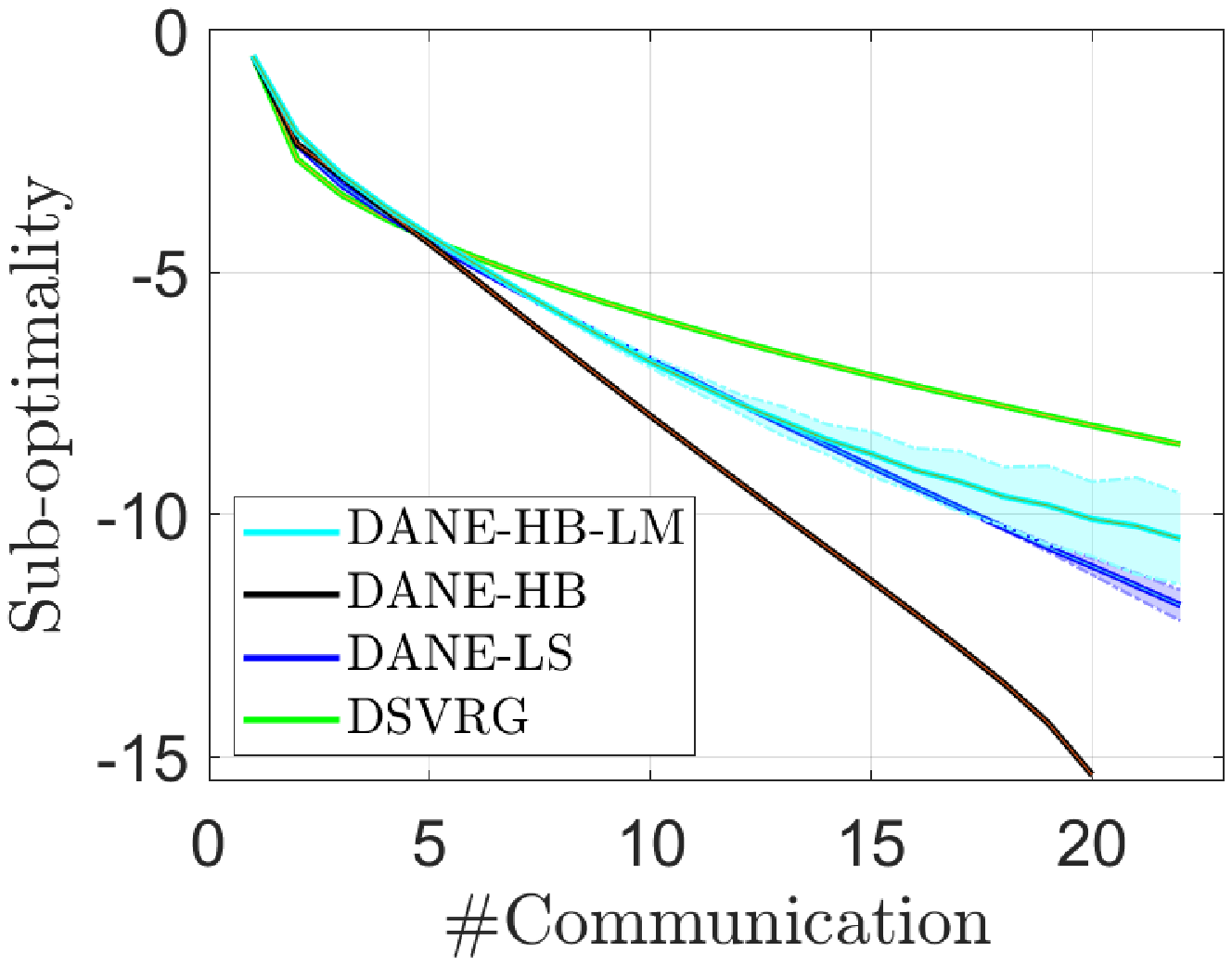}\hspace{-0.13in}
\includegraphics[width=2.1in]{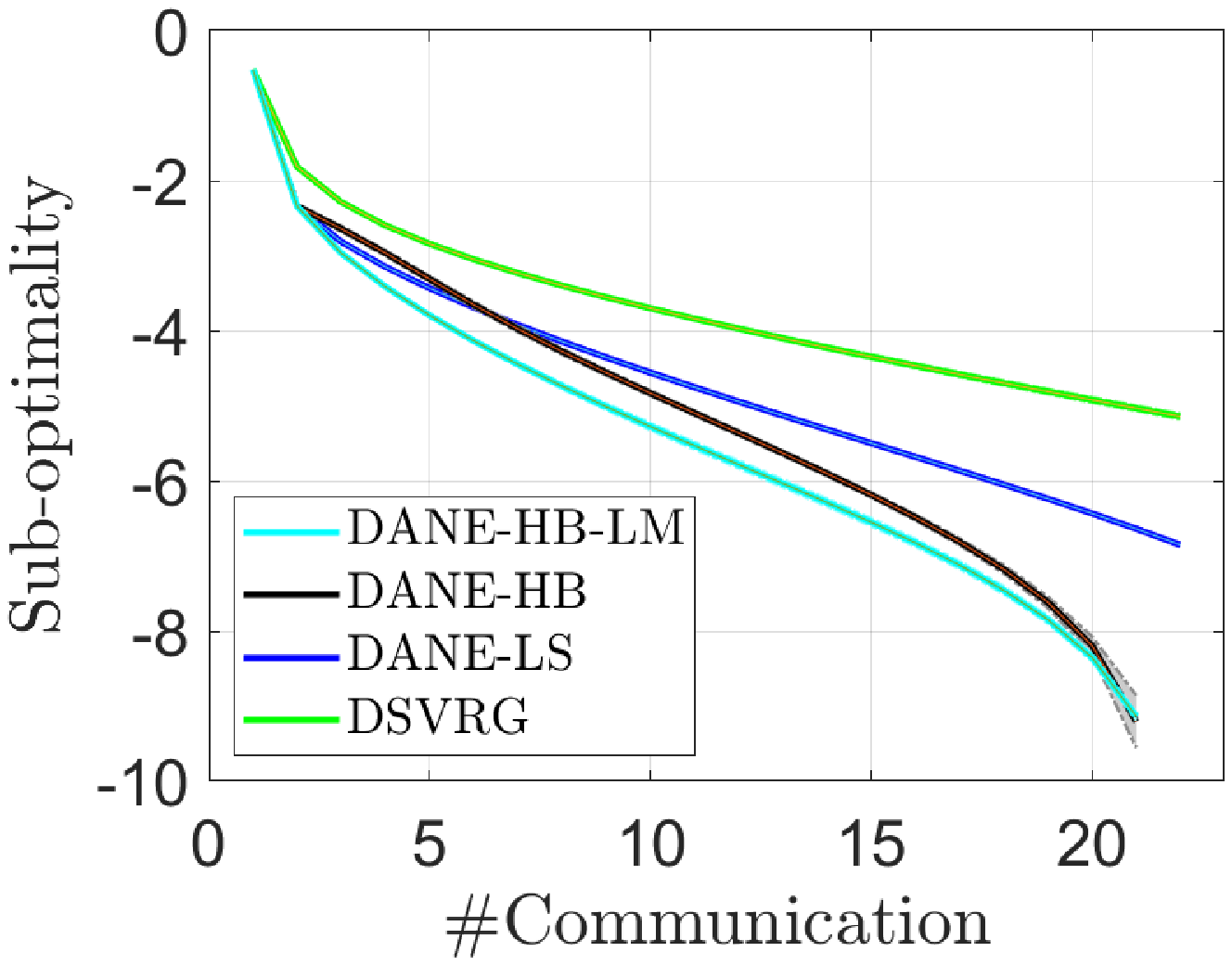}\hspace{-0.13in}
\includegraphics[width=2.1in]{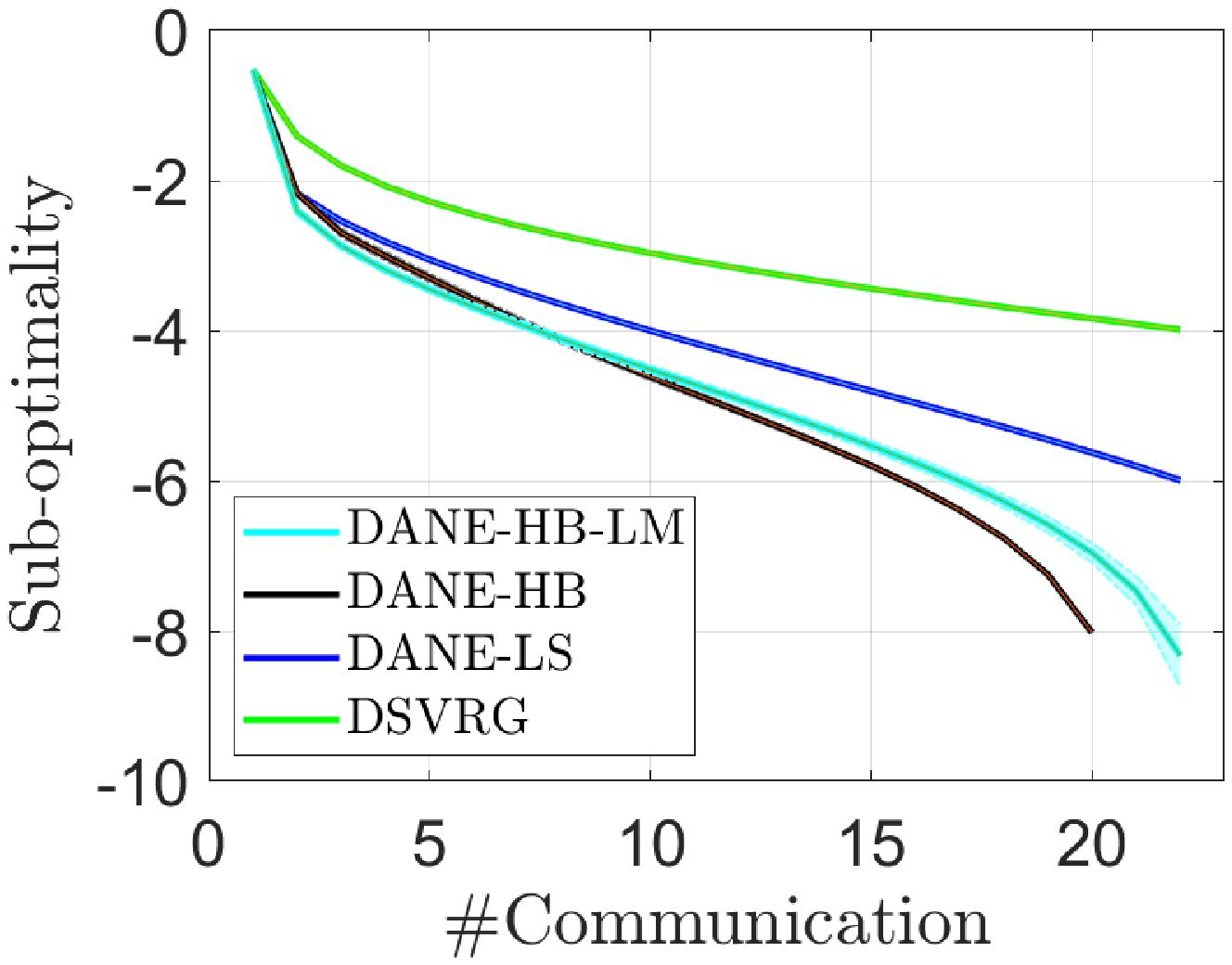}
\label{fig:algorithm_real_rcv1_add_other}}
}
\end{center}
\vspace{-0.2in}
\caption{Algorithm evaluation with comparison to other methods: the objective value evolving curves on synthetic and real logistic regression tasks with $m=4$ (left panels), $m=16$ (middle panels) and $m=32$ (right panels).  Best viewed in color.}
\label{fig:algorithm_non_dane}
\end{figure}

\subsubsection{Comparison against other methods beyond DANE}

In this group of evaluation, we compare the performance of DANE-LS and DANE-HB/DANE-HB-LM with DSVRG~\citep{lee2017distributed}, DiSCO~\citep{zhang2015disco} and GIANT~\citep{wang2018giant} which are, among others, three representative first-order and second-order algorithms for communication-efficient distributed learning. For the re-implementation of GIANT, we follow~\citep{wang2018giant} to add a backtracking line search step to ensure global convergence, although the theoretical guarantee of GIANT does not apply to such a practical implementation. The evaluation is conducted on the same data sets as used in the previous experiment, and the results are shown in Figure~\ref{fig:algorithm_non_dane}. Note that the results of DiSCO and GIANT on \texttt{rcv1.binary} are not available due to its failure of loading the local Hessian matrix ($\sim 16.6$ G) to the 16G SDRM of our evaluation system. Below we summarize the main observations that can be made from these results:
\begin{itemize}
  \item Results on synthetic data: DANE-HB-LM $\ge$ DANE-HB $\ge$ DiSCO $\ge$ DANE-LS $\ge$ DSVRG $\ge$ GIANT. As shown in Figure~\ref{fig:algorithm_synthetic_add_other}, DANE-HB and DiSCO outperform the other considered algorithms when relatively small $m=4$ number of machines is used. For relatively large $m=16, 32$, DANE-HB-LM, DANE-HB and DiSCO converge faster than the other methods. In most cases, DANE-LS plays moderately among all in communication efficiency.
  \item Results on \texttt{gisette}: DANE-HB-LM $>$ DiSCO $\ge$ DANE-HB $\approx$ DANE-LS $\approx$ DSVRG $>$ GIANT. From the curves in Figure~\ref{fig:algorithm_real_gistte_add_other} we can see that DSVRG is comparable to DANE-LS and DANE-HB and they are slightly inferior to DiSCO and DANE-HB-LM. Equipped with line-search, GIANT converges smoothly but at the slowest rate among the considered algorithms.
  \item Results on \texttt{rcv1.binary}: DANE-HB $\ge$ DANE-HB-LM $\ge$ DANE-LS $>$ DSVRG. Figure~\ref{fig:algorithm_real_rcv1_add_other} shows that our proposed DANE-type methods outperform DSVRG with a clear margin on this data set.
\end{itemize}

Overall, DANE-HB and DANE-HB-LM are top two solvers among all the considered algorithms. When applicable, DiSCO is found to be competitive to DANE-HB but inferior to DANE-HB-LM. In many cases, DANE-LS and DSVRG are comparable and they tend to outperform GIANT on real data.

\section{Conclusions}
\label{sect:conclusion}

In this paper, we made progress towards deeply understanding the mysterious convergence behavior of DANE for both quadratic and non-quadratic convex functions. To this end, we proposed two new alternatives, DANE-LS and DANE-HB, which are more suitable for global asymptotic and local non-asymptotic analysis, and yet effective for momentum acceleration.
%The DANE-LS algorithm is a slight modification of DANE equipped with backtracking line search. Such modification ensures global asymptotic convergence (Theorem~\ref{thrm:global_general}) and fast local non-asymptotic rate (Theorem~\ref{thrm:local_general}) for non-quadratic problems, and also lead to a sharper bound for the quadratic case (Theorem~\ref{thrm:quadratic_dane}). DANE-HB is an accelerated version of DANE using the generic heavy-ball technique. For this variant, we have proved that nearly tight convergence rate can be attained locally for some strongly convex objectives (Theorem~\ref{thrm:local_general_hb}), which has not been covered by the previous analysis. Numerical results show that DANE-LS and DANE-HB are competitive or superior to several prior representative DANE-type methods in communication efficiency and stability.
The core messages conveyed by our study are:
\begin{itemize}
  \item[(1)] \textbf{The plain DANE method can actually converge faster than already known.} For quadratic problems, even without any momentum acceleration, DANE-LS attains a tighter communication complexity bound than the already discovered for plain DANE;
  \item[(2)] \textbf{Line search is beneficial to DANE.} For non-quadratic strongly convex functions, with the blessing of backtracking line search under Armijo rule, DANE-LS converges globally under a wider spectrum of $\gamma$ than DANE, with an appealing local non-asymptotic convergence rate;
  \item[(3)] \textbf{Heavy-ball acceleration is effective for DANE.} DANE-HB possesses a nearly tight communication complexity bound for quadratic objective functions. Whilst for non-quadratic convex functions, DANE-HB exhibits the same bound in the vicinity of minimizer. For learning with linear models, DANE-HB-LM can be shown to have global convergence with favorable communication complexity bounds.
\end{itemize}
Numerical results support our theoretical findings and confirm that DANE-LS and DANE-HB (DANE-HB-LM) are safe and in many cases more attractive alternatives to the prior DANE-type methods for communication-efficient distributed machine learning. We expect that the theory and algorithms developed in this article will fuel future investigation on non-convex distributed optimization problems such as distributed training of deep neural nets. Also, we hope our improved DANE-type methods will have practical implications in large-scale federated optimization for privacy-preserving collaborative machine learning.

\section*{Acknowledgements}

Xiao-Tong Yuan is partially supported by Natural Science Foundation of China (NSFC) under Grant 61876090.

\appendix

\makeatletter
\renewcommand\theequation{A.\@arabic\c@equation }
\makeatother \setcounter{equation}{0}

\section{Some Auxiliary Lemmas}
\label{append:notation_auxiliary_lemmas}

Here we introduce auxiliary lemmas which will be used for proving the results in the manuscript. For the sake of readability, we defer the proofs of some lemmas into Appendix~\ref{ProofforAuxiliaryLemmas}. The following elementary lemma will be used frequently throughout our analysis.
\begin{lemma}\label{lemma:precondition_bound}
Let $A$ and $B$ be two symmetric and positive definite matrices and $B\succeq \mu I$ for some $\mu>0$. If $\|A-B\|\le \gamma$, then $(A+\gamma I)^{-1}B$ is diagonalizable and
\[
\lambda_{\max} (A+\gamma I)^{-1}B \le 1, \quad \lambda_{\min}((A+\gamma I)^{-1}B) \ge \frac{\mu}{\mu + 2\gamma}.
\]
Moreover, the following spectral norm bound holds:
\[
\|I - B^{1/2}(A+\gamma I)^{-1} B^{1/2}\| \le  \frac{2\gamma}{\mu + 2\gamma}.
\]
\end{lemma}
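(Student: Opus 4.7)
The plan is to reduce everything to a single symmetric object. The matrix $M := (A+\gamma I)^{-1}B$ is similar to $\widetilde M := B^{1/2}(A+\gamma I)^{-1}B^{1/2}$ via the conjugation $M = B^{-1/2}\widetilde M B^{1/2}$, since $B$ is symmetric positive definite and admits a symmetric positive definite square root. Because $\widetilde M$ is symmetric, it is diagonalizable with real eigenvalues, and similarity preserves the spectrum, so $M$ is diagonalizable with the same real eigenvalues as $\widetilde M$. Thus it suffices to prove the two eigenvalue bounds for $\widetilde M$, and then the spectral norm bound for $I - \widetilde M$ will follow immediately by subtracting $\widetilde M$ from the identity.

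For the upper bound $\lambda_{\max}(\widetilde M) \le 1$, I would exploit the one-sided operator inequality coming from $\|A - B\| \le \gamma$: this is equivalent to $-\gamma I \preceq A - B \preceq \gamma I$, and in particular $A + \gamma I \succeq B$. Inverting this order-reversing relation (both sides being positive definite) gives $(A+\gamma I)^{-1} \preceq B^{-1}$. Conjugating on both sides by $B^{1/2}$ yields $\widetilde M \preceq I$, from which $\lambda_{\max}(\widetilde M) \le 1$.

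For the lower bound $\lambda_{\min}(\widetilde M) \ge \mu/(\mu+2\gamma)$, I would use the opposite inequality $A \preceq B + \gamma I$, combined with $B \succeq \mu I$, which implies $\gamma I \preceq (\gamma/\mu) B$. Adding $\gamma I$ to both sides of $A \preceq B + \gamma I$ and chaining the estimates gives $A + \gamma I \preceq B + 2\gamma I \preceq \bigl(1 + 2\gamma/\mu\bigr) B = \frac{\mu+2\gamma}{\mu} B$. Inverting and conjugating by $B^{1/2}$ yields $\widetilde M \succeq \frac{\mu}{\mu+2\gamma} I$, which is the claimed lower eigenvalue bound.

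Finally, combining the two bounds gives $\frac{\mu}{\mu+2\gamma} I \preceq \widetilde M \preceq I$, so the eigenvalues of $I - \widetilde M$ all lie in the interval $\bigl[0,\frac{2\gamma}{\mu+2\gamma}\bigr]$, and since $I - \widetilde M$ is symmetric its spectral norm equals its largest eigenvalue in magnitude, giving $\|I - \widetilde M\| \le \frac{2\gamma}{\mu+2\gamma}$. There is no real obstacle here; the only subtlety to flag is that one must pass through the symmetric conjugate $\widetilde M$ rather than working with $M$ directly, because $M$ is in general non-symmetric and neither Loewner comparison nor the identification of spectral norm with largest eigenvalue applies to it.
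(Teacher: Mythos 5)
Your proof is correct and follows essentially the same route as the paper: symmetrize $(A+\gamma I)^{-1}B$ by a congruence, then translate the hypotheses into the two Loewner inequalities $B \preceq A+\gamma I$ and $A+\gamma I \preceq \frac{\mu+2\gamma}{\mu}B$. The only (harmless) difference is that you conjugate by $B^{1/2}$ and invoke anti-monotonicity of matrix inversion, which lets a single symmetric object $B^{1/2}(A+\gamma I)^{-1}B^{1/2}$ carry all three claims, whereas the paper works with $(A+\gamma I)^{-1/2}B(A+\gamma I)^{-1/2}$ for the eigenvalue bounds and treats the norm bound separately.
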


Let us denote $\rho(A)$ the spectral radius of $A$, i.e., the largest (in magnitude) eigenvalue of a square matrix $A$.
\begin{lemma}\label{lemma:spectral_radius_bound}
Let $A\in \mathbb{R}^{d\times d}$ be a square matrix with positive real eigenvalues such that $0<\mu\le\lambda_{\min}(A)\le \lambda_{\max}(A)\le L$. Assume that $A$ is diagonalizable. Then
\[
\rho \left(\left[\begin{array}{*{20}{c}}
   (1+\beta) I -\eta A & -\beta I \\
   I & 0 \\
\end{array} \right]\right) \le \max\{|1-\sqrt{\eta \mu}|, |1-\sqrt{\eta L}|\},
\]
where $\beta = \max\{|1-\sqrt{\eta \mu}|, |1-\sqrt{\eta L}|\}^2$.
\end{lemma}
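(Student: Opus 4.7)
The plan is to exploit the block structure of the matrix
\[
M := \begin{pmatrix} (1+\beta)I -\eta A & -\beta I \\ I & 0 \end{pmatrix}
\]
to reduce the spectral radius computation to a family of $2\times 2$ scalar problems indexed by the eigenvalues of $A$. Since $A$ is diagonalizable, write $A = P\Lambda P^{-1}$ with $\Lambda = \mathrm{diag}(\lambda_1,\dots,\lambda_d)$. Conjugating $M$ by $\mathrm{diag}(P,P)$ produces a matrix whose four $d\times d$ blocks are all diagonal; a straightforward interleaving permutation then turns it into a block-diagonal matrix whose blocks are
\[
M_{\lambda_i} := \begin{pmatrix} 1+\beta - \eta\lambda_i & -\beta \\ 1 & 0 \end{pmatrix}, \qquad i=1,\dots,d.
\]
Because spectral radius is preserved by similarity, I will have $\rho(M)=\max_i \rho(M_{\lambda_i})$, reducing the problem to analyzing a single $2\times 2$ matrix $M_\lambda$ for an arbitrary eigenvalue $\lambda\in[\mu,L]$.

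The next step is to compute the eigenvalues of $M_\lambda$. Its characteristic polynomial is $x^2-(1+\beta-\eta\lambda)x+\beta$, with discriminant
\[
\Delta_\lambda := (1+\beta-\eta\lambda)^2 - 4\beta.
\]
Writing $s := \sqrt{\beta} = \max\{|1-\sqrt{\eta\mu}|,\,|1-\sqrt{\eta L}|\}$, the inequality $\Delta_\lambda\le 0$ is equivalent to $|\sqrt{\eta\lambda}-1|\le s$, i.e., $\eta\lambda\in[(1-s)^2,(1+s)^2]$. When this range condition holds, the two roots of the characteristic polynomial are a complex conjugate pair with product $\det(M_\lambda)=\beta$, so each has modulus exactly $\sqrt{\beta}=s$, giving $\rho(M_\lambda)=s$.

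The heart of the argument, and the one place where care is needed, is verifying this range condition for every $\lambda\in[\mu,L]$ from the definition of $s$. For the upper endpoint, the bound $s\ge \sqrt{\eta L}-1$ (from the max in the definition of $s$) immediately yields $\sqrt{\eta\lambda}\le \sqrt{\eta L}\le 1+s$. For the lower endpoint, I split on the sign of $1-s$: if $s\ge 1$ then $1-s\le 0\le\sqrt{\eta\lambda}$ is trivial, while if $s<1$ then the bound $s\ge 1-\sqrt{\eta\mu}$ gives $\sqrt{\eta\mu}\ge 1-s$, hence $\sqrt{\eta\lambda}\ge \sqrt{\eta\mu}\ge 1-s$. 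Both squared bounds then yield $(1-s)^2\le \eta\lambda\le(1+s)^2$, so $\Delta_\lambda\le 0$.

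Putting the pieces together, $\rho(M_{\lambda_i})\le s$ for every $i$, and the block reduction from the first step gives $\rho(M)=\max_i\rho(M_{\lambda_i})\le s = \max\{|1-\sqrt{\eta\mu}|,|1-\sqrt{\eta L}|\}$, which is the claim. I expect the only subtle point to be the case split in verifying the lower endpoint when $s\ge 1$ (the degenerate regime in which the spectral radius bound becomes vacuous but still must be shown); everything else is elementary matrix algebra enabled by the diagonalizability hypothesis.
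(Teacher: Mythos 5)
Your reduction to the $2\times 2$ blocks $M_{\lambda_i}$ via diagonalization and an interleaving permutation is exactly the paper's argument, and your analysis of the characteristic polynomial $x^2-(1+\beta-\eta\lambda)x+\beta$ is the same computation; where you go beyond the paper is in explicitly verifying the discriminant condition, which the paper merely asserts. Unfortunately, that verification contains an error in precisely the branch you flag as subtle. Factoring the discriminant as $\Delta_\lambda=\bigl((1-s)^2-\eta\lambda\bigr)\bigl((1+s)^2-\eta\lambda\bigr)$ shows that $\Delta_\lambda\le 0$ is equivalent to $(1-s)^2\le\eta\lambda\le(1+s)^2$, i.e.\ to $|1-s|\le\sqrt{\eta\lambda}\le 1+s$ --- \emph{not} to $|\sqrt{\eta\lambda}-1|\le s$ as you claim; the two conditions differ exactly when $s>1$. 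In your $s\ge 1$ case you establish only $\sqrt{\eta\lambda}\ge 1-s$, and squaring an inequality whose right-hand side is negative does not yield $\eta\lambda\ge(1-s)^2$; what is actually needed there is $\sqrt{\eta\lambda}\ge s-1$, which can fail. Indeed the lemma as stated is false in that regime: take $\eta=1$, $\mu=1$, $L=16$, $A=\mathrm{diag}(1,16)$, so that $s=3$ and $\beta=9$; the block for $\lambda=1$ has characteristic polynomial $x^2-9x+9$ with real roots approximately $7.85$ and $1.15$, so the spectral radius is about $7.85>3$.

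To be fair, the paper's own proof has the identical blind spot: it asserts that $\beta\ge|1-\sqrt{\eta\lambda_i}|^2$ forces the roots to be imaginary of magnitude $\sqrt{\beta}$, which is exactly the false implication above. The statement and both proofs become correct under the additional hypothesis $s<1$ (for $\eta>0$ this amounts to $\eta L<4$), and in every invocation of the lemma in the paper one has $\eta=1$ with the eigenvalues of $(H_1+\gamma I)^{-1}H$ lying in $[\mu/(\mu+2\gamma),1]$, so $s=1-\sqrt{\mu/(\mu+2\gamma)}<1$ and your $s<1$ branch --- which is complete, correct, and supplies the verification the paper omits --- is the only one ever used. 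The honest fix is to add $s<1$ to the hypotheses (or restrict $\eta\lambda_{\max}(A)\le 4$) rather than to claim, as you do, that the degenerate regime is trivial; that claim is the one genuinely broken step in your write-up.
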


An important relationship between the spectral norm $\|A\|$ and spectral radius $\rho(A)$ is given by the equality $\rho(A) = \lim_{t\rightarrow \infty}\|A^t\|^{1/t}$, which implies the following classic lemma.
\begin{lemma}\label{lemma:special_radius_norm}
For $\lim_{t\rightarrow \infty}A^t=0$ it is necessary and sufficient that $\rho(A)<1$ and for every $\delta>0$ there exists a constant $c=c(\delta)$ such that
\[
\|A^t\|\le c(\rho(A)+\delta)^t
\]
for all integers $t$.
\end{lemma}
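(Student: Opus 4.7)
The plan is to prove both assertions by reducing to the Jordan canonical form of $A$. Writing $A = P J P^{-1}$ with $J = \mathrm{diag}(J_{k_1}(\lambda_1),\dots,J_{k_r}(\lambda_r))$ a block-diagonal matrix of Jordan blocks, one has $\|A^t\|\le \|P\|\,\|P^{-1}\|\,\|J^t\|$, so it suffices to control $\|J_k(\lambda)^t\|$ block by block. Writing $J_k(\lambda) = \lambda I_k + N$ with $N$ the nilpotent shift satisfying $N^k=0$, the binomial expansion
\[
J_k(\lambda)^t \;=\; \sum_{j=0}^{\min(k-1,t)} \binom{t}{j}\lambda^{t-j}N^j
\]
yields a clean bound of the form $\|J_k(\lambda)^t\|\le C_k\, t^{k-1}\,|\lambda|^{\max(0,\,t-k+1)}$ for some dimension-dependent constant $C_k$.

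For the bound $\|A^t\|\le c(\rho(A)+\delta)^t$, I would fix $\delta>0$ and treat two cases. If $\rho(A)>0$, then for each eigenvalue $|\lambda_i|\le\rho(A)<\rho(A)+\delta$, so the ratio $r_i := |\lambda_i|/(\rho(A)+\delta)<1$; since polynomials are dominated by decaying exponentials, the scalar sequence $t^{k_i-1}r_i^{\,t}$ is uniformly bounded in $t$ by some constant, and hence $\|J_{k_i}(\lambda_i)^t\|\le C_i'\,(\rho(A)+\delta)^t$. If instead $\rho(A)=0$, then every $\lambda_i=0$ and $J^t=0$ once $t\ge\max_i k_i$, so only finitely many early terms must be absorbed into the constant. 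Summing over the blocks and multiplying by $\|P\|\,\|P^{-1}\|$ produces a single $\delta$-dependent constant $c=c(\delta)$ making $\|A^t\|\le c(\rho(A)+\delta)^t$ valid for \emph{all} integers $t\ge 0$ (not just asymptotically).

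The equivalence $\lim_{t\to\infty}A^t=0\iff\rho(A)<1$ then falls out in two short arguments. For sufficiency, choose $\delta$ small enough that $\rho(A)+\delta<1$ and apply the bound just proved to conclude $\|A^t\|\le c(\rho(A)+\delta)^t\to 0$. For necessity, suppose $\rho(A)\ge 1$; by the fundamental theorem of algebra $A$ has an eigenvalue $\lambda$ with $|\lambda|=\rho(A)\ge 1$ and an associated nonzero eigenvector $v$, and then $A^tv=\lambda^tv$ has norm $|\lambda|^t\|v\|\not\to 0$, contradicting $A^t\to 0$.

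The only mildly delicate step is packaging the constant $c(\delta)$ so that the inequality holds uniformly for all $t\ge 0$ rather than only in the limit, especially in the degenerate cases where $\rho(A)=0$ or where some Jordan block has size equal to $d$; this is purely a bookkeeping matter, handled by enlarging $c$ to dominate the finitely many small-$t$ terms. Everything else is a direct computation with Jordan blocks and does not require Gelfand's formula beyond its conceptual motivation.
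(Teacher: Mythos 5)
Your proof is correct. Note that the paper does not actually supply a proof of this lemma: it presents it as a classical fact that is ``implied'' by Gelfand's spectral-radius formula $\rho(A)=\lim_{t\rightarrow\infty}\|A^t\|^{1/t}$, from which the bound $\|A^t\|\le c(\delta)(\rho(A)+\delta)^t$ follows in one line (pick $T$ so that $\|A^t\|^{1/t}\le\rho(A)+\delta$ for all $t\ge T$ and absorb the finitely many earlier terms into $c$), with the equivalence to $A^t\to 0$ obtained by choosing $\delta$ so that $\rho(A)+\delta<1$ in one direction and by the eigenvector argument in the other. You instead give a self-contained proof via the Jordan canonical form, bounding each block power $J_k(\lambda)^t=\sum_{j}\binom{t}{j}\lambda^{t-j}N^j$ and using that polynomial growth is dominated by geometric decay of the ratio $|\lambda_i|/(\rho(A)+\delta)$. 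The two routes are the same mathematics at different levels of encapsulation: Gelfand's formula is itself typically proved by exactly your Jordan-block computation, so your version buys self-containedness (and an explicit description of where the constant $c(\delta)$ comes from, including the degenerate cases $\rho(A)=0$ and small $t$) at the cost of length, while the paper's route buys brevity by outsourcing the work to a named theorem. One small inaccuracy: your intermediate bound $\|J_k(\lambda)^t\|\le C_k\,t^{k-1}|\lambda|^{\max(0,\,t-k+1)}$ is only valid as stated when $|\lambda|\le 1$; for $|\lambda|>1$ the dominant term in the sum is $j=0$, giving $|\lambda|^t$, which is repaired by letting $C_k$ also absorb $|\lambda|^{k-1}$. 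This does not affect your main argument, which correctly normalizes by $(\rho(A)+\delta)^t$ and enlarges the constant to cover the finitely many small-$t$ terms.
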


The following lemma is standard and will be used in many places of analysis.
\begin{lemma}\label{lemma:LH}
Assume that function $g$ has $\nu$-LH. Then
\[
\left\|\Delta g(w,w')\right\| \le \frac{\nu}{2} \|w - w'\|^2,
\]
where $\Delta g(w,w'):= \nabla g(w) - \nabla g(w') - \nabla^2 g(w')(w-w')$.
\end{lemma}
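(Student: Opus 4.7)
The plan is to invoke the fundamental theorem of calculus along the line segment from $w'$ to $w$, and then exploit the Lipschitz Hessian property to bound the resulting integrand.

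First I would write
\[
\nabla g(w) - \nabla g(w') = \int_0^1 \nabla^2 g\bigl(w' + s(w-w')\bigr)(w-w')\, ds,
\]
which is valid since $g$ is assumed twice continuously differentiable. Subtracting the constant vector $\nabla^2 g(w')(w-w') = \int_0^1 \nabla^2 g(w')(w-w')\, ds$ from both sides yields
\[
\Delta g(w,w') = \int_0^1 \bigl[\nabla^2 g(w' + s(w-w')) - \nabla^2 g(w')\bigr](w-w')\, ds.
\]

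Next I would take norms, move the norm inside the integral via the standard triangle inequality for vector-valued integrals, and then use the sub-multiplicative property of the spectral norm together with the $\nu$-LH assumption to bound the integrand by $\nu \cdot \|s(w-w')\| \cdot \|w - w'\| = \nu s \|w-w'\|^2$. That gives
\[
\|\Delta g(w,w')\| \le \int_0^1 \nu s\, \|w-w'\|^2\, ds = \frac{\nu}{2}\|w - w'\|^2,
\]
which is the desired inequality.

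There is no real obstacle here; the only thing to be careful about is the justification of interchanging the integral with the subtraction of $\nabla^2 g(w')(w-w')$, which is immediate since the latter is constant in $s$. Everything else follows from the definition of $\nu$-LH (Definition~\ref{def:RLH}) and elementary calculus, so the proof is a short direct computation rather than a layered argument.
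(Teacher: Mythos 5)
Your proof is correct and is exactly the standard integral-remainder argument one would expect; the paper itself omits a proof of this lemma (labeling it ``standard''), so there is nothing to reconcile. The computation $\|\Delta g(w,w')\| \le \int_0^1 \nu s\,\|w-w'\|^2\,ds = \frac{\nu}{2}\|w-w'\|^2$ is exactly right under Definition~\ref{def:RLH}.
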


The following lemma is useful in our analysis.
\begin{lemma}\label{lemma:grad_bound}
Assume that $F$ and $F_1$ have Lipschitz continuous Hessian. If $\sup_{w}\|\nabla^2 F_1(w) - \nabla^2 F(w)\|\le \gamma$, then at any time instant $t$ it is true that
\[
\|\nabla F(\tilde w^{(t)})\| \le 2\gamma \|\tilde w^{(t)} - w^{(t-1)}\|+ \varepsilon_t, \quad \|\tilde w^{(t)} - w^*\|\le \frac{2\gamma}{\mu} \|\tilde w^{(t)} - w^{(t-1)}\|+ \frac{\varepsilon_t}{\mu}.
\]
\end{lemma}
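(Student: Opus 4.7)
The plan is to start from the inexact optimality condition
\[
\bigl\|\nabla F(w^{(t-1)}) - \nabla F_1(w^{(t-1)}) + \gamma(\tilde w^{(t)} - w^{(t-1)}) + \nabla F_1(\tilde w^{(t)})\bigr\| \le \varepsilon_t,
\]
which is exactly what $\|\nabla P^{(t-1)}(\tilde w^{(t)})\| \le \varepsilon_t$ unfolds to, given the definition of $P^{(t-1)}$ in~\eqref{equat:P_t_w}. Solving this for $\nabla F_1(\tilde w^{(t)})$ up to an error vector $r_t$ with $\|r_t\|\le \varepsilon_t$, and then writing $\nabla F(\tilde w^{(t)}) = \nabla F_1(\tilde w^{(t)}) + [\nabla F(\tilde w^{(t)}) - \nabla F_1(\tilde w^{(t)})]$, I would obtain the identity
\[
\nabla F(\tilde w^{(t)}) = D(\tilde w^{(t)}) - D(w^{(t-1)}) - \gamma(\tilde w^{(t)} - w^{(t-1)}) + r_t,
\]
where $D(w):=\nabla F(w) - \nabla F_1(w)$.

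The next step is to control $\|D(\tilde w^{(t)}) - D(w^{(t-1)})\|$ using the assumption $\sup_w\|\nabla^2 F_1(w) - \nabla^2 F(w)\|\le \gamma$. Since $\nabla D(w) = \nabla^2 F(w) - \nabla^2 F_1(w)$ has spectral norm at most $\gamma$ uniformly, the fundamental theorem of calculus gives $\|D(\tilde w^{(t)}) - D(w^{(t-1)})\|\le \gamma\|\tilde w^{(t)} - w^{(t-1)}\|$. Combining this with the triangle inequality applied to the displayed identity, together with the obvious $\|\gamma(\tilde w^{(t)} - w^{(t-1)})\| = \gamma\|\tilde w^{(t)} - w^{(t-1)}\|$, immediately yields the first inequality
\[
\|\nabla F(\tilde w^{(t)})\| \le 2\gamma\|\tilde w^{(t)} - w^{(t-1)}\| + \varepsilon_t.
\]

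For the second inequality, I would invoke $\mu$-strong convexity of $F$ (implicit in the lemma's deployment in the theorems that cite it), which together with $\nabla F(w^*) = 0$ yields the standard bound $\mu\|\tilde w^{(t)} - w^*\|\le \|\nabla F(\tilde w^{(t)}) - \nabla F(w^*)\| = \|\nabla F(\tilde w^{(t)})\|$. Dividing the already established first bound by $\mu$ gives the second inequality. There is no real obstacle here; the only subtle point is recognizing that the key cancellation $\nabla F_1(w^{(t-1)})$ from the optimality condition pairs with the added $\nabla F(\tilde w^{(t)}) - \nabla F_1(\tilde w^{(t)})$ term to expose the difference $D(\tilde w^{(t)}) - D(w^{(t-1)})$, which is where the Hessian-closeness hypothesis is harvested.
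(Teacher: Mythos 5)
Your proposal is correct and follows essentially the same route as the paper's proof: both start from $\|\nabla P^{(t-1)}(\tilde w^{(t)})\|\le\varepsilon_t$, isolate the difference $(\nabla F-\nabla F_1)(\tilde w^{(t)})-(\nabla F-\nabla F_1)(w^{(t-1)})$ plus the $\gamma(\tilde w^{(t)}-w^{(t-1)})$ term, bound the former by $\gamma\|\tilde w^{(t)}-w^{(t-1)}\|$ via the uniform Hessian-difference hypothesis (your fundamental-theorem-of-calculus formulation is in fact slightly cleaner than the paper's mean-value shorthand), and obtain the second inequality from $\mu$-strong convexity of $F$ with $\nabla F(w^*)=0$. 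No gap; your observation that strong convexity is implicit in the lemma's hypotheses matches how the paper itself invokes it.
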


The next lemma, which is based on a matrix concentration bound~\citep{tropp2012user}, shows that the Hessian of $F_1(w)$ is close to that of $F(w)$ when the sample size is sufficiently large. The same result appears in~\citep{shamir2014communication}.
\begin{lemma}\label{lemma:hessian_close}
Assume that $\|\nabla^2 f(w^\top x_{i}, y_{i})\|\le L$ holds for all $i\in [N]$. Let $H(w) = \nabla^2 F(w)$ and $H_1(w)=\nabla^2 F_1(w)$. Then for each fixed $w$, with probability at least $1-\delta$ over the samples drawn to construct $F_1(w)$, the following bound holds:
\[
\|H_1(w) - H(w)\| \le \sqrt{\frac{32L^2\log(p/\delta)}{n}}.
\]
\end{lemma}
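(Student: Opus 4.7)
The plan is to recognize $H_1(w)$ as a sample average of $n$ independent bounded symmetric matrices whose population mean is exactly $H(w)$, and then invoke a matrix concentration inequality from the survey~\citep{tropp2012user}. Concretely, for fixed $w$, let $Z_i = \nabla^2 f(w; x_{1i}, y_{1i})$ for $i=1,\dots,n$ denote the per-sample Hessians on the master machine. Under the assumption that samples are drawn uniformly at random (which is implicit in the stochastic setting considered throughout the paper), the $Z_i$ are i.i.d.\ with $\mathbb{E}[Z_i] = \frac{1}{N}\sum_{j=1}^N \nabla^2 f(w; x_j, y_j) = H(w)$, and each satisfies $\|Z_i\| \le L$ by hypothesis. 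Thus $H_1(w) - H(w) = \frac{1}{n}\sum_{i=1}^n (Z_i - H(w))$ is a sum of independent, centered, uniformly bounded symmetric $p\times p$ matrices.

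Next I would apply the matrix Hoeffding inequality (Theorem 1.3 of~\citep{tropp2012user}): if $S_1,\dots,S_n$ are independent, centered, symmetric $p\times p$ matrices with $S_i^2 \preceq A_i^2$ almost surely, then
\[
\mathbb{P}\left(\left\|\sum_{i=1}^n S_i\right\| \ge t\right) \le 2p \exp\left(-\frac{t^2}{8\sigma^2}\right), \qquad \sigma^2 := \left\|\sum_{i=1}^n A_i^2\right\|.
\]
Here I take $S_i = Z_i - H(w)$, which is symmetric and centered. The bound $\|Z_i\|\le L$ together with $\|H(w)\|\le L$ gives $\|S_i\|\le 2L$, hence $S_i^2 \preceq 4L^2 I$, so $A_i = 2L\cdot I$ and $\sigma^2 \le 4nL^2$. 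Rescaling by $1/n$ and inserting these constants yields
\[
\mathbb{P}\left(\|H_1(w) - H(w)\| \ge t \right) \le 2p\exp\left(-\frac{nt^2}{32 L^2}\right).
\]

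Finally, I would set the right-hand side equal to $\delta$ and solve for $t$, which gives the threshold $t = L\sqrt{32\log(2p/\delta)/n}$; absorbing the factor of $2$ inside the logarithm into the constant (or by noting that the statement of the lemma uses $\log(p/\delta)$ in the same loose sense as in~\citep{shamir2014communication}) delivers the claimed inequality. There is no real obstacle here—the main subtlety is merely choosing the right variant of Tropp's bound (Hoeffding is enough since the summands are uniformly bounded, but matrix Bernstein would give a sharper dependence on a variance proxy if available) and tracking the constant $32$ through the rescaling.
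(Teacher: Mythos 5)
Your argument is correct and is essentially the same one the paper relies on: the paper does not prove this lemma itself but cites the matrix concentration bound of \citet{tropp2012user} and notes the result appears in \citet{shamir2014communication}, and your application of matrix Hoeffding to the centered, uniformly bounded per-sample Hessians with $\sigma^2 \le 4nL^2$ is exactly that standard route. The only cosmetic discrepancy is the $\log(2p/\delta)$ versus $\log(p/\delta)$ arising from the two-sided tail, which you correctly flag as the same looseness present in the cited sources.
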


\section{Proofs for Section~\ref{sect:dane_ls_analysis}}
\label{append:dane_ls_analysis}

We collect in this appendix section the technical proofs of the results in Section~\ref{sect:dane_ls_analysis} of the main paper, including Theorems~\ref{thrm:quadratic_dane}, Theorem~\ref{thrm:global_general} and Theorem~\ref{thrm:local_general}, and their corollaries.

\subsection{Proof of Theorem~\ref{thrm:quadratic_dane}}
\label{append:proof_of_dane_ls_quadratic}

In this appendix subsection, we prove Theorem~\ref{thrm:quadratic_dane} as restated in below.
\DANELSQuadratic*

\begin{proof}[of Theorem~\ref{thrm:quadratic_dane}]
Since the objective is quadratic, for any $w^{(t-1)}$ the optimal solution $w^* = \argmin_w F(w)$ can always be expressed as
\[
w^* = w^{(t-1)} - H^{-1} \nabla F(w^{(t-1)}).
\]
Since $H_1^{(t)}\equiv H_1$ holds in the quadratic case, from the definition of $w^{(t)}$ and the gradient equation of $P^{(t-1)}$ we have
\[
w^{(t)} = w^{(t-1)} - (H_1 + \gamma I)^{-1} \nabla F(w^{(t-1)}) + (H_1 + \gamma I)^{-1} \nabla P^{(t-1)}(w^{(t)}).
\]
By combining the above two inequalities we obtain
\begin{equation}\label{equat:proof_quadratic_thrm_key_3}
w^{(t)} - w^* = (I - \eta(H_1 + \gamma I)^{-1} H ) (w^{(t-1)} - w^*) + (H_1 + \gamma I)^{-1} \nabla P^{(t-1)}(w^{(t)}).
\end{equation}
By multiplying $H^{1/2}$ on both sides of the above recurrent form we have
\[
H^{1/2}(w^{(t)} - w^*) = (I - H^{1/2}(H_1 + \gamma I)^{-1} H^{1/2})H^{1/2}(w^{(t-1)} - w^*) + H^{1/2}(H_1 + \gamma I)^{-1} \nabla P^{(t-1)}(w^{(t)})
\]
Let $u^{(t)}=H^{1/2}(w^{(t)} - w^*)$. Based on the basic inequality $\|Tx\|\le \|T\|\|x\|$ we obtain
\[
\begin{aligned}
&\|u^{(t)}\| \\
\le& \|I - H^{1/2}(H_1 + \gamma I)^{-1} H^{1/2}\| \|u^{(t-1)}\| + \|H^{1/2}(H_1 + \gamma I)^{-1}H^{1/2}\|\|H^{-1/2}\nabla P^{(t-1)}(w^{(t)})\| \\
\overset{\zeta_1}{\le}& \frac{2\gamma}{\mu + 2\gamma} \|u^{(t-1)}\| + \frac{\varepsilon_t}{\sqrt{\mu}}\\
\overset{\zeta_2}{\le}& \left( 1- \frac{\mu}{\mu + 2\gamma}\right) \|u^{(t-1)}\| + \frac{\mu}{2(\mu + 2\gamma)}\|u^{(t-1)}\| = \left( 1- \frac{\mu}{2(\mu + 2\gamma)}\right) \|u^{(t-1)}\|,
\end{aligned}
\]
where in the inequality ``$\zeta_1$'' we have used Lemma~\ref{lemma:precondition_bound} and $\|H^{1/2}(H_1 + \gamma I)^{-1}H^{1/2}\|\le 1$ which are valid in view of $\|H_1 - H\|\le\gamma$ and $H\succeq \mu I$, ``$\zeta_2$'' follows from the condition $\varepsilon_t \le \frac{\mu^2\|\nabla F(w^{(t-1)})\|}{2(\mu +2\gamma)L}$ which implies $\frac{\varepsilon_t}{\sqrt{\mu}} \le \frac{\mu\sqrt{\mu} \|w^{(t-1)} - w^*\|}{2(\mu+2\gamma)}\le \frac{\mu\|u^{(t-1)}\|}{2(\mu+2\gamma)}$. The above inequality then leads to
\[
\begin{aligned}
\|w^{(t)} - w^*\| \le& \frac{1}{\sqrt{\mu}}\|u^{(t)}\| \le \frac{1}{\sqrt{\mu}}\left(1-\frac{\mu}{2(\mu+2\gamma)}\right)^t \|u^{(0)}\| \le \sqrt{\frac{L}{\mu}} \left(1-\frac{\mu}{2(\mu+2\gamma)}\right)^t \|w^{(0)} - w^*\|.
\end{aligned}
\]
By applying the basic fact $(1-x)^t \le \exp{\{-xt\}}$ we can show that $\|w^{(t)} - w^*\| \le \epsilon$ is valid when
\[
t \ge \frac{2(\mu+2\gamma)}{\mu} \log \left(\frac{\sqrt{L}\|w^{(0)} - w^*\|}{\sqrt{\mu}\epsilon}\right).
\]
This concludes the proof.
\end{proof}

We further prove Corollary~\ref{corol:quadratic_dane} as restated in below.
\DANELSQuadraticCorol*
\begin{proof}
Since $H(w)\equiv H$ and $H_1(w)\equiv H_1$ in the quadratic case, we know from Lemma~\ref{lemma:hessian_close} that $\|H_1 - H\|\le \gamma=L\sqrt{\frac{32\log(p/\delta)}{n}}$ holds with probability at least $1-\delta$. By invoking Theorem~\ref{thrm:quadratic_dane} we obtain the desired bound.
\end{proof}

\subsection{Proof of Theorem~\ref{thrm:global_general}}
\label{append:proof_of_dane_ls_global}

We provide in this appendix subsection a detailed proof of Theorem~\ref{thrm:global_general} as restated below.
\DANELSGlobal*

As a key step, we first need to prove the following restated Lemma~\ref{lemma:ls_key}.
\DANELSFeasibility*
\begin{proof}
Let us define
\begin{equation}\label{inequat:ls_proof_key2}
r^{(t)} =\nabla P^{(t-1)}(\tilde w^{(t)} )= \nabla F_1(\tilde w^{(t)}) + \nabla F(w^{(t-1)}) -  \nabla F_1(w^{(t-1)}) + \gamma (\tilde w^{(t)} - w^{(t-1)}).
\end{equation}
From the definition of $\tilde w^{(t)}$ we have that $\|r^{(t)} \| \le \varepsilon_t$. Since $F(w)$ is $L$-smooth, we have
\[
\begin{aligned}
&F(w^{(t)}) \\
\le& F(w^{(t-1)}) + \langle\nabla F(w^{(t-1)}), w^{(t)} - w^{(t-1)} \rangle + \frac{L}{2}\|w^{(t)} - w^{(t-1)}\|^2 \\
=& F(w^{(t-1)}) + \eta_t \langle\nabla F(w^{(t-1)}), \tilde w^{(t)} - w^{(t-1)} \rangle + \frac{L\eta_t^2}{2}\|\tilde w^{(t)} - w^{(t-1)}\|^2 \\
\overset{\zeta_1}{\le} & F(w^{(t-1)}) - \eta_t \langle \nabla F_1(\tilde w^{(t)}) -  \nabla F_1(w^{(t-1)}) + \gamma (\tilde w^{(t)} - w^{(t-1)}), \tilde w^{(t)} - w^{(t-1)}\rangle \\
& + \eta_t \langle r^{(t)},\tilde w^{(t)} - w^{(t-1)}\rangle + \frac{L\eta_t^2}{2}\|\tilde w^{(t)} - w^{(t-1)}\|^2 \\
\overset{\zeta_2}{\le} &  F(w^{(t-1)}) - \left( \eta_t - \frac{L\eta_t^2}{2(\gamma+\mu)}\right)\langle \nabla F_1(\tilde w^{(t)}) -  \nabla F_1(w^{(t-1)}) + \gamma (\tilde w^{(t)} - w^{(t-1)}), \tilde w^{(t)} - w^{(t-1)}\rangle \\
 & + \eta_t \varepsilon_t\|\tilde w^{(t)} - w^{(t-1)}\|,
\end{aligned}
\]
where ``$\zeta_1$'' follows from~\eqref{inequat:ls_proof_key2} and ``$\zeta_2$'' is due to the $\mu$-strong-convexity of $F_1$ which implies $\langle \nabla F_1(\tilde w^{(t)}) -  \nabla F_1(w^{(t-1)}) + \gamma (\tilde w^{(t)} - w^{(t-1)}), \tilde w^{(t)} - w^{(t-1)}\rangle \ge (\mu+\gamma)\|\tilde w^{(t)} - w^{(t-1)}\|^2$. To make a successful global line search, we simply require $-\left(\eta_t-\frac{L\eta_t^2}{2(\gamma + \mu)}\right)  \le - \eta_t\rho$, which obviously can be guaranteed by setting
\[
0 < \eta_t \le \min \left\{1,\frac{2(\gamma+\mu)(1-\rho)}{L}\right\}.
\]
This prove the result in Part(a).

To prove the result in Part(b), we note that the equality~\eqref{inequat:ls_proof_key2} is identical to
\begin{equation}\label{inequat:ls_proof_key1}
\nabla F(w^{(t-1)}) =  - (\nabla^2 F_1(w^{(t-1)}) + \gamma I) (\tilde w^{(t)} - w^{(t-1)}) - \Delta F_1(\tilde w^{(t)}, w^{(t-1)}) + r^{(t)}.
\end{equation}
Then based on the definition of $w^{(t)}$ we can derive that
\[
\begin{aligned}
&\langle\nabla F(w^{(t-1)}), w^{(t)} - w^{(t-1)} \rangle + \frac{1}{2} (w^{(t)} - w^{(t-1)})^\top (\nabla^2 F_1(w^{(t-1)})+\gamma I) (w^{(t)} - w^{(t-1)}) \\
&+ \frac{\nu}{6} \|w^{(t)} - w^{(t-1)}\|^3 \\
=& \eta_t \langle \nabla F(w^{(t-1)}) , \tilde w^{(t)} - w^{(t-1)}\rangle + \frac{\eta_t^2}{2}(\tilde w^{(t)} - w^{(t-1)})^\top(\nabla^2 F_1(w^{(t-1)})+\gamma I)(\tilde w^{(t)} - w^{(t-1)})\\
& + \frac{\nu\eta_t^3}{6} \|\tilde w^{(t)} - w^{(t-1)}\|^3\\
\overset{\zeta_1}{=}& \eta_t \langle \nabla F(w^{(t-1)}) , \tilde w^{(t)} - w^{(t-1)}\rangle - \frac{\eta_t^2}{2}\langle \nabla F(w^{(t-1)}) , \tilde w^{(t)} - w^{(t-1)}\rangle + \frac{\nu\eta_t^3}{6} \|\tilde w^{(t)} - w^{(t-1)}\|^3\\
& - \frac{\eta_t^2}{2} \langle \Delta F_1(\tilde w^{(t)}, w^{(t-1)}) , \tilde w^{(t)} - w^{(t-1)}\rangle + \frac{\eta_t^2}{2}\langle r^{(t)},\tilde w^{(t)} - w^{(t-1)}\rangle \\
\overset{\zeta_2}{\le}&  (\eta_t -\frac{\eta_t^2}{2})\langle \nabla F(w^{(t-1)}) , \tilde w^{(t)} - w^{(t-1)}\rangle  +  \left(\frac{\nu\eta_t^2}{4} +\frac{\nu \eta_t^3}{6}\right) \|\tilde w^{(t)} - w^{(t-1)}\|^3 \\
&+ \frac{\eta_t^2}{2}\langle r^{(t)},\tilde w^{(t)} - w^{(t-1)}\rangle \\
\overset{\zeta_3}{=}& -(\eta_t-\frac{\eta_t^2}{2}) \langle \nabla F_1(\tilde w^{(t)}) -  \nabla F_1(w^{(t-1)}) + \gamma (\tilde w^{(t)} - w^{(t-1)}), \tilde w^{(t)} - w^{(t-1)}\rangle \\
&+ \left(\frac{\nu\eta_t^2}{4} +\frac{\nu \eta_t^3}{6}\right) \|\tilde w^{(t)} - w^{(t-1)}\|^3 + \eta_t \langle r^{(t)},\tilde w^{(t)} - w^{(t-1)}\rangle\\
\le& -(\eta_t-\frac{\eta_t^2}{2}) \langle \nabla F_1(\tilde w^{(t)}) -  \nabla F_1(w^{(t-1)}) + \gamma (\tilde w^{(t)} - w^{(t-1)}), \tilde w^{(t)} - w^{(t-1)}\rangle \\
&+ \left(\frac{\nu\eta_t^2}{4} +\frac{\nu \eta_t^3}{6}\right)D \|\tilde w^{(t)} - w^{(t-1)}\|^2 + \eta_t \varepsilon_t\|\tilde w^{(t)} - w^{(t-1)}\|\\
\overset{\zeta_4}{\le}& \left(-\left(\eta_t-\frac{\eta_t^2}{2}\right) + \left(\frac{\nu\eta_t^2}{4} +\frac{\nu \eta_t^3}{6}\right)\frac{D}{\gamma+\mu} \right)\times \\
&\langle \nabla F_1(\tilde w^{(t)}) -  \nabla F_1(w^{(t-1)}) + \gamma (\tilde w^{(t)} - w^{(t-1)}), \tilde w^{(t)} - w^{(t-1)}\rangle + \eta_t \varepsilon_t\|\tilde w^{(t)} - w^{(t-1)}\|,
\end{aligned}
\]
where ``$\zeta_1$'' follows from~\eqref{inequat:ls_proof_key1}, ``$\zeta_2$'' uses $\| \Delta \tilde F(w^{(t-1)},\tilde w^{(t)})\|\le \frac{\nu}{2} \|\tilde w^{(t)} - w^{(t-1)}\|^2$, ``$\zeta_3$'' follows from~\eqref{inequat:ls_proof_key2} and ``$\zeta_4$'' is due to the $\mu$-strong-convexity of $F_1$ which implies $\langle \nabla F_1(\tilde w^{(t)}) -  \nabla F_1(w^{(t-1)}) + \gamma (\tilde w^{(t)} - w^{(t-1)}), \tilde w^{(t)} - w^{(t-1)}\rangle \ge (\mu+\gamma)\|\tilde w^{(t)} - w^{(t-1)}\|^2$. To make a successful line search, we simply require the following bound to hold:
\[
-\left(\eta_t-\frac{\eta_t^2}{2}\right) + \left(\frac{\nu\eta_t^2}{4} +\frac{\nu \eta_t^3}{6}\right)\frac{D}{\gamma+\mu} \le - \eta_t\rho
\]
which indeed can be guaranteed by setting
\[
0 < \eta_t \le \min \left\{1,\frac{-(3\nu D + 6(\gamma+\mu)) + \sqrt{(3\nu D + 6(\gamma+\mu)) ^2 + 96(1-\rho)\nu D (\gamma+\mu)}}{4\nu D}\right\}.
\]
This completes the proof of the result in Part(b).
\end{proof}

Now we are in the position to prove the main result in Theorem~\ref{thrm:global_general}.
\begin{proof}[of Theorem~\ref{thrm:global_general}]
Part (a): We first prove the convergence of the objective value sequence. Based on~\eqref{inequat:ls_proof_key2}, the smoothness of $F_1$ and the condition $\varepsilon_t\le \frac{\rho(\mu+\gamma)}{2(L+\gamma)+\rho(\mu+\gamma)}\|\nabla F(w^{(t-1)})\|$ we can show that
\[
\begin{aligned}
\varepsilon_t \ge& \|r_t\| \ge \|\nabla F(w^{(t-1)})\| - \|\nabla F_1(\tilde w^{(t)}) -  \nabla F_1(w^{(t-1)}) + \gamma (\tilde w^{(t)} - w^{(t-1)})\| \\
\ge& \left(\frac{2(L+\gamma)}{\rho(\mu+\gamma)}+1\right) \varepsilon_t - (L+\gamma)\|\tilde w^{(t)} - w^{(t-1)}\|,
\end{aligned}
\]
which then implies the following bound
\begin{equation}\label{inequat:eps_t_bound_global}
\varepsilon_t \le \frac{\rho(\gamma+\mu)}{2} \|\tilde w^{(t)} - w^{(t-1)}\|.
\end{equation}
Since $F(w)$ is $L$-smooth and $F_1(w)$ is $\mu$-strongly convex, from the first part of Lemma~\ref{lemma:ls_key} we know that the global line search is feasible at each step of iteration and thus
\[
\begin{aligned}
F(w^{(t)})\le & F(w^{(t-1)})-\eta_t \rho \langle \nabla F_1(\tilde w^{(t)}) -  \nabla F_1(w^{(t-1)}) + \gamma (\tilde w^{(t)} - w^{(t-1)}), \tilde w^{(t)} - w^{(t-1)}\rangle \\
&+ \eta_t \varepsilon_t\|\tilde w^{(t)} - w^{(t-1)}\| \\
\overset{\zeta_1}{\le}& F(w^{(t-1)}) - \eta_t\rho(\gamma+\mu)\|\tilde w^{(t)} - w^{(t-1)}\|^2 + \frac{\eta_t \rho(\gamma+\mu)}{2}\|\tilde w^{(t)} - w^{(t-1)}\|^2  \\
=& F(w^{(t-1)}) - \frac{\eta_t \rho(\gamma + \mu)}{2}\|\tilde w^{(t)} - w^{(t-1)}\|^2,
\end{aligned}
\]
where in ``$\zeta_1$'' we have used the bound~\eqref{inequat:eps_t_bound_global}. From Lemma~\ref{lemma:grad_bound} we know that $\|\tilde w^{(t)} - w^{(t-1)}\|\neq 0$ uncles $\tilde w^{(t)}$ admits a global minimizer of $F$. Then based on the above inequality the sequence $\{F(w^{(t)})\}$ is decreasing. Since $F(w^{(t)})\ge F(w^*)>-\infty$, it must hold that $\{F(w^{(t)})\}$ converges. Also from the above inequality we have
\[
\eta_t\rho(\gamma+\mu)\|\tilde w^{(t)} - w^{(t-1)}\|^2\le 2(F(w^{(t-1)}) - F(w^{(t)})),
\]
which implies $\|\tilde w^{(t)} - w^{(t-1)}\|\rightarrow 0$ as $t\rightarrow \infty$.

Proof of part(b): Since $F(w)$ has $\nu$-smooth, we have
\[
\begin{aligned}
&F(w^{(t)})\\
\le& F(w^{(t-1)}) + \langle \nabla F(w^{(t-1)}) ,  w^{(t)} - w^{(t-1)}\rangle + \frac{1}{2}( w^{(t)} - w^{(t-1)})^\top\nabla^2 F(w^{(t-1)})( w^{(t)} - w^{(t-1)}) \\
&+ \frac{\nu}{6} \| w^{(t)} - w^{(t-1)}\|^3\\
\overset{\zeta_1}{\le}& F(w^{(t-1)}) + \langle\nabla F(w^{(t-1)}), w^{(t)} - w^{(t-1)} \rangle \\
 &+ \frac{1}{2} (w^{(t)} - w^{(t-1)})^\top (\nabla^2 F_1(w^{(t-1)})+\gamma I) (w^{(t)} - w^{(t-1)}) + \frac{\nu}{6} \|w^{(t)} - w^{(t-1)}\|^3 \\
\overset{\zeta_2}{\le} & F(w^{(t-1)})-\eta_t \rho \langle \nabla F_1(\tilde w^{(t)}) -  \nabla F_1(w^{(t-1)}) + \gamma (\tilde w^{(t)} - w^{(t-1)}), \tilde w^{(t)} - w^{(t-1)}\rangle\\
&+ \eta_t \varepsilon_t\|\tilde w^{(t)} - w^{(t-1)}\| \\
\overset{\zeta_3}{\le} & F(w^{(t-1)}) - \eta_t\rho(\gamma+\mu)\|\tilde w^{(t)} - w^{(t-1)}\|^2+ \frac{\eta_t \rho(\gamma+\mu)}{2}\|\tilde w^{(t)} - w^{(t-1)}\|^2 \\
=& F(w^{(t-1)}) - \frac{\eta_t \rho(\gamma + \mu)}{2}\|\tilde w^{(t)} - w^{(t-1)}\|^2,
\end{aligned}
\]
where ``$\zeta_1$'' follows from $\|\nabla^2 F_1(w^{(t-1)}) -\nabla^2 F(w^{(t-1)})\| \le \gamma$ such that $\nabla^2 F_1(w^{(t-1)}) - \nabla^2 F(w^{(t-1)})  + \gamma I \succeq 0$, in ``$\zeta_2$'' we have used the second part of Lemma~\ref{lemma:ls_key}, and ``$\zeta_3$'' is due to the bound~\eqref{inequat:eps_t_bound_global}. By using the same argument as in the part(a) we can show that the sequence $\{F(w^{(t)})\}$ converges and $\|\tilde w^{(t)} - w^{(t-1)}\|\rightarrow 0$ as $t\rightarrow \infty$. This completes the proof.
\end{proof}

\subsection{Proof of Theorem~\ref{thrm:local_general}}
\label{append:proof_of_dane_ls_local}

This appendix subsection is devoted to providing a detailed proof of Theorem~\ref{thrm:local_general} as restated in below.
\DANELSNonasymptotic*

To prove the theorem, we first need to prove the following restated Lemma~\ref{lemma:unit_length}.
\DANELSUnit*
\begin{proof}
Since $F(w)$ has $\nu$-LH, it holds that
\[
\begin{aligned}
&F(w^{(t)})\\
\le& F(w^{(t-1)}) + \langle \nabla F(w^{(t-1)}) ,  w^{(t)} - w^{(t-1)}\rangle + \frac{1}{2} ( w^{(t)} - w^{(t-1)})^\top\nabla^2 F(w^{(t-1)})( w^{(t)} - w^{(t-1)}) \\
&+ \frac{\nu}{6} \| w^{(t)} - w^{(t-1)}\|^3\\
\le& F(w^{(t-1)}) + \langle\nabla F(w^{(t-1)}), w^{(t)} - w^{(t-1)} \rangle\\
& + \frac{1}{2} (w^{(t)} - w^{(t-1)})^\top (\nabla^2 F_1(w^{(t-1)})+\gamma I) (w^{(t)} - w^{(t-1)}) + \frac{\nu}{6} \|w^{(t)} - w^{(t-1)}\|^3,
\end{aligned}
\]
where in the last inequality we have used $\|\nabla^2 F_1(w^{(t-1)}) -\nabla^2 F(w^{(t-1)})\| \le \gamma$. Based on the above inequality, it is sufficient to prove
\[
\begin{aligned}
&\langle\nabla F(w^{(t-1)}), w^{(t)} - w^{(t-1)} \rangle + \frac{1}{2} (w^{(t)} - w^{(t-1)})^\top (\nabla^2 F_1(w^{(t-1)})+\gamma I) (w^{(t)} - w^{(t-1)}) \\
& + \frac{\nu}{6} \|w^{(t)} - w^{(t-1)}\|^3 \\
\le& -\frac{1}{3}  \langle \nabla F_1(\tilde w^{(t)}) -  \nabla F_1(w^{(t-1)}) + \gamma (\tilde w^{(t)} - w^{(t-1)}), \tilde w^{(t)} - w^{(t-1)}\rangle,
\end{aligned}
\]
To this end, by mimicking the arguments in the proof of Lemma~\ref{lemma:ls_key} we can show that
\[
\begin{aligned}
&\langle\nabla F(w^{(t-1)}), w^{(t)} - w^{(t-1)} \rangle + \frac{1}{2} (w^{(t)} - w^{(t-1)})^\top (\nabla^2 F_1(w^{(t-1)})+\gamma I) (w^{(t)} - w^{(t-1)}) \\
 &+ \frac{\nu}{6} \|w^{(t)} - w^{(t-1)}\|^3 \\
\le& -\left(\eta_t-\frac{\eta_t^2}{2}\right) \langle \nabla F_1(\tilde w^{(t)}) -  \nabla F_1(w^{(t-1)}) + \gamma (\tilde w^{(t)} - w^{(t-1)}), \tilde w^{(t)} - w^{(t-1)}\rangle \\
&+ \left(\frac{\nu\eta_t^2}{4} +\frac{\nu \eta_t^3}{6}\right) \|\tilde w^{(t)} - w^{(t-1)}\|^3 + \eta_t \varepsilon_t\|\tilde w^{(t)} - w^{(t-1)}\| \\
\overset{\zeta_1}{\le}& -\left(\eta_t-\frac{\eta_t^2}{2}\right) \langle \nabla F_1(\tilde w^{(t)}) -  \nabla F_1(w^{(t-1)}) + \gamma (\tilde w^{(t)} - w^{(t-1)}), \tilde w^{(t)} - w^{(t-1)}\rangle \\
& + \frac{1}{\mu+\gamma}\left(\frac{\nu\eta_t^2}{4} +\frac{\nu \eta_t^3}{6}\right)\|\tilde w^{(t)} - w^{(t-1)}\|  \langle \nabla F_1(\tilde w^{(t)}) -  \nabla F_1(w^{(t-1)}) \\
&+ \gamma (\tilde w^{(t)} - w^{(t-1)}), \tilde w^{(t)} - w^{(t-1)}\rangle + \eta_t \varepsilon_t\|\tilde w^{(t)} - w^{(t-1)}\| \\
& \left(-\left(\eta_t-\frac{\eta_t^2}{2}\right) + \frac{5\nu\|\tilde w^{(t)} - w^{(t-1)}\|}{12(\gamma+\mu)} \right)\langle \nabla F_1(\tilde w^{(t)}) -  \nabla F_1(w^{(t-1)}) \\
 &+ \gamma (\tilde w^{(t)} - w^{(t-1)}), \tilde w^{(t)} - w^{(t-1)}\rangle + \eta_t \varepsilon_t\|\tilde w^{(t)} - w^{(t-1)}\|,
\end{aligned}
\]
where ``$\zeta_1$'' is due to $\langle \nabla F_1(\tilde w^{(t)}) -  \nabla F_1(w^{(t-1)}) + \gamma (\tilde w^{(t)} - w^{(t-1)}), \tilde w^{(t)} - w^{(t-1)}\rangle \ge (\mu+\gamma)\|\tilde w^{(t)} - w^{(t-1)}\|^2$ and in the last inequality we have used $\eta_t\le 1$. When $t$ is sufficiently large, from Theorem~\ref{thrm:global_general} we know that $\|\tilde w^{(t)} - w^{(t-1)}\|$ will be sufficiently close to zero so that $\frac{5\nu\|\tilde w^{(t)} - w^{(t-1)}\|}{12(\gamma+\mu)} \le \frac{1}{6}$. Consider $\eta_t=1$ in the above inequality. Then
\[
\begin{aligned}
&\langle\nabla F(w^{(t-1)}), w^{(t)} - w^{(t-1)} \rangle + \frac{1}{2} (w^{(t)} - w^{(t-1)})^\top (\nabla^2 F_1(w^{(t-1)})+\gamma I) (w^{(t)} - w^{(t-1)}) \\
 &+ \frac{\nu}{6} \|w^{(t)} - w^{(t-1)}\|^3 \\
\le& -\frac{1}{3}  \langle \nabla F_1(\tilde w^{(t)}) -  \nabla F_1(w^{(t-1)}) + \gamma (\tilde w^{(t)} - w^{(t-1)}), \tilde w^{(t)} - w^{(t-1)}\rangle + \varepsilon_t\|\tilde w^{(t)} - w^{(t-1)}\|,
\end{aligned}
\]
which implies that unit length is acceptable for any $\rho\in (0,1/3)$.
\end{proof}

We also need the following restated Lemma~\ref{lemma:local_unit} which establishes the local convergence rate of Algorithm~\ref{alg:dane_ls} when $\eta_t\equiv1$, i.e., the unit length is always accepted by the backtracking line search.
\DANELSLocalrate*
%\begin{lemma}[Local convergence rate of DANE-LS]\label{lemma:local_unit}
%Assume that $F$ and $F_1$ are $\mu$-strongly-convex, $L$-smooth and have $\nu$-LH. Assume that $\sup_w \|\nabla^2 F_1(w) - \nabla^2 F(w)\| \le \gamma$. Let $\tau=  \left\lceil \frac{\mu+2\gamma}{2\mu} \log\left(4\kappa\right)  \right\rceil$. Assume that $\varepsilon_t \le \min\left\{(\gamma+\mu)^2, \frac{\|\nabla F(w^{(t-1)})\|^2}{L^2}\right\}$.  Given precision $\epsilon > 0$, if $\max_{0 \le i \le \tau-1}\|w^{(i)} - w^*\| \le \frac{(\gamma+\mu)}{4(6\nu+1)\sqrt{\kappa}\tau }$, then Algorithm~\ref{alg:dane_ls} with $\eta_t\equiv1$ will output $\|w^{(t)} - w^*\| \le \epsilon$ after
%\[
%t \ge 4\tau \log \left(\frac{(\gamma+\mu)}{4(6\nu+1)\sqrt{\kappa}\tau\nu\epsilon}\right)
%\]
%rounds of iteration/communication.
%\end{lemma}
%\begin{remark}
%Before proving the lemma, we comment that this result mainly shows that up to the logarithmic factors on $\kappa$ and $\tau$, the local communication complexity of DANE-LS is bounded as $\mathcal{O}(\gamma/\mu\log \left(1/\epsilon\right))$, which exactly matches the bound for the quadratic function.
%\end{remark}
\begin{proof}
Since $\eta_t=1$, we have $w^{(t)} = \tilde w^{(t)}$. By using the first-order optimality condition $\nabla F(w^*)=0$. We can show that
\[
\begin{aligned}
&\nabla P^{(t-1)}(w^{(t)}) \\
=&\nabla F_1(w^{(t)}) -  \nabla F_1(w^{(t-1)}) + \nabla F(w^{(t-1)}) + \gamma (w^{(t)} - w^{(t-1)})\\
=&\nabla F_1(w^{(t)}) - \nabla F_1(w^*) + \nabla  F_1(w^*) - \nabla F_1(w^{(t-1)}) + \nabla F(w^{(t-1)}) - \nabla F(w^*) + \gamma (w^{(t)} - w^{(t-1)})\\
=& \Delta F_1(w^{(t)}, w^*) + \nabla^2 F_1(w^*)(w^{(t)}-w^*) - \Delta F_1(w^{(t-1)}, w^*) - \nabla^2 F_1(w^*)(w^{(t-1)}-w^*)\\
&+ \Delta F(w^{(t-1)}, w^*) + \nabla^2 F(w^*)(w^{(t-1)}-w^*) +  \gamma (w^{(t)} - w^{(t-1)})\\
=&  \Delta F_1(w^{(t)}, w^*) + (\nabla^2 F_1(w^*)+\gamma I)(w^{(t)}-w^*) - \Delta F_1(w^{(t-1)}, w^*) - (\nabla^2 F_1(w^*)+\gamma I)(w^{(t-1)}-w^*)\\
&+ \Delta F(w^{(t-1)}, w^*) + \nabla^2 F(w^*)(w^{(t-1)}-w^*).
\end{aligned}
\]
By multiplying $(\nabla^2 F_1 (w^*)+\gamma I)^{-1}$ on both sides of the above and after proper rearrangement we obtain
\[
\begin{aligned}
&w^{(t)} - w^* \\
=& \left(I - (\nabla^2 F_1(w^*)+\gamma I)^{-1}\nabla^2 F(w^*)\right)(w^{(t-1)} - w^*) \\
&+  (\nabla^2 F_1(w^*)+\gamma I)^{-1} \left( \nabla P^{(t-1)}(w^{(t)}) -  \Delta F_1(w^{(t)}, w^*) +  \Delta F_1(w^{(t-1)}, w^*) - \Delta F(w^{(t-1)}, w^*)\right)\\
=&  \left(I - (\nabla^2 F_1(w^*)+\gamma I)^{-1}\nabla^2 F(w^*)\right)(w^{(t-1)} - w^*) \\
&+  (\nabla^2 F_1(w^*)+\gamma I)^{-1} \left(\Delta F_1(w^{(t-1)}, w^*) - \Delta F(w^{(t-1)}, w^*)-  \Delta F_1(w^{(t)}, w^*)+ \nabla P^{(t-1)}(w^{(t)})\right).
\end{aligned}
\]
Let $H^* = \nabla^2 F(w^*)$ and $H_1^* = \nabla^2 F_1(w^*)$. Similar to the previous analysis, we work on the three term recurrence in matrix form
\begin{equation}\label{equat:iteraiton_form_cov}
u^{(t)} = A u^{(t-1)} + r^{(t-1)}
\end{equation}
where $u^{(t)}:= w^{(t)} - w^*$, $A:= I - (H_1^* + \gamma I)^{-1} H^*$ and
\[
r^{(t-1)}:=(H_1^* + \gamma I)^{-1}\left(\Delta F_1(w^{(t-1)}, w^*) - \Delta F(w^{(t-1)}, w^*)-  \Delta F_1(w^{(t)}, w^*) + \nabla P^{(t-1)}(w^{(t)}) \right).
\]
We next bound $\|r^{(t-1)}\|$ with respect to $\|u^{(t-1)}\|$ and the local optimization precision $\varepsilon_t$.
\begin{equation}\label{inequat:proof_general_cov_key1}
\begin{aligned}
&\|r^{(t-1)}\| \\
\le& \left\|(H_1^* +\gamma I)^{-1}\right\| \left\| \Delta F_1(w^{(t-1)}, w^*) - \Delta F(w^{(t-1)}, w^*) -  \Delta F_1(\tilde w^{(t)}, w^*) \right\|\\
& + \left\|(H_1^* +\gamma I)^{-1}\right\| \|\nabla P^{(t-1)}(w^{(t)})\| \\
\le& \frac{\nu}{2(\gamma+\mu)}\|w^{(t)} - w^*\|^2 + \frac{\nu}{\gamma+\mu}\|w^{(t-1)} - w^*\|^2 + \frac{\varepsilon_t}{\gamma+\mu},
\end{aligned}
\end{equation}
where we have used $H_1^* = \nabla^2 F_1(w^*)\succeq \mu I$ and the Lipschitz Hessian assumption such that $\|\Delta F_1(w^{(t)}, w^*)\|\le \frac{\nu}{2}\|w^{(t)}-w^*\|^2$, $\|\Delta F_1(w^{(t-1)}, w^*)\|\le \frac{\nu}{2}\|w^{(t-1)}-w^*\|^2$ and $\|\Delta F(w^{(t-1)}, w^*)\|\le \frac{\nu}{2}\|w^{(t-1)}-w^*\|^2$, and also $\|\nabla P^{(t-1)}(w^{(t)})\|\le\varepsilon_t$. In the following step we bound $\|w^{(t)}-w^*\|$ with respect to $\|w^{(t-1)}-w^*\|$. Since $\tilde F(w)$ is $\mu$-strongly-convex, $P^{(t-1)}(w)$ is obviously $(\gamma + \mu)$-strongly-convex. Therefore
\begin{equation}\label{inequat:proof_general_cov_key2}
\begin{aligned}
&\|w^{(t)} - w^*\| \\
\le& \frac{1}{\gamma+\mu} \|\nabla P^{(t-1)}(w^{(t)}) - \nabla P^{(t-1)}(w^*)\| \overset{\zeta_1}{=} \frac{1}{\gamma+\mu}\| \nabla P^{(t-1)}(w^*)\| + \frac{\varepsilon_t}{\gamma+\mu}\\
=& \frac{1}{\gamma + \mu} \|\nabla F(w^{(t-1)}) - \nabla F_1(w^{(t-1)}) + \gamma (w^* - w^{(t-1)}) + \nabla F_1(w^*)\| + \frac{\varepsilon_t}{\gamma+\mu} \\
= & \frac{1}{\gamma + \mu}\left\|\left(\nabla F(w^{(t-1)}) - \nabla F_1(w^{(t-1)}) \right) - \left(\nabla F(w^*) - \nabla F_1(w^*) \right) + \gamma(w^* - w^{(t-1)})\right\| \\
&+ \frac{\varepsilon_t}{\gamma+\mu}\\
\le& \frac{2\gamma}{\gamma+\mu}\|w^{(t-1)} - w^*\| + \frac{\varepsilon}{\gamma+\mu}\le 2\|w^{(t-1)} - w^*\| + \frac{\varepsilon_t}{\gamma+\mu},
\end{aligned}
\end{equation}
where ``$\zeta_1$'' follows from the optimality of $w^{(t)} = \tilde w^{(t)}$ with respect to $P^{(t-1)}$ and the last inequality is implied by the assumption $\|\nabla^2 F(w) - \nabla^2 F_1(w)\|\le \gamma$ for all $w$. By combining~\eqref{inequat:proof_general_cov_key1} and~\eqref{inequat:proof_general_cov_key2}, and using the basic inequality $(a+b)^2\le 2a^2+2b^2$ we arrive at
\begin{equation}\label{inequat:proof_general_cov_cov_key4}
\begin{aligned}
\|r^{(t-1)}\| \le& \frac{4\nu}{\gamma+\mu}\|w^{(t-1)} - w^*\|^2 + \frac{\nu \varepsilon_t^2}{(\gamma + \mu)^3} +\frac{\nu}{\gamma+\mu}\|w^{(t-1)} - w^*\|^2 + \frac{\varepsilon_t}{\gamma+\mu}\\
\overset{\zeta_1}{\le}& \frac{5\nu}{\gamma+\mu}\|u^{(t-1)}\|^2+ \frac{(\nu+1)\varepsilon_t}{\gamma+\mu}\overset{\zeta_2}{\le} \frac{6\nu+1}{\gamma+\mu}\|u^{(t-1)}\|^2 ,
\end{aligned}
\end{equation}
where in the inequality ``$\zeta_1$'' we have used the assumption on $\varepsilon_t$ which implies $\varepsilon_t \le (\gamma+\mu)^2$, and $\zeta_2$ follows from $\varepsilon_t \le \frac{\|\nabla F(w^{(t-1)})\|^2}{L^2}\le \|w^{(t-1)} - w^*\|^2=\|u^{(t-1)}\|^2$. Since $\|H_1^* - H^*\|\le\gamma$ and $H^*\succeq \mu I$, by applying Lemma~\ref{lemma:precondition_bound} we obtain that
\begin{equation}\label{inequat:proof_general_thrm_cov_key3}
\begin{aligned}
\|A^t\| =& \|(I - (H_1^* + \gamma I)^{-1} H^*)^t\| \\
 =& \left\|\left((H^{*})^{-1/2}(I - (H^{*})^{1/2}(H_1^* + \gamma I)^{-1} (H^{*})^{1/2})(H^{*})^{1/2}\right)^t\right\| \\
 =& \left\|(H^{*})^{-1/2}(I - (H^{*})^{1/2}(H_1^* + \gamma I)^{-1} (H^{*})^{1/2})^t(H^{*})^{1/2}\right\| \\
 \le& \sqrt{\frac{L}{\mu}}\|I - (H^{*})^{1/2}(H_1^* + \gamma I)^{-1} (H^{*})^{1/2}\|^t \le \sqrt{\frac{L}{\mu}} \left(1-\frac{\mu}{\mu+2\gamma}\right)^t.
\end{aligned}
\end{equation}
In the following argument, to simplify notation, we abbreviate
\[
c=\sqrt{\frac{L}{\mu}}, \quad \vartheta = \frac{6\nu+1}{\gamma+\mu}, \quad \rho =1-\frac{\mu}{\mu+2\gamma}
\]
such that $\|A^t\|\le c \rho^t$ and $\|r^{(t)}\| \le \vartheta \|u^{(t)}\|^2 $. Let us consider the following defined integer
\[
\tau= \left\lceil \frac{\mu+2\gamma}{2\mu} \log\left(\frac{4L}{\mu}\right)  \right\rceil
\]
such that $\|A^\tau\|\le \frac{1}{2}$.  We now prove by induction that for any integer $k\ge 0$, $\max_{0\le i\le \tau-1}\|u^{(k\tau+i)}\|\le \frac{1}{4c\tau\vartheta}\left(\frac{3}{4}\right)^k$.  The assumption $\max_{0\le i \le \tau-1}\|w^{(i)} - w^*\| \le \frac{1}{4c\tau\vartheta }$ guarantees that the bound is valid for the case $k=0$, i.e., $\max_{0\le i \le \tau-1}\|u^{(i)}\| \le \frac{1}{4c\tau\vartheta}$. Now assume that $\max_{0\le i\le \tau-1}\|u^{(k\tau+i)}\|\le \left(\frac{3}{4}\right)^k \frac{1}{4c\tau\vartheta}$ for some $k\ge 0$.
By recursively applying~\eqref{equat:iteraiton_form_cov} we obtain
\[
\begin{aligned}
\|u^{((k+1)\tau)}\| &= \left\|A^{\tau} u^{(k\tau)} + \sum_{i=0}^{\tau-1} A^i r^{(k\tau+\tau-1-i)}\right\| \le \|A^\tau\|\|u^{(k\tau)}\| + \sum_{i=0}^{\tau-1} \|A^i\| \|r^{(k\tau+\tau-1-i)}\| \\
&\overset{\zeta_1}{\le}  \frac{1}{2} \|u^{(k\tau)}\|+ \vartheta c\sum_{i=0}^{\tau-1} \|u^{(k\tau + \tau-1-i)}\|^2 \overset{\zeta_2}{\le} \frac{1}{2} \left(\frac{3}{4}\right)^k \frac{1}{4c\tau\vartheta} + \frac{1}{4\tau}\sum_{i=0}^{\tau-1}\|u^{k\tau+i}\| \\
&\overset{\zeta_3}{\le} \frac{1}{2} \left(\frac{3}{4}\right)^k \frac{1}{4c\tau\vartheta} + \frac{1}{4} \left(\frac{3}{4}\right)^k \frac{1}{4c\tau\vartheta} = \left(\frac{3}{4}\right)^{k+1} \frac{1}{4c\tau\vartheta},
\end{aligned}
\]
where ``$\zeta_1$'' is due to~\eqref{inequat:proof_general_thrm_cov_key3} which implies $\|A^i\|\le c$ for all $i\ge 1$ and it also has used $\|r^{(t)}\| \le \vartheta \|u^{(t)}\|^2$, ``$\zeta_2$'' and ``$\zeta_3$'' are based on the induction step and $\|u^{k\tau+i}\|\le \frac{1}{4c\tau\vartheta}$ for all $0\le i \le \tau-1$. By using the same argument as the above, we can show that $\|u^{((k+1)\tau+i)}\|\le \frac{1}{4c\tau\vartheta}\left(\frac{3}{4}\right)^{k+1}$ for all $0\le i \le \tau-1$. This proves that $\max_{0\le i\le \tau-1} \|u^{(k\tau+i)}\|\le \frac{1}{4c\tau\vartheta}\left(\frac{3}{4}\right)^k$ holds for all $k\ge0$. In particularly,
\[
\|w^{(k\tau)} - w^*\|=\|u^{k\tau}\| \le\frac{1}{4c\tau\vartheta}\left(\frac{3}{4}\right)^k.
\]
Therefore, we need $t \ge 4\tau\log \left(\frac{1}{4c\tau\vartheta \epsilon}\right)$ to guarantee the estimation bound $\|w^{(t)} - w^*\|\le\epsilon$. This completes the proof.
\end{proof}

We are now ready to prove the main theorem.

\begin{proof}[of Theorem~\ref{thrm:local_general}]
Under the given conditions, from Theorem~\ref{thrm:global_general} and Lemma~\ref{lemma:unit_length} we know that there exists a sufficiently large $t_0$ such that for all $t\ge t_0$, the unit length $\eta_t=1$ is acceptable with $\rho \in(0,1/3)$ and the following holds:
\begin{equation}\label{inequat:proof_global_local_key_1}
\|\tilde w^{(t)} - w^{(t-1)}\| \le  \frac{6\mu}{13\gamma + \mu}\left(\frac{\gamma+\mu}{4(6\nu+1)\sqrt{\kappa}\tau}\right).
\end{equation}
Since $\varepsilon_t\le \frac{\rho(\mu+\gamma)}{2(L+\gamma)+\rho(\mu+\gamma)}\|\nabla F(w^{(t-1)})\|$, we have that the bound~\eqref{inequat:eps_t_bound_global} holds and thus
\[
\varepsilon_t \le \frac{\rho(\gamma+\mu)}{2} \|\tilde w^{(t)} - w^{(t-1)}\| \le \frac{\gamma+\mu}{6}\|\tilde w^{(t)} - w^{(t-1)}\|,
\]
where we have used $\rho\le 1/3$.
Then based on Lemma~\ref{lemma:grad_bound} and~\eqref{inequat:proof_global_local_key_1}, the following holds for all $t\ge t_0$,
\[
\begin{aligned}
\|w^{(t)} - w^*\|=&\|\tilde w^{(t)} - w^*\| \le \frac{2\gamma}{\mu} \|\tilde w^{(t)} - w^{(t-1)}\|+ \frac{\varepsilon_t}{\mu} \\
\le& \left(\frac{2\gamma}{\mu} + \frac{\gamma+\mu}{6\mu}\right) \|\tilde w^{(t)} - w^{(t-1)}\| \le \frac{\gamma+\mu}{4(6\nu+1)\sqrt{\kappa}\tau}.
\end{aligned}
\]
Given the condition on $\varepsilon_t$, by invoking Lemma~\ref{lemma:local_unit} we obtain $\|w^{(t_0+t_1)} - w^*\| \le \epsilon$ after
\[
t_1 \ge 4\tau \log \left(\frac{\gamma+\mu}{4(6\nu+1)\sqrt{\kappa}\tau}\left(\frac{1}{\epsilon}\right)\right),
\]
where $\tau= \left\lceil \frac{\mu+2\gamma}{2\mu} \log\left(4\kappa\right)  \right\rceil$. This proves the desired bound.
\end{proof}

\section{Proofs for Section~\ref{sect:dane_hb_analysis}}
\label{append:dane_hb_analysis}

We collect in this appendix section the technical proofs of the results in Section~\ref{sect:dane_ls_analysis} of the main paper, including Theorems~\ref{thrm:quadratic_hb}, Theorem~\ref{thrm:local_general_hb}, Theorem~\ref{thrm:local_general} and their corollaries.

\subsection{Proof of Theorem~\ref{thrm:quadratic_hb}}\label{append:proof_of_dane_hb_quadratic}

We now prove Theorem~\ref{thrm:quadratic_hb} which is restated as  follows.
\DANEHBQuadratic*
\begin{proof}[of Theorem~\ref{thrm:quadratic_hb}]
Since the objective is quadratic, for any $w^{(t-1)}$ the optimal solution $w^* = \argmin_w F(w)$ can always be expressed as
\[
w^* = w^{(t-1)} - H^{-1} \nabla F(w^{(t-1)}).
\]
Since $H_1^{(t)}\equiv H_1$ holds in the quadratic case, from the definition of $w^{(t)}$ and the gradient equation of $P^{(t-1)}$ we have
\[
w^{(t)} = w^{(t-1)} - \eta (H_1 + \gamma I)^{-1} \nabla F(w^{(t-1)}) + \beta (w^{(t-1)} - w^{(t-2)}) + r^{(t-1)},
\]
where the residual term $r^{(t-1)}$ is given by
\[
r^{(t-1)} = (H_1 + \gamma I)^{-1} \nabla P^{(t-1)}(\tilde w^{(t)}).
\]
By combining the above two inequalities we obtain
\begin{equation}\label{equat:proof_quadratic_thrm_key_3}
w^{(t)} - w^* = ((1+\beta)I - (H_1 + \gamma I)^{-1} H ) (w^{(t-1)} - w^*) - \beta (w^{(t-2)} - w^*) + r^{(t-1)}.
\end{equation}
Now let us study the three term recurrence in matrix form
\[
\begin{aligned}
&\left[\begin{array}{*{20}{c}}
   w^{(t)} - w^*  \\
   w^{(t-1)} - w^*  \\
\end{array}
\right]\\
=&\left[\begin{array}{*{20}{c}}
   (1+\beta) I - (H_1 + \gamma I)^{-1} H & -\beta I \\
   I & 0 \\
\end{array} \right]\left[\begin{array}{*{20}{c}}
   w^{(t-1)} - w^*  \\
   w^{(t-2)} - w^*  \\
\end{array}
\right] + r^{(t-1)}\\
=& \left[\begin{array}{*{20}{c}}
   (1+\beta) I -(H_1 + \gamma I)^{-1} H & -\beta I \\
   I & 0 \\
\end{array} \right]^t\left[\begin{array}{*{20}{c}}
   w^{(0)} - w^*  \\
   w^{(-1)} - w^*  \\
\end{array}
\right] \\
&+ \sum_{\tau=0}^{t-1}\left[\begin{array}{*{20}{c}}
   (1+\beta) I -(H_1 + \gamma I)^{-1} H & -\beta I \\
   I & 0 \\
\end{array} \right]^{\tau} r^{(t-1-\tau)}.
\end{aligned}
\]
Let us abbreviate $u^{(t)}:=\left[\begin{array}{*{20}{c}}
   w^{(t)} - w^*  \\
   w^{(t-1)} - w^*  \\
\end{array}
\right]$ and $A:=\left[\begin{array}{*{20}{c}}
   (1+\beta) I - (H_1 + \gamma I)^{-1} H & -\beta I \\
   I & 0 \\
\end{array} \right]$. Based on the basic fact $\|Tx\|\le \|T\|\|x\|$ we obtain
\begin{equation}\label{inequat:proof_quadratic_thrm_key_1}
\|u^{(t)}\| \le \|A^t\| \|u^{(0)}\| + \sum_{\tau=0}^{t-1} \|A^{\tau}\|\|r^{(t-1-\tau)}\|.
\end{equation}
Let us now temporarily assume that $\rho(A)<1$ and consider $\delta = \frac{1-\rho(A)}{2}$. From Lemma~\ref{lemma:special_radius_norm} we know that there exists a constant $c=c(\delta)$ such that for all $t\ge 0$:
\begin{equation}\label{inequat:proof_quadratic_thrm_key_2}
\|A^t\| \le c(\rho(A)+\delta)^t = c\left(\frac{1+\rho(A)}{2}\right)^t.
\end{equation}
Next we show that $\rho(A)<1$ is indeed the case under the conditions of the theorem. Since $\|H_1 - H\|\le\gamma$ and $H\succeq \mu I$, by applying Lemma~\ref{lemma:precondition_bound} we obtain that $(H_1 + \gamma I)^{-1}H$ is diagonalizable and
\[
\frac{\mu}{\mu + 2\gamma} \le \lambda_{\min}((H_1 + \gamma I)^{-1}H) \le \lambda_{\max} ((H_1 + \gamma I)^{-1}H) \le 1.
\]
Given the setting of $\beta = \left(1-\sqrt{\frac{\mu}{\mu+2\gamma}}\right)^2$, it is known from Lemma~\ref{lemma:spectral_radius_bound} (with $\eta=1$) that
\[
\rho(A) \le 1 - \sqrt{\frac{\mu}{\mu+2\gamma}}.
\]
Note that $\|r^{(t)}\|\le \varepsilon_t/(\mu + \gamma)$ holds for all $t$ which follows immediately from $\|\nabla P^{(t-1)}(\tilde w^{(t)})\| \le \varepsilon_t$ and $H_1 \succeq \mu I$. Then combining the above bound with~\eqref{inequat:proof_quadratic_thrm_key_1} and~\eqref{inequat:proof_quadratic_thrm_key_2} we obtain
\[
\begin{aligned}
&\|w^{(t)} - w^*\| \\
\le& \|u^{(t)}\| \le c\left(1-\frac{1}{2}\sqrt{\frac{\mu}{\mu+2\gamma}}\right)^t \|u^{(0)}\| + \frac{c}{\mu + \gamma}\sum_{\tau=0}^{t-1}\varepsilon_{t-1-\tau}\left(1-\frac{1}{2}\sqrt{\frac{\mu}{\mu+2\gamma}}\right)^\tau\\
\overset{\zeta_1}{\le}& \sqrt{2} c\left(1-\frac{1}{2}\sqrt{\frac{\mu}{\mu+2\gamma}}\right)^t \|w^{(0)} - w^*\| + \frac{c\sqrt{2}}{2}\sum_{\tau=0}^{t-1}\frac{1}{(t-\tau)^2}\left(1-\frac{1}{2}\sqrt{\frac{\mu}{\mu+2\gamma}}\right)^t \|w^{(0)} - w^*\|\\
\le& 2\sqrt{2} c\left(1-\frac{1}{2}\sqrt{\frac{\mu}{\mu+2\gamma}}\right)^t \|w^{(0)} - w^*\|  ,
\end{aligned}
\]
where in the inequality ``$\zeta_1$'' we have used $w^{(0)} = w^{(-1)}$ and the condition
\[
\begin{aligned}
\varepsilon_t &\le \frac{\sqrt{2}(\mu+\gamma)\|\nabla F(w^{(0)})\|}{2L(t+1)^2}\left(1-\frac{1}{2}\sqrt{\frac{\mu}{\mu+2\gamma}}\right)^{t+1} \\
&\le \frac{\sqrt{2}(\mu+\gamma)\|w^{(0)} - w^*\|}{2(t+1)^2}\left(1-\frac{1}{2}\sqrt{\frac{\mu}{\mu+2\gamma}}\right)^{t+1},
\end{aligned}
\]
and in the last inequality we have used $\sum_{\tau=0}^{t-1}\frac{1}{(t-\tau)^2}\le 1+ \int_{1}^{\infty} \frac{1}{x^2} dx\le 2$. By noting $(1-x)^t \le \exp{\{-xt\}}$ we can show that $\|w^{(t)} - w^*\| \le \epsilon$ is valid when
\[
t \ge 2\sqrt{\frac{\mu+2\gamma}{\mu}} \log \left(\frac{2\sqrt{2}c\|w^{(0)} - w^*\|}{\epsilon}\right).
\]
This concludes the proof.
\end{proof}

\DANEHBQuadraticCorol*
\begin{proof}
Since $H(w)\equiv H$ and $H_1(w)\equiv H_1$ in the quadratic case, we know from Lemma~\ref{lemma:hessian_close} that $\|H_1 - H\|\le \gamma=L\sqrt{\frac{32\log(p/\delta)}{n}}$ holds with probability at least $1-\delta$. By invoking Theorem~\ref{thrm:quadratic_hb} we obtain the desired bound.
\end{proof}

\subsection{Proof of Theorem~\ref{thrm:local_general_hb}}\label{append:proof_of_dane_hb_local}

Here we give a detailed proof of Theorem~\ref{thrm:local_general_hb} which is restated as in the following.
\DANEHBLocalrate*

\begin{proof}[of Theorem~\ref{thrm:local_general_hb}]
The proof mimics that of Lemma~\ref{lemma:local_unit} with proper adaptation to the heave-ball momentum formulation. For the sake of completeness, here we provide the full details of proof.  Since $\nabla F(w^*)=0$, we can show the following:
\[
\begin{aligned}
&\nabla P^{(t-1)}(\tilde w^{(t)})\\
=&\nabla F_1(\tilde w^{(t)}) -  \nabla F_1(w^{(t-1)}) + \nabla F(w^{(t-1)}) + \gamma (\tilde w^{(t)} - w^{(t-1)})\\
=&\nabla F_1(\tilde w^{(t)}) - \nabla F_1 (w^*) + \nabla F_1(w^*) - \nabla F_1(w^{(t-1)}) + \nabla F(w^{(t-1)}) - \nabla F(w^*) + \gamma (\tilde w^{(t)} - w^{(t-1)})\\
=& \Delta F_1(\tilde w^{(t)}, w^*) + \nabla^2 F_1(w^*)(\tilde w^{(t)}-w^*) - \Delta F_1(w^{(t-1)}, w^*) - \nabla^2 F_1(w^*)(w^{(t-1)}-w^*)\\
&+ \Delta F(w^{(t-1)}, w^*) + \nabla^2 F(w^*)(w^{(t-1)}-w^*) +  \gamma (\tilde w^{(t)} - w^{(t-1)})\\
=&  \Delta F_1(\tilde w^{(t)}, w^*) + (\nabla^2 F_1(w^*)+\gamma I)(\tilde w^{(t)}-w^*) - \Delta F_1(w^{(t-1)}, w^*) \\
&- (\nabla^2 F_1(w^*)+\gamma I)(w^{(t-1)}-w^*) + \Delta F(w^{(t-1)}, w^*) + \nabla^2 F(w^*)(w^{(t-1)}-w^*).
\end{aligned}
\]
Then by multiplying $(\nabla^2 F_1(w^*)+\gamma I)^{-1}$ on both sides of the above and after proper rearrangement we obtain
\[
\begin{aligned}
&\tilde w^{(t)} - w^* \\
=& \left(I - (\nabla^2 F_1(w^*)+\gamma I)^{-1}\nabla^2 F(w^*)\right)(w^{(t-1)} - w^*) \\
&+  (\nabla^2 F_1(w^*)+\gamma I)^{-1} \left( \nabla P^{(t-1)}(\tilde w^{(t)}) -  \Delta F_1(\tilde w^{(t)}, w^*) +  \Delta F_1(w^{(t-1)}, w^*) - \Delta F(w^{(t-1)}, w^*)\right)\\
=&  \left(I - (\nabla^2 F_1(w^*)+\gamma I)^{-1}\nabla^2 F(w^*)\right)(w^{(t-1)} - w^*) \\
&+  (\nabla^2 F_1(w^*)+\gamma I)^{-1} \left(\Delta F_1(w^{(t-1)}, w^*) - \Delta F(w^{(t-1)}, w^*)-  \Delta F_1(\tilde w^{(t)}, w^*)+\nabla P^{(t-1)}(\tilde w^{(t)})\right).
\end{aligned}
\]
Recall the update $w^{(t)} = \tilde w^{(t)} + \beta (w^{(t-1)} - w^{(t-2)})$. It follows that
\[
\begin{aligned}
&w^{(t)} - w^* \\
=& \left((1+\beta)I - (\nabla^2 F_1(w^*)+\gamma I)^{-1}\nabla^2 F(w^*)\right)(w^{(t-1)} - w^*) - \beta (w^{(t-2)} - w^*)\\
&+(\nabla^2 F_1(w^*)+\gamma I)^{-1} \left(\Delta F_1(w^{(t-1)}, w^*) - \Delta F(w^{(t-1)}, w^*)-  \Delta F_1(\tilde w^{(t)}, w^*) + \nabla P^{(t-1)}(\tilde w^{(t)})\right).
\end{aligned}
\]
Let $H^* = \nabla^2 F(w^*)$ and $H^* = \nabla^2 F_1(w^*)$. Similar to the previous analysis, we work on the three term recurrence in matrix form
\begin{equation}\label{equat:iteraiton_form}
u^{(t)} = A u^{(t-1)} + r^{(t-1)}
\end{equation}
where $u^{(t)}:=\left[\begin{array}{*{20}{c}}
   w^{(t)} - w^*  \\
   w^{(t-1)} - w^*  \\
\end{array}
\right]$, $A:=\left[\begin{array}{*{20}{c}}
   (1+\beta) I - (H_1^* + \gamma I)^{-1} H^* & -\beta I \\
   I & 0 \\
\end{array} \right]$ and
\[
r^{(t-1)}:=\left[\begin{array}{*{20}{c}}
(H_1^* + \gamma I)^{-1}\left(\Delta F_1(w^{(t-1)}, w^*) - \Delta F(w^{(t-1)}, w^*)-  \Delta F_1(\tilde w^{(t)}, w^*) +\nabla P^{(t-1)}(\tilde w^{(t)}) \right) \\
0\\
\end{array}
\right].
\]
Under the condition $\varepsilon_t \le \min\left\{(\gamma+\mu)^2, \|\nabla F(w^{(t-1)})\|^2/L^2\right\}$, using the similar argument as in the proof of Lemma~\ref{lemma:local_unit}, we can bound $\|r^{(t-1)}\|$ with respect to $\|u^{(t-1)}\|$ as
\[
\|r^{(t-1)}\| \le  \frac{6\nu + 1}{\gamma+\mu}\|u^{(t-1)}\|^2.
\]
Since $\|H_1^* - H^*\|\le\gamma$ and $H^*\succeq \mu I$, by applying Lemma~\ref{lemma:precondition_bound} we obtain that $(H_1^* + \gamma I)^{-1}H^*$ is diagonalizable and
\[
\frac{\mu}{\mu + 2\gamma} \le \lambda_{\min}((H_1^* + \gamma I)^{-1}H^*) \le \lambda_{\max} ((H_1^* + \gamma I)^{-1}H^*) \le 1.
\]
Given $\beta = \left(1-\sqrt{\frac{\mu}{\mu+2\gamma}}\right)^2$, it is known from Lemma~\ref{lemma:spectral_radius_bound} (with  $\eta =1$) that
\[
\rho(A) \le 1 - \sqrt{\frac{\mu}{\mu+2\gamma}}.
\]
Let $\delta = \frac{1-\rho(A)}{2}$. From Lemma~\ref{lemma:special_radius_norm} we know that there exists a constant $c=c(\delta)$ such that for all $t\ge 0$:
\begin{equation}\label{inequat:proof_general_thrm_key3}
\|A^t\| \le c(\rho(A)+\delta)^t = c\left(\frac{1+\rho(A)}{2}\right)^t \le c\left(1-\frac{1}{2}\sqrt{\frac{\mu}{\mu+2\gamma}}\right)^t.
\end{equation}
Without loss of generality we assume $c\ge 1$. In the following argument, to simplify notation, we abbreviate $\vartheta = \frac{6\nu+1}{\gamma+\mu}$ and $\rho =1-\frac{1}{2}\sqrt{\frac{\mu}{\mu+2\gamma}}$. Let us consider the following defined integer
\[
\tau= \left\lceil 2\sqrt{\frac{\mu+2\gamma}{\mu}} \log(2c)  \right\rceil
\]
such that $\|A^\tau\|\le \frac{1}{2}$. We now prove by induction that for any integer $k\ge 0$, $\max_{0\le i\le \tau-1}\|u^{(k\tau+i)}\|\le \frac{1}{4c\tau\vartheta}\left(\frac{3}{4}\right)^k$.  The assumption $\max_{-1\le i \le \tau-1}\|w^{(i)} - w^*\| \le \frac{1}{4\sqrt{2}c\tau\vartheta }$ guarantees that the bound is valid for the case $k=0$, i.e., $\max_{0\le i \le \tau-1}\|u^{(i)}\| \le \frac{1}{4c\tau\vartheta}$. Now assume that $\max_{0\le i\le \tau-1}\|u^{(k\tau+i)}\|\le \left(\frac{3}{4}\right)^k \frac{1}{4c\tau\vartheta}$ for some $k\ge 0$. By recursively applying~\eqref{equat:iteraiton_form} we obtain
\[
\begin{aligned}
\|u^{((k+1)\tau)}\| &= \left\|A^{\tau} u^{(k\tau)} + \sum_{i=0}^{\tau-1} A^i r^{(k\tau+\tau-1-i)}\right\| \le \|A^\tau\|\|u^{(k\tau)}\| + \sum_{i=0}^{\tau-1} \|A^i\| \|r^{(k\tau+\tau-1-i)}\| \\
&\overset{\zeta_1}{\le}  \frac{1}{2}\|u^{(k\tau)}\|+ \vartheta c\sum_{i=0}^{\tau-1} \|u^{(k\tau + \tau-1-i)}\|^2 \overset{\zeta_2}{\le} \frac{1}{2} \left(\frac{3}{4}\right)^k \frac{1}{4c\tau\vartheta} + \frac{1}{4\tau}\sum_{i=0}^{\tau-1}\|u^{k\tau+i}\| \\
&\overset{\zeta_3}{\le} \frac{1}{2} \left(\frac{3}{4}\right)^k \frac{1}{4c\tau\vartheta} + \frac{1}{4} \left(\frac{3}{4}\right)^k \frac{1}{4c\tau\vartheta} = \left(\frac{3}{4}\right)^{k+1} \frac{1}{4c\tau\vartheta},
\end{aligned}
\]
where ``$\zeta_1$'' is due to~\eqref{inequat:proof_general_thrm_key3} which implies $\|A^i\|\le c$ for all $i\ge 1$ and it also has used $\|r^{(t)}\| \le \vartheta \|u^{(t)}\|^2$, ``$\zeta_2$'' and ``$\zeta_3$'' are based on the induction step and $\|u^{k\tau+i}\|\le \frac{1}{4c\tau\vartheta}$ for all $0\le i \le \tau-1$. By using the same argument as the above, we can show that $\|u^{((k+1)\tau+i)}\|\le \frac{1}{4c\tau\vartheta}\left(\frac{3}{4}\right)^{k+1}$ for all $0\le i \le \tau-1$. This proves that $\max_{0\le i\le \tau-1} \|u^{(k\tau+i)}\|\le \frac{1}{4c\tau\vartheta}\left(\frac{3}{4}\right)^k$ holds for all $k\ge0$. Particularly, we obtain
\[
\|w^{(k\tau)} - w^*\|\le\|u^{k\tau}\|\le \frac{1}{4c\tau\vartheta}\left(\frac{3}{4}\right)^k.
\]
Therefore, to reach $\|w^{(t)} - w^*\|\le\epsilon$ we need $t \ge 4\tau \log \left(\frac{1}{4c\tau\vartheta \epsilon}\right)$. This completes the proof.
\end{proof}

\subsection{Proof of Theorem~\ref{thrm:global_dane_hb_lm}}\label{append:proof_of_global_dane_hb_lm}

In this subsection, we prove Theorem~\ref{thrm:global_dane_hb_lm} as restated below.
\DANEHBLMConvergence*

\begin{proof}
We first analyze the outer-loop iteration complexity. As defined in Algorithm~\ref{alg:dane_hb_lm} that at each time instance $t$ the quadratic subproblem is optimized to certain $\varepsilon_t$-suboptimality.
\[
Q^{(t-1)}(w^{(t)}) \le \min_w Q^{(t-1)}(w) + \varepsilon_t.
\]
The value of $\varepsilon_t$ will be specified shortly in the following analysis. Let us abbreviate $l_i(w^\top x_i) = l(w^\top x_i, y_i)$ with $l_i$ being a univariate function. For any $\eta \in [0,1]$, the smoothness of $l_i$ and the suboptimality of $w^{(t)}$ lead to
\[
\begin{aligned}
&F(w^{(t)}) \\
=&\tilde F(w^{(t)})+ \frac{\mu}{2}\|w^{(t)}\|^2 = \frac{1}{N}\sum_{i=1}^N l_i(x_i^\top w^{(t)}) + \frac{\mu}{2}\|w^{(t)}\|^2 \\
\le& \frac{1}{N}\sum_{i=1}^N \left\{l_i(x_i^\top w^{(t-1)}) + l_i'(x_i^\top w^{(t-1)})x_i^\top (w^{(t)} - w^{(t-1)})\right. \\
&\left. + \frac{\ell}{2}(w^{(t)} - w^{(t-1)})^\top x_i x_i^\top (w^{(t)} - w^{(t-1)})\right\} + \frac{\mu}{2}\|w^{(t)}\|^2 \\
=& \tilde F(w^{(t-1)}) + \langle \nabla \tilde F(w^{(t-1)}), w^{(t)} - w^{(t-1)}\rangle + \frac{\ell}{2N} (w^{(t)}-w^{(t-1)})^\top  X X^\top(w^{(t)} - w^{(t-1)}) \\
&+ \frac{\mu}{2}\|w^{(t)}\|^2\\
=& Q^{(t-1)}(w^{(t)}) \\
\le& Q^{(t-1)}((1-\eta)w^{(t-1)}+\eta w^*) + \varepsilon_t \\
=& \tilde F(w^{(t-1)}) + \eta \langle \nabla \tilde F(w^{(t-1)}), w^* - w^{(t-1)}\rangle + \frac{\eta^2\ell}{2N} (w^*-w^{(t-1)})^\top X X^\top (w^* - w^{(t-1)})\\
 &+ \frac{\mu}{2}\left( (1-\eta)w^{(t-1)}+\eta w^* \right)^2 + \varepsilon_t \\
=& \tilde F(w^{(t-1)}) + \eta \langle \nabla \tilde F(w^{(t-1)}), w^* - w^{(t-1)}\rangle + \frac{\eta^2\ell}{2N} (w^*-w^{(t-1)})^\top X X^\top (w^* - w^{(t-1)})\\
 &+ \frac{\mu}{2}\|w^{(t-1)}\|^2 + \mu\eta \langle w^{(t-1)}, w^* - w^{(t-1)}\rangle + \frac{\mu\eta^2}{2}\|w^{(t-1)} - w^*\|^2 + \varepsilon_t \\
=& F(w^{(t-1)}) + \eta \langle \nabla F(w^{(t-1)}), w^* - w^{(t-1)}\rangle \\
&+ \frac{\eta^2\ell}{2} (w^*-w^{(t-1)})^\top \left(\frac{ X X^\top}{N} + \frac{\mu}{\ell} I \right) (w^* - w^{(t-1)}) + \varepsilon_t .
\end{aligned}
\]
On the other side, from the strong-convexity of $l_i(\cdot)$ we can show that
\[
\begin{aligned}
&F(w^*)\\
 =& \frac{1}{N}\sum_{i=1}^N l_i(x_i^\top w^*) + \frac{\mu}{2}\|w^*\|^2 \\
\ge& \frac{1}{N}\sum_{i=1}^N \left\{f_i(x_i^\top w^{(t-1)}) + f_i'(x_i^\top w^{(t-1)})x_i^\top (w^* - w^{(t-1)})^\top + \frac{\sigma}{2}(w^* - w^{(t-1)})^\top x_i x_i^\top (w^* - w^{(t-1)})\right\} \\
& + \frac{\mu}{2}\|w^{(t-1)}\|^2 + \mu \langle w^{(t-1)}, w^* - w^{(t-1)}\rangle + \frac{\mu}{2}\|w^* - w^{(t-1)}\|^2\\
=& F(w^{(t-1)}) + \langle \nabla F(w^{(t-1)}), w^* - w^{(t-1)}\rangle + \frac{\sigma}{2} (w^*-w^{(t-1)})^\top\left(\frac{ X X^\top}{N} + \frac{\mu}{\sigma} I \right) (w^* - w^{(t-1)})\\
\ge& F(w^{(t-1)}) + \langle \nabla F(w^{(t-1)}), w^* - w^{(t-1)}\rangle + \frac{\sigma}{2} (w^*-w^{(t-1)})^\top\left(\frac{ X X^\top}{N} + \frac{\mu}{\ell} I \right) (w^* - w^{(t-1)}),
\end{aligned}
\]
where in the last inequality we have used $\ell\ge\sigma$. By setting $\eta = \sigma / \ell\in (0,1]$ and combining the above two inequalities we arrive at
\[
F(w^{(t)}) - F(w^*) \le \left(1-\frac{\sigma}{\ell}\right) (F(w^{(t-1)}) - F(w^*) ) + \varepsilon_t.
\]
Let us consider
\[
\varepsilon_t \le \frac{\sigma\mu}{4\ell L^2}\|\nabla F(w^{(t-1)})\|^2
\]
which implies
\begin{equation}\label{inequat:proof_dane_hb_lm_key1}
\varepsilon_t \le \frac{\sigma}{2\ell} (F(w^{(t-1)}) - F(w^*) )
\end{equation}
and thus
\[
F(w^{(t)}) - F(w^*) \le \left(1-\frac{\sigma}{2\ell}\right) (F(w^{(t-1)}) - F(w^*) ).
\]
Recursively applying the above recursion form yields
\[
F(w^{(t)}) - F(w^*) \le \left(1-\frac{\sigma}{2\ell}\right)^t (F(w^{(0)}) - F(w^*)) .
\]
Then for any desired precision $\epsilon>0$, the sub-optimality $F(w^{(t)}) - F(w^*) \le \epsilon$ holds provided that
\[
t \ge \frac{2\ell}{\sigma} \log \left(\frac{(F(w^{(0)}) - F(w^*))}{\epsilon}\right).
\]
From Theorem~\ref{thrm:quadratic_hb} and~\eqref{inequat:proof_dane_hb_lm_key1} we know that the condition $Q^{(t-1)}(w^{(t)}) \le \min_w Q^{(t-1)}(w) + \varepsilon_t$ is valid when the inner loop is sufficiently executed with $ \mathcal{O}\left(\sqrt{\frac{\gamma}{\mu}} \log \left(\frac{1}{\varepsilon_t}\right)\right)=\mathcal{O}\left(\sqrt{\frac{\gamma}{\mu}} \log \left(\frac{1}{\epsilon}\right)\right)$ rounds of iteration. Therefore, the overall inner-loop iteration complexity is $t\tau$ which is of the order
\[
\mathcal{O}\left(\frac{\ell }{\sigma}\sqrt{\frac{\gamma}{\mu}} \log^2 \left(\frac{1}{\epsilon}\right)\right).
\]
This proves the desired bound.
\end{proof}

\section{Proof of Auxiliary Lemmas}\label{ProofforAuxiliaryLemmas}

\subsection{Proof of Lemma~\ref{lemma:precondition_bound}}

\begin{proof}
Since both $A+\gamma I$ and $B$ are symmetric and positive definite, it is known that the eigenvalues of $(A+\gamma I)^{-1}B$ are positive real numbers and identical to those of $(A+\gamma I)^{-1/2}B(A+\gamma I)^{-1/2}$. Let us  consider the following eigenvalue decomposition of $(A+\gamma I)^{-1/2}B(A+\gamma I)^{-1/2}$:
\[
(A+\gamma I)^{-1/2}B(A+\gamma I)^{-1/2} = Q^\top \Lambda Q,
\]
where $Q^\top Q = I$ and $\Lambda$ is a diagonal matrix with eigenvalues as diagonal entries. It is then implied that
\[
(A+\gamma I)^{-1}B = (A+\gamma I)^{-1/2} Q^\top \Lambda Q(A+\gamma I)^{1/2},
\]
which is a diagonal eigenvalue decomposition of $(A+\gamma I)^{-1}B $. Thus $(A+\gamma I)^{-1}B$ is diagonalizable.

To prove the eigenvalue bounds of $(A+\gamma I)^{-1}B$, it suffices to prove the same bounds for $(A+\gamma I)^{-1/2}B(A+\gamma I)^{-1/2}$. Since $\|A-B\|\le \gamma$, we have $B \preceq A + \gamma I$ which implies $(A+\gamma I)^{-1/2}B(A+\gamma I)^{-1/2}\preceq I$ and hence $\lambda_{\max}((A+\gamma I)^{-1/2}B(A+\gamma I)^{-1/2})\le 1$. Moreover, since $B\succeq \mu I$, it holds that $\frac{2\gamma}{\mu} B - \gamma I \succeq \gamma I \succeq A - B$. Then we obtain $(A+\gamma I)^{-1/2}B(A+\gamma I)^{-1/2}\succeq \frac{\mu}{\mu + 2\gamma}I$ which implies $\lambda_{\min}((A+\gamma I)^{-1/2}B(A+\gamma I)^{-1/2})\ge \frac{\mu}{\mu + 2\gamma}$. Similarly, we can show that $\frac{\mu}{\mu + 2\gamma}I \preceq B^{1/2}(A+\gamma I)^{-1}B^{1/2} \preceq I$, implying $\|I - B^{1/2}(A+\gamma I)^{-1} B^{1/2}\| \le  \frac{2\gamma}{\mu + 2\gamma}$.
\end{proof}

\subsection{Proof of Lemma~\ref{lemma:spectral_radius_bound}}

\begin{proof}
Let $0<\mu\le\lambda_1\le \lambda_2 \le \cdots \le \lambda_d \le L$ be the eigenvalues of $A$ and $\Lambda$ be a diagonal matrix whose diagonal entries are $\{\lambda_i\}$ in a non-decreasing order. Since $A$ is diagonalizable, it can be verified that the eigenvalues of the following two $2d \times 2d$ matrices coincide:
\[
T_1 = \left[\begin{array}{*{20}{c}}
   (1+\beta) I -\eta A & -\beta I \\
   I & 0 \\
\end{array} \right], \quad T_2 = \left[\begin{array}{*{20}{c}}
   (1+\beta) I -\eta \Lambda & -\beta I \\
   I & 0 \\
\end{array} \right].
\]
It is possible to permute the matrix $T_2$ to a block diagonal matrix with $2\times 2$ blocks of the form
\[
\left[ { {\begin{array}{*{20}{c}}
   1 + \beta-\eta\lambda_i & -\beta \\
   1 & 0 \\
\end{array}}
} \right].
\]
Therefore we have
\[
\begin{aligned}
\rho\left(\left[ { {\begin{array}{*{20}{c}}
   (1+\beta)I-\eta A & -\beta I \\
   I & 0 \\
\end{array}}
} \right]\right)=& \rho\left(\left[ { {\begin{array}{*{20}{c}}
   (1+\beta)I - \eta \Lambda & -\beta I \\
   I & 0 \\
\end{array}}
} \right]\right) \\
 =& \max_{i\in [d]}\rho\left(\left[ { {\begin{array}{*{20}{c}}
   1 + \beta-\eta\lambda_i & -\beta \\
   1 & 0 \\
\end{array}}
} \right]\right).
\end{aligned}
\]
For each $i\in [d]$, the eigenvalues of the $2\times 2$ block matrices are given by the roots of
\[
\lambda^2 - (1+ \beta - \eta \lambda_i)\lambda + \beta = 0.
\]
Given that $\beta \ge |1-\sqrt{\eta \lambda_i}|^2$, the roots of the above equation are imaginary and both have magnitude $\sqrt{\beta}$. Since $\beta = \max\{|1-\sqrt{\eta \mu}|^2, |1-\sqrt{\eta L}|^2\}$, the magnitude of each root is at most $\max\{|1-\sqrt{\eta \mu}|, |1-\sqrt{\eta L}|\}$. This proves the desired spectral radius bound.
\end{proof}

\subsection{Proof of Lemma~\ref{lemma:grad_bound}}

\begin{proof}
From the local sub-optimality condition we have
\[
\|\nabla P^{(t-1)}(\tilde w^{(t)} )\| = \|\nabla F_1(\tilde w^{(t)}) + \nabla F(w^{(t-1)}) -  \nabla F_1(w^{(t-1)}) + \gamma (\tilde w^{(t)} - w^{(t-1)})\| \le \varepsilon_t.
\]
Then we can show that
\[
\begin{aligned}
&\|\nabla F(\tilde w^{(t)})\| \\
=& \|\nabla F(\tilde w^{(t)})- \nabla P^{(t-1)}(\tilde w^{(t)} ) + \nabla P^{(t-1)}(\tilde w^{(t)} )\| \\
\le&\|\nabla F(\tilde w^{(t)}) - \nabla F_1(\tilde w^{(t)}) - \nabla F(w^{(t-1)}) +   \nabla F_1(w^{(t-1)}) - \gamma (\tilde w^{(t)} - w^{(t-1)})\| + \varepsilon_t \\
\le& \|(\nabla^2 (F - F_1)(w') + \gamma I)(\tilde w^{(t)} - w^{(t-1)}) \| + \varepsilon_t \\
\le& 2\gamma\|\tilde w^{(t)} - w^{(t-1)}\| + \varepsilon_t,
\end{aligned}
\]
where in the last inequality we have used $\sup_{w}\|\nabla^2 F(w) - \nabla^2 F_1(w)\|\le \gamma$. This proves the first inequality. The second inequality follows readily from the strong convexity of $F$ such that $\mu\|\tilde w^{(t)} - w^*\|\le\|\nabla F(\tilde w^{(t)}) - \nabla F(w^*)\|=\|\nabla F(\tilde w^{(t)})\|$.
\end{proof}

\section{Computational complexity of DANE-HB}\label{append:computation_cost_dane_hb}

In addition to communication complexity, here we further provide a computational complexity analysis for DANE-HB in order to gain better understanding of its overall computational efficiency. We first restrict our attention to the quadratic setting in which the global convergence of DANE-HB is guaranteed. At each communication round $t$, the master machine needs to solve the local subproblem $\tilde w^{(t)} \approx \argmin_w  P^{(t-1)}(w)$ to certain desired precision. Inspired by Federated SVRG~\citep{konevcny2016federated} which essentially applies SVRG~\citep{johnson2013accelerating} to the local optimization of \InexactDane, we specify that the local minimization of DANE-HB is implemented with the SVRG solver. Clearly such a specification of DANE-HB only needs to access the first-order information of the loss functions. Following~\citep{johnson2013accelerating,zhang2017stochastic}, we employ the incremental first order oracle (IFO) complexity as the computational complexity metric for solving the finite-sum minimization problem~\eqref{eqn:general}.
\begin{definition}\label{def:IFO}
 An IFO takes an index $i \in [N]$ and a point $(x_i,y_i) \in \{x_j,y_j\}_{j=1}^N$, and returns the pair $(f(w; x_i,y_i),\nabla f(w; x_i,y_i))$.
\end{definition}

As a consequence of Corollary~\ref{corol:quadratic_hb}, the following result summaries the computational complexity of DANE-HB in the considered setting.

\begin{restatable}[Computational complexity of DANE-HB for quadratic objective]{corollary}{DANEHBComputationcost}\label{corol:quadratic_hb_ifo}
Assume the conditions in Corollary~\ref{corol:quadratic_hb} hold and the local subproblems are solved using SVRG. Then with high probability, the IFO complexity of DANE-HB for attaining estimation error $\|w^{(t)} - w^*\| \le \epsilon$ is of the order
\[
\mathcal{O} \left(\sqrt{\kappa} \left(n^{3/4} + n^{1/4}\right) \log^2 \left(\frac{1}{\epsilon}\right) + \sqrt{\kappa} n^{3/4} \log \left(\frac{1}{\epsilon}\right)\right).
\]
\end{restatable}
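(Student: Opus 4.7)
\textbf{Proof proposal for Corollary~\ref{corol:quadratic_hb_ifo}.}

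The plan is to multiply the outer-loop communication complexity from Corollary~\ref{corol:quadratic_hb} by the per-round IFO cost incurred on the master machine. Concretely, by Corollary~\ref{corol:quadratic_hb}, with high probability DANE-HB achieves $\|w^{(T)}-w^*\|\le\epsilon$ after
\[
T=\tilde{\mathcal{O}}\!\left(\frac{\sqrt{\kappa}}{n^{1/4}}\log\!\left(\frac{1}{\epsilon}\right)\right)
\]
communication rounds, under the schedule $\gamma=L\sqrt{32\log(p/\delta)/n}$ and $\varepsilon_t\lesssim \frac{(\mu+\gamma)\|\nabla F(w^{(0)})\|}{L(t+1)^2}\left(1-\tfrac{1}{2}\sqrt{\mu/(\mu+2\gamma)}\right)^{t+1}$. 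At each round $t$, the master has to (i) compute its local gradient $\nabla F_1(w^{(t-1)})$ (the workers do likewise in parallel, contributing the same order), costing $\mathcal{O}(n)$ IFO, and (ii) solve the local subproblem $P^{(t-1)}(w)$ to precision $\varepsilon_t$ via SVRG.

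Next I would analyze the per-round SVRG cost. The local objective $P^{(t-1)}$ is a quadratic in $n$ summands that is $(L+\gamma)$-smooth and $(\mu+\gamma)$-strongly convex, so its condition number is at most $L/\gamma=\tilde{\mathcal{O}}(\sqrt{n})$ under the chosen $\gamma$. The classical SVRG analysis~\citep{johnson2013accelerating} therefore gives an IFO complexity of
\[
\mathcal{O}\!\left(\left(n+\frac{L+\gamma}{\mu+\gamma}\right)\log\!\left(\frac{1}{\varepsilon_t}\right)\right)
=\mathcal{O}\!\left((n+\sqrt{n})\log\!\left(\frac{1}{\varepsilon_t}\right)\right)
\]
per communication round. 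The next step is to show $\log(1/\varepsilon_t)=\mathcal{O}(\log(1/\epsilon))$ uniformly over $t\le T$. From the explicit form of $\varepsilon_t$ one has $\log(1/\varepsilon_t)=\Theta\!\bigl(t\sqrt{\mu/(\mu+2\gamma)}+\log t\bigr)$ up to additive constants, and substituting $t\le T=\mathcal{O}(\sqrt{(\mu+2\gamma)/\mu}\log(1/\epsilon))$ (Theorem~\ref{thrm:quadratic_hb}) gives the desired uniform bound.

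Finally I would aggregate: summing cost (i) over the $T$ rounds contributes $\mathcal{O}(Tn)=\mathcal{O}(\sqrt{\kappa}\,n^{3/4}\log(1/\epsilon))$, and summing (ii) contributes $\mathcal{O}(T(n+\sqrt{n})\log(1/\epsilon))=\mathcal{O}(\sqrt{\kappa}(n^{3/4}+n^{1/4})\log^2(1/\epsilon))$. Adding the two terms yields the claimed bound. The main obstacle I anticipate is the precision bookkeeping: one must verify that the geometrically decaying $\varepsilon_t$ schedule from Theorem~\ref{thrm:quadratic_hb} interacts cleanly with SVRG's logarithmic dependence on target accuracy so that the cumulative $\sum_t \log(1/\varepsilon_t)$ is indeed $\mathcal{O}(T\log(1/\epsilon))$ rather than $\mathcal{O}(T^2)$; this is what allows the second term to escape with only a $\log^2(1/\epsilon)$ factor instead of higher polylogarithmic overhead.
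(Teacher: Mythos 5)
Your proposal is correct and follows essentially the same route as the paper's proof: bound the per-round SVRG cost by $\mathcal{O}\bigl((n+\tfrac{L+\gamma}{\mu+\gamma})\log(1/\varepsilon_t)\bigr)=\mathcal{O}\bigl((n+\sqrt{n})\log(1/\epsilon)\bigr)$ using $\gamma=L\sqrt{32\log(p/\delta)/n}$, multiply by the $\mathcal{O}(\sqrt{\kappa}\,n^{-1/4}\log(1/\epsilon))$ outer-loop round count from Corollary~\ref{corol:quadratic_hb}, and add the $\mathcal{O}(n)$ batch-gradient cost per round. Your explicit verification that $\log(1/\varepsilon_t)=\mathcal{O}(\log(1/\epsilon))$ uniformly over $t\le T$ under the geometric schedule of Theorem~\ref{thrm:quadratic_hb} is a detail the paper asserts without elaboration, so your bookkeeping is, if anything, slightly more careful.
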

\begin{proof}
Recollect that $\gamma=L\sqrt{\frac{32\log(p/\delta)}{n}}$ in Corollary~\ref{corol:quadratic_hb}. It is standard to know that the IFO complexity of the inner-loop SVRG computation can be bounded with high probability by
\[
\mathcal{O}\left(\left(n + \frac{L+\gamma}{\gamma+\mu}\right)\log \left(\frac{1}{\varepsilon}\right)\right) = \mathcal{O}\left(\left(n + \sqrt{n}\right)\log \left(\frac{1}{\epsilon}\right)\right).
\]
From Corollary~\ref{corol:quadratic_hb} we know that with high probability, the outer-loop communication complexity is of the order
\[
\mathcal{O} \left(\frac{\sqrt{\kappa}}{n^{1/4}}\log \left(\frac{1}{\epsilon}\right)\right).
\]
For each communication round, each machine needs to compute the local batch gradient, which can be done in parallel. Combing the above inner-loop and outer-loop IFO bounds yields the following overall computation complexity bound
\[
\mathcal{O} \left(\sqrt{\kappa} \left(n^{3/4} + n^{1/4}\right)\log^2 \left(\frac{1}{\epsilon}\right) + \sqrt{\kappa} n^{3/4}\log \left(\frac{1}{\epsilon}\right)\right),
\]
which holds with high probability.
\end{proof}
For an instance, let us consider the conventional statistical learning setting where the condition number $\kappa$ is as large as $\mathcal{O}(\sqrt{N})=\mathcal{O}(\sqrt{mn})$. In this case, the above result implies that the IFO complexity bound of DANE-HB is
\[
\mathcal{O} \left(\left(m^{1/4}n + m^{1/4}n^{1/2}\right) \log^2 \left(\frac{1}{\epsilon}\right)+m^{1/4}n \log \left(\frac{1}{\epsilon}\right) \right).
\]
To comparison with SVRG, the expected IFO complexity bound of SVRG is given by
\[
\mathcal{O} \left(\left(mn + \sqrt{mn}\right)\log \left(\frac{1}{\epsilon}\right)\right).
\]
Since the sample size $mn$ dominates the condition number $\sqrt{mn}$ in this example, up to the logarithm factors, DANE-HB is roughly $\times m^{3/4}$ cheaper than SVRG in computational cost, which also matches the result established for MP-DANE~\citep{wang2017memory}

By combining Theorem~\ref{thrm:global_dane_hb_lm} and Corollary~\ref{corol:quadratic_hb_ifo}, we can readily establish the following result on the overall IFO complexity bound of DANE-HB-LM for linear models.
\begin{corollary}[Computation complexity of DANE-HB-LM]\label{corol:quadratic_hb_lm_ifo}
Assume the conditions in Corollary~\ref{corol:quadratic_hb} hold and the local subproblems are solved using SVRG. Then with high probability the IFO complexity of DANE-HB for the quadratic objective function is of the order
\[
\mathcal{\tilde O} \left(\frac{\ell\sqrt{\kappa}}{\sigma} \left(n^{3/4} + n^{1/4}\right)\log^3 \left(\frac{1}{\epsilon}\right) + \frac{\ell\sqrt{\kappa}}{\sigma} n^{3/4}\log^2 \left(\frac{1}{\epsilon}\right)\right).
\]
\end{corollary}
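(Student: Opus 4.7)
The plan is to bound the overall IFO complexity by composing the outer-loop iteration count of DANE-HB-LM (supplied by Theorem~\ref{thrm:global_dane_hb_lm}) with the per-subproblem inner IFO cost of DANE-HB (supplied by Corollary~\ref{corol:quadratic_hb_ifo}), and then to absorb the required subproblem sub-optimality $\varepsilon_t$ into a logarithmic factor in $\epsilon$. Since DANE-HB-LM is by construction a nested scheme in which the outer quadratic-approximation iteration wraps an inner DANE-HB call on the quadratic $Q^{(t-1)}(w)$, this composition is exactly the right structure.

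First, I would invoke Theorem~\ref{thrm:global_dane_hb_lm} to obtain the outer-loop iteration count of order $\mathcal{O}\bigl(\frac{\ell}{\sigma}\log(1/\epsilon)\bigr)$ and record the associated subproblem precision requirement $\varepsilon_t \le \frac{\sigma\mu}{4\ell L^2}\|\nabla F(w^{(t-1)})\|^2$. Using $\|\nabla F(w^{(t-1)})\| \le L\|w^{(t-1)}-w^*\|$ together with the linear convergence of the outer iterates established in that theorem, one shows that $\varepsilon_t$ is only polynomially small in $\epsilon$ uniformly over the outer trajectory, so $\log(1/\varepsilon_t) = \mathcal{O}(\log(1/\epsilon))$ up to constants depending on $\ell,\sigma,L,\mu$ (hidden inside the $\mathcal{\tilde O}$ notation).

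Next, for each outer iteration I would apply Corollary~\ref{corol:quadratic_hb_ifo} to the quadratic subproblem $Q^{(t-1)}(w)$, whose Hessian $\frac{\ell}{N}XX^\top + \mu I$ has condition number of the same order as $\kappa=L/\mu$ (since $\nabla^2 \tilde F \preceq \frac{\ell}{N}XX^\top \preceq \frac{\ell}{\sigma}\nabla^2 \tilde F$ and $F$ is $L$-smooth and $\mu$-strongly convex). This yields a per-outer-iteration IFO cost of order
\[
\mathcal{\tilde O}\bigl(\sqrt{\kappa}(n^{3/4}+n^{1/4})\log^2(1/\varepsilon_t) + \sqrt{\kappa}\,n^{3/4}\log(1/\varepsilon_t)\bigr),
\]
which, upon substituting $\log(1/\varepsilon_t)=\mathcal{O}(\log(1/\epsilon))$, becomes $\mathcal{\tilde O}\bigl(\sqrt{\kappa}(n^{3/4}+n^{1/4})\log^2(1/\epsilon) + \sqrt{\kappa}\,n^{3/4}\log(1/\epsilon)\bigr)$.

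Finally, multiplying this per-outer-iteration IFO bound by the outer count $\mathcal{O}\bigl(\frac{\ell}{\sigma}\log(1/\epsilon)\bigr)$ delivers precisely the stated bound. The main (mild) obstacle is bookkeeping: I must verify that the $\kappa$ appearing inside the inner invocation of Corollary~\ref{corol:quadratic_hb_ifo} is genuinely of the same order as the global $L/\mu$, and ensure that the high-probability event from Lemma~\ref{lemma:hessian_close} controlling $\|H_1-H\|\le\gamma$ can be guaranteed uniformly over all outer iterations; since the number of outer rounds is logarithmic in $1/\epsilon$, a union bound with $\delta\to \delta/\text{polylog}$ preserves the $\mathcal{\tilde O}$ factors and concludes the proof.
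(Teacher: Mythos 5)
Your proposal is correct and follows essentially the same route as the paper, which obtains this corollary simply by composing the outer-loop iteration count of Theorem~\ref{thrm:global_dane_hb_lm} with the per-subproblem IFO bound of Corollary~\ref{corol:quadratic_hb_ifo}, using the fact (already noted in the proof of Theorem~\ref{thrm:global_dane_hb_lm}) that $\log(1/\varepsilon_t)=\mathcal{O}(\log(1/\epsilon))$. The extra bookkeeping you supply on the conditioning of the quadratic surrogate and the union bound over outer rounds is consistent with, and somewhat more careful than, what the paper spells out.
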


\bibliography{mybib}
\bibliographystyle{jmlr2e}

\end{document}